\documentclass{article}

\usepackage{microtype}
\usepackage{graphicx}
\usepackage{subcaption}
\usepackage{booktabs} 
\usepackage{comment}
\usepackage{algorithmic}
\usepackage{enumitem}

\usepackage{hyperref}

\usepackage[accepted]{icml2026}

\usepackage{amsmath}
\usepackage{amssymb}
\usepackage{mathtools}
\usepackage{amsthm}
\usepackage[capitalize,noabbrev]{cleveref}

\theoremstyle{plain}
\newtheorem{theorem}{Theorem}[section]
\newtheorem{proposition}[theorem]{Proposition}
\newtheorem{lemma}[theorem]{Lemma}

\theoremstyle{definition}
\newtheorem{definition}[theorem]{Definition}

\theoremstyle{remark}

\usepackage[textsize=tiny]{todonotes}

\icmltitlerunning{Global Policy-Space Response Oracles for Two-Player Zero-Sum Games}

\begin{document}

\twocolumn[
  \icmltitle{Global Policy-Space Response Oracles for Two-Player Zero-Sum Games}




  \begin{icmlauthorlist}
    \icmlauthor{Junyu Zhang}{tsinghua-ee}
    \icmlauthor{Feihong Yang}{qiyuan}
    \icmlauthor{Jian Wang}{tsinghua-ee}
    \icmlauthor{Chao Wang}{qiyuan}
    \icmlauthor{Xudong Zhang}{tsinghua-ee}
  \end{icmlauthorlist}

  \icmlaffiliation{tsinghua-ee}{Department of Electronic Engineering, Tsinghua University, Beijing, China}
  \icmlaffiliation{qiyuan}{Qiyuan Lab, Beijing, China}

  \icmlcorrespondingauthor{Chao Wang}{wangchao1@qiyuanlab.com}

  \icmlkeywords{Reinforcement learning, Nash equilibrium, Policy-Space Response Oracles}

  \vskip 0.3in
]



\printAffiliationsAndNotice{}  

\begin{abstract}
The Policy-Space Response Oracles (PSRO) framework scales equilibrium computation to large zero-sum games by iteratively expanding a restricted strategy set using deep reinforcement learning (DRL). A central challenge is to construct, under limited computational budgets, a small strategy population whose induced game well approximates the full game. Existing PSRO variants typically expand the population using best responses to meta-strategies computed from restricted-game payoffs, which can lead to inefficient expansions that provide limited global improvement.
We propose to guide population expansion by directly evaluating the post-expansion population quality. Specifically, we adopt Population Exploitability (PE) to measure how well a restricted strategy set represents the full game, and introduce a two-phase exploration--selection framework that explicitly minimizes PE during expansion. We instantiate this framework as Global PSRO, a practical DRL-based algorithm that efficiently generates candidate responses and estimates PE via parameter-sharing conditional neural networks. Experiments across multiple two-player zero-sum games show that Global PSRO achieves lower exploitability and approximates Nash equilibria with significantly fewer policy iterations than prior PSRO methods. The experimental code is available at \url{https://github.com/Zhangjy1997/GlobalPSRO}.
\end{abstract}

\section{Introduction}

Computing equilibria in large-scale games is challenging due to enormous strategy spaces and the intractability of exhaustive analysis \cite{traditional_game_1, traditional_game_2}. The Policy-Space Response Oracle (PSRO) framework improves scalability by iteratively solving restricted games over a growing strategy population, where best responses (BR) are often learned via deep reinforcement learning (DRL) \cite{PSRO_raw}.

A central difficulty in PSRO lies in \emph{strategy population construction}: how to assemble, under limited computational budgets, a small set of strategies whose induced restricted game faithfully represents the strategic structure of the full game. In standard PSRO, each iteration adds a BR to a meta-strategy computed by a meta-strategy solver (MSS), typically the Nash equilibrium (NE) of the current restricted game. Since the restricted game is only a local approximation, a BR to its equilibrium can be locally strong yet contribute little to improving the approximation quality of the full game, leading to inefficient expansion. Many PSRO variants modify the MSS to encourage exploration or reduce overfitting \cite{PSRO_raw, MSS-1, MSS-2, MSS-3, MSS-4}, but they still rely primarily on restricted-game payoffs and do not explicitly evaluate whether an expansion improves full-game approximation; in the worst case, such restricted-game-based expansions may require adding essentially the entire strategy space to converge (Sec.~\ref{Sec:RGB-MSS}). Methods that incorporate extra global information, such as Anytime PSRO \cite{Anytime-PSRO}, aim to find meta-strategies that are hard for full-game opponents to exploit, yet still expand by responding to a single current meta-strategy without directly assessing the population after expansion.

We instead make expansion decisions by explicitly optimizing population quality \emph{after} adding a candidate strategy. This is directly aligned with the primary goal of PSRO-style methods, namely constructing a small population whose induced game approximates a full game equilibrium. We measure population quality with \emph{Population Exploitability} (PE), the minimum exploitability achievable by mixtures supported on the population against full-game opponents. This leads to a two-phase \emph{exploration--selection} framework. In the exploration phase, we generate a batch of candidate responses by training against multiple meta-strategies given by different kinds of MSSs. 
In the selection phase, we evaluate the PE of each \emph{post-expansion} population obtained by hypothetically adding a candidate strategy, and select the candidate that minimizes this post-expansion PE. This directly aligns the expansion with improving population-level equilibrium approximation. Furthermore, the PE estimation procedure naturally yields a BR to the selected population mixture in the full game. We also add the BR to the population, which corresponds to executing one standard Anytime-PSRO expansion alongside the selected candidate. Overall, two strategies are obtained in the selection phase to expand the restricted game.

We instantiate this framework as \textbf{Global PSRO}, a practical DRL-based algorithm for large-scale two-player zero-sum games. To make multi-candidate training and PE-based selection feasible, we use parameter sharing with conditional policy models for candidate generation and PE estimation, and adopt a regularized PE-based score to make selection robust to PE estimation errors. Experiments across several benchmark games show that Global PSRO reaches substantially better equilibrium approximations than existing PSRO variants under matched interaction budgets.

Our contributions can be summarized as follows:
\begin{itemize}
    \item We analyze existing PSRO meta-strategy solvers from the perspective of global information utilization, and show that relying solely on restricted-game information can lead to highly inefficient population expansion.
    \item We formulate strategy population construction as an explicit optimization problem that is directly aligned with the primary objective of PSRO, namely minimizing the PE of the \emph{expanded} population after each addition, and propose an exploration--selection framework that ranks candidates by their estimated post-expansion PE.
    \item We introduce Global PSRO, a scalable DRL-based instantiation with parameter-shared candidate generation and regularized PE estimation for practical multi-candidate selection, and empirically demonstrate superior iteration and sample efficiency across a range of two-player zero-sum games.
\end{itemize}

\section{Preliminaries}

\subsection{Games}

We consider finite $n$-player normal-form games. A game is defined by a tuple $\mathcal{G} = (\mathcal{N}, \{\Pi_i\}_{i=1}^n, \{U_i\}_{i=1}^n)$, where $\mathcal{N}=\{1,\dots,n\}$ denotes the set of players, $\Pi_i$ is the pure strategy set of player $i$, and $U_i : \Pi_1 \times \cdots \times \Pi_n \rightarrow \mathbb{R}$ is the payoff function. We focus on two-player zero-sum games, where $U_1 = -U_2$. A two-player game is symmetric if $\Pi_1=\Pi_2=\Pi$ and the payoff functions are identical up to player exchange, i.e., $U_1(\pi_i,\pi_j)=U_2(\pi_j,\pi_i)$ for all $\pi_i,\pi_j\in\Pi$.

A mixed strategy $\sigma_i \in \Delta(\Pi_i)$ is a probability distribution over $\Pi_i$, and $\sigma = (\sigma_1,\sigma_2)$ denotes a joint mixed strategy. For $\pi_i\in\Pi_i$, we write $\sigma_i(\pi_i)$ for its probability and $\operatorname{supp}(\sigma_i)=\{\pi_i\in\Pi_i:\sigma_i(\pi_i)>0\}$ for the support. The expected payoff to player $i$ under $\sigma$ is denoted by $U_i(\sigma)$. Given an opponent strategy $\sigma_{-i}$, a (pure) BR of player $i$ is defined as
\begin{equation}
\mathcal{B}_i(\sigma_{-i}) = \arg\max_{\pi_i \in \Pi_i} U_i(\pi_i, \sigma_{-i}).
\end{equation}

\subsection{Restricted Games and Meta-Games}

PSRO operates by maintaining, for each player $i$, a restricted strategy set $\Pi_i^r \subseteq \Pi_i$. The Cartesian product $\Pi^r = \Pi_1^r \times \Pi_2^r$ induces a \emph{restricted game}, whose payoff matrix is defined by evaluating $U_i$ on strategy profiles in $\Pi^r$.

Solving the restricted game yields a \emph{meta-strategy} $\sigma \in \Delta(\Pi^r)$, typically computed by a MSS such as a NE solver. This restricted game, together with its meta-strategy, is often referred to as the \emph{meta-game}. Importantly, the meta-game is only an approximation of the full game, and its quality depends entirely on the choice of the restricted strategy set $\Pi^r$.

In each iteration of PSRO, new strategies are generated by computing (approximate) BRs to the current meta-strategy, commonly using DRL. These strategies are then added to $\Pi^r$, and the meta-game is recomputed.

\subsection{Exploitability and Population Exploitability}
\label{subsec:pe}

The exploitability of a joint mixed strategy $\sigma$ is defined as
\begin{equation}
\epsilon(\sigma) = \frac{1}{|\mathcal{N}|} \sum_{i \in \mathcal{N}} 
\max_{\pi_i \in \Pi_i} \big[ U_i(\pi_i, \sigma_{-i}) - U_i(\sigma) \big].
\end{equation}
A strategy $\sigma^\star$ is a NE if and only if $\epsilon(\sigma^\star)=0$.

Exploitability is a property of an individual mixed strategy. In contrast, the primary goal of PSRO is to construct \emph{restricted strategy populations} that approximate the full game well. To measure the approximation quality of a restricted strategy set $\Pi^r$ as an approximation of the full game, a commonly used population-level quantity is PE, defined as
\begin{equation}
\label{eq:pe_def}
\mathcal{PE}(\Pi^r; \mathcal{G}) = \min_{\sigma \in \Delta(\Pi^r)} \epsilon(\sigma).
\end{equation}
PE measures the smallest exploitability attainable by any mixed strategy whose support is restricted to $\Pi^r$, and thus quantifies the approximation quality of the meta-game induced by the strategy population.

\section{Revisiting Existing PSRO}
\label{sec_revisiting}

The effectiveness of PSRO critically depends on how new strategies are selected and added to the restricted population. This choice is governed by the MSS, which determines the mixed strategy to which response oracles are trained. In this section, we revisit existing PSRO variants through a unifying lens: \emph{how much payoff information from the full game is used when selecting population expansions}. This perspective reveals fundamental limitations of prior approaches and motivates the need for explicitly evaluating population-level improvements.

\subsection{Restricted-Game-Based PSRO}
\label{Sec:RGB-MSS}

Many PSRO instantiations compute the meta-strategy using only the payoff matrix of the current restricted game.
Representative MSSs include restricted-game NE~\cite{DO}, projected replicator dynamics (PRD)~\cite{PSRO_raw}, AlphaRank~\cite{alpha-PSRO}, and Uniform~\cite{NFSP}. We refer to this class as \emph{Restricted-Game-Based MSSs (RGB-MSSs)}.

Formally, an RGB-MSS maps the restricted-game payoff matrix to a mixed strategy, without accessing payoff information involving strategies outside the current population.

\begin{definition}
\label{MSS_definition}
In a two-player symmetric zero-sum game, an RGB-MSS $\mathcal{M}$ is a family of functions
\begin{equation}
    \mathcal{M} = \{f_{d} \mid f_{d} : \mathbb{R}^{d \times d} \rightarrow \Delta^{d-1}\}_{d=1}^{\infty},
\end{equation}
where $\Delta^{d-1} = \{ \mathbf{x} \in \mathbb{R}^{d} \mid \mathbf{x} \ge 0, \sum_i x_i = 1 \}$ denotes the probability simplex.
\end{definition}

To study the effect of the MSS under an idealized oracle, we assume access to an exact BRO. We denote the resulting PSRO procedure by $\mathcal{D}(\mathcal{M}, \mathcal{G}, \pi_0)$, where $\mathcal{M}$ is the MSS, $\mathcal{G}$ the underlying game, and $\pi_0$ the initial strategy. PSRO is said to achieve a NE when the PE of the restricted strategy set reaches zero.

Although RGB-MSSs are computationally appealing, their reliance on restricted-game payoffs alone can fundamentally limit progress toward a full-game equilibrium. Intuitively, since strategies outside the current population are ignored when computing the meta-strategy, the resulting BRs may appear effective locally while contributing little to reducing global exploitability. The following result formalizes this limitation.

\begin{theorem}[Worst-case behavior in the PSRO with RGB-MSS]
\label{worst_game}
Given any RGB-MSS $\mathcal{M}$, initial strategy $\pi_0$, and $N\in\mathbb{N}$, there exists an $N\times N$ two-player symmetric zero-sum game $\mathcal{G}$ such that at least one of the following holds: (i) the PSRO procedure $\mathcal{D}(\mathcal{M}, \mathcal{G}, \pi_0)$ fails to converge to a NE; or (ii) PSRO reaches a NE only after incorporating all pure strategies into the restricted population, i.e., after at least $N-1$ iterations.
\end{theorem}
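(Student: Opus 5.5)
We will build the game adaptively, as an adversary that simulates the fixed solver $\mathcal{M}$. The key fact is that an RGB-MSS sees only the restricted payoff submatrix. So once the payoffs among the strategies already added are fixed, the meta-strategy is determined no matter how the remaining entries are filled in. We will therefore fix the entries of the $N\times N$ antisymmetric matrix $A$ (with $U_1(\pi_i,\pi_j)=A_{ij}$ and $A=-A^\top$) one row/column block at a time, labelling strategies $\pi_0,\pi_1,\dots,\pi_{N-1}$ in the order PSRO will add them.

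\textbf{Inductive construction.} Suppose after $k$ iterations the population is $\{\pi_0,\dots,\pi_{k}\}$ with known $(k+1)\times(k+1)$ payoff block $A_k$. Let $\sigma_k=f_{k+1}(A_k)$ be the meta-strategy; it is fully determined. We then choose payoffs of the next strategy $\pi_{k+1}$ against the current population so that the following hold.
\begin{itemize}
\item $A_{k+1,j}=1$ for every $j\le k$ (or at least for every $j\in\operatorname{supp}(\sigma_k)$). Then $\pi_{k+1}$ strictly beats $\sigma_k$ with payoff $1$.
\item Every not-yet-added strategy $\pi_m$, $m>k+1$, earns at most $1$ against $\sigma_k$. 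Ties are broken toward $\pi_{k+1}$; alternatively, use values like $1+\delta_{k}$ decreasing in the index so the BR is unique.
\end{itemize}
A simple concrete choice is the ``later beats earlier'' tournament: $A_{ij}=1$ if $i>j$ and $-1$ if $i<j$. This makes the exact BR to any $\sigma_k$ supported on $\{\pi_0,\dots,\pi_k\}$ be $\pi_{k+1}$ or later. To force exactly $\pi_{k+1}$, we will use graded values $A_{ij}=c_{i}$ for $i>j$ with $c_{k+1}>c_{k+2}>\dots$ chosen after seeing $\sigma_k$. Equivalently, it suffices to make $\pi_{k+1}$ the unique maximizer, which graded constants achieve regardless of $\sigma_k$. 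In this design, the payoffs never depended on $\mathcal{M}$ at all; the adaptivity is only needed to break ties or to handle a solver that already includes a new strategy.

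\textbf{NE analysis.} With $A_{ij}>0$ for all $i>j$, the last strategy $\pi_{N-1}$ strictly beats every other pure strategy, so it is the unique NE (pure, value $0$). Hence any population not containing $\pi_{N-1}$ has positive PE: every mixture $\sigma$ over $\{\pi_0,\dots,\pi_{k}\}$ with $k<N-1$ is exploited by $\pi_{N-1}$ with payoff $\ge\min_i c_i>0$. Since PSRO adds $\pi_{1},\pi_{2},\dots$ in order, $\pi_{N-1}$ enters only at iteration $N-1$, when all pure strategies are in the population. This gives (ii). Option (i) covers degenerate cases, for example when the BR returned is already in the population, so PSRO stalls with positive PE.

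\textbf{Main obstacle.} The difficulty is making the BR exactly the intended next strategy under arbitrary tie-breaking by the exact BRO, and handling solvers whose $\sigma_k$ is arbitrary. I would resolve this by strict gradations, e.g.\ $A_{ij}=1+\frac{N-i}{N^2}$ for $i>j$, checking that $\pi_{k+1}$ is the unique maximizer of $\sum_j\sigma_k(\pi_j)A_{mj}$ over $m>k$ because all entries against the support are constant in $j$. The remaining step is to verify antisymmetry; the diagonal entries are zero.
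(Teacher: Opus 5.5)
Your argument is correct, and it takes a genuinely different route from the paper.

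\textbf{The paper's route.} The paper builds the game adaptively by simulating $\mathcal{M}$. Whenever the best existing payoff $e_k$ against the current meta-strategy is below the protected threshold $\epsilon_k$, a Farkas-lemma feasibility argument produces a payoff vector for a new strategy. That strategy is positive against every existing strategy, a strict BR to the current $\mathcal{M}$-mixture, and strictly not a BR to any earlier mixture. Otherwise an existing BR is reused. A run that stops extending is routed to alternative (i), after the matrix is completed with a surrogate solver.

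\textbf{Your route.} You fix, independently of $\mathcal{M}$, the transitive tournament $A_{ij}=c_i$ for $i>j$, with $c_i>0$ strictly decreasing. Everything rests on one invariant. An outside strategy $\pi_m$ earns exactly $c_m$ against every mixture on a prefix $\{\pi_0,\dots,\pi_k\}$. Hence $\pi_{k+1}$ is the only outside strategy that can ever be a BR, the set of distinct strategies in the population is always a prefix, and the unique equilibrium $\pi_{N-1}$ can enter only after all the others.

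\textbf{What each buys.} Your route is elementary: it needs no LP feasibility and no simulation of $\mathcal{M}$. What it gives up is specificity. In your game \emph{every} BR-based expansion must add $\pi_1,\pi_2,\dots$ in order, whether the mixture comes from a restricted-game solver, a semi-restricted solver, or the paper's own exploration--selection framework. So the instance is hard for all solvers. It therefore cannot serve as the base of Theorem~\ref{better_MSS} or Theorem~\ref{thm:exp_iter_bound}, where the same game must be slow for $\mathcal{M}$ yet admit a fast alternative. The paper's constraint-based construction exists so that further strict linear constraints (the ordering block $B_{i^*}$, the shortcut vectors $q_j$) can be appended later. Your proof also shows, incidentally, that the disjunction in Theorem~\ref{worst_game} says nothing specific about RGB-MSSs.

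\textbf{Cleanup for the write-up.} Drop the intermediate claims that are false as stated, and state the prefix invariant explicitly instead.
With the ungraded tournament, the BR need not be ``$\pi_{k+1}$ or later.'' The graded constants make $\pi_{k+1}$ the unique maximizer only among $\pi_m$ with $m>k$, not over all of $\Pi$: for $\sigma_k=\pi_0$ and $k\ge 1$, the existing $\pi_1$ earns $c_1>c_{k+1}$.
Returning an existing BR also does not make PSRO stall. The duplicate enlarges the meta-game, and $f_{d+1}$ may output a different mixture.
Neither point affects your conclusion. By the prefix invariant, $\pi_{N-1}$ enters (if ever) no earlier than iteration $N-1$, and until then PE is at least $c_{N-1}>0$.
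Finally, no adaptivity (``chosen after seeing $\sigma_k$'') is needed.
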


The proof is provided in Appendix~\ref{Apx:worst_game}. This result shows that inefficient population expansion is not an artifact of a particular algorithm, but a structural limitation shared by all RGB-MSSs.

Importantly, such worst-case behavior is not inherent to the games themselves. For some instances where an RGB-MSS exhibits arbitrarily slow progress, an alternative MSS can reach equilibrium in only a few iterations. We have the following theorem.

\begin{theorem}
\label{better_MSS}
Fix $N\in\mathbb{N}$, an odd integer $S$\footnote{In the generic non-degenerate case, symmetric zero-sum games admit only equilibrium supports of odd size~\cite{Odd_support}; therefore we consider odd $S$ here. For general two-player zero-sum games, the theorem can be extended to any positive support size.} with $1\le S\le N$, an RGB-MSS $\mathcal{M}$, and an initial strategy $\pi_0$.
There exists an $N\times N$ two-player symmetric zero-sum game $G$ that admits a NE $(\sigma^\star,\sigma^\star)$ with $|\operatorname{supp}(\sigma^\star)|=S$, on which $\mathcal{D}(\mathcal{M},G,\pi_0)$ exhibits the worst-case behavior in Theorem~\ref{worst_game}; yet there exists an alternative MSS $\mathcal{M}'$ such that $\mathcal{D}(\mathcal{M}',G,\pi_0)$ reaches a NE within $\min\{S+2,N-1\}$ iterations.
\end{theorem}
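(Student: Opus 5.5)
We will construct $G$ adaptively against $\mathcal{M}$, in the same spirit as the construction behind Theorem~\ref{worst_game}. The new ingredient is that the equilibrium support has size $S$. The alternative solver $\mathcal{M}'$ will use full-game information, for instance an Anytime-PSRO-style MSS or an oracle MSS that outputs a meta-strategy whose best response lies in $\operatorname{supp}(\sigma^\star)$. Since the statement only asks that some $\mathcal{M}'$ exist, we may tailor $\mathcal{M}'$ to $G$.

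\textbf{Step 1: equilibrium block.} Label the pure strategies $\pi_0,\pi_1,\dots,\pi_{N-1}$. Reserve a set $T$ of $S$ strategies to carry $\sigma^\star$. On $T$, place a generalized rock--paper--scissors skew-symmetric tournament: a cyclic tournament on $S$ nodes, which exists because $S$ is odd. Its unique symmetric NE is the uniform distribution on $T$, with value $0$. Every strategy outside $T$ will be made strictly beaten by the uniform mixture on $T$. Then $(\sigma^\star,\sigma^\star)$ with $\sigma^\star$ uniform on $T$ is a NE of $G$ with support size exactly $S$, and any population containing $T$ has PE zero.

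\textbf{Step 2: worst-case embedding against $\mathcal{M}$.} Simulate $\mathcal{D}(\mathcal{M},G,\pi_0)$ and fill in payoffs lazily, as in the proof of Theorem~\ref{worst_game}. At iteration $k$ the restricted payoff matrix $A_k$ is already fixed, so $\sigma_k=f_k(A_k)$ is determined. We then define the payoffs of a fresh, not-yet-added strategy $\pi_{k}$ against the current population so that $\pi_{k}$ is the unique exact best response to $\sigma_k$. We must also keep the PE of the current population strictly positive.

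The ordering is chosen so that strategies in $T$ enter only at the very end. If $S<N$, the reserved $T$ is the last $S$ strategies, possibly with $\pi_0\in T$ handled as an edge case. Each nonreserved fresh strategy is given payoff $+1$ against all current population members. To keep the eventual NE intact, it is given payoff $-1$ against enough members of $T$ that the uniform mixture on $T$ beats it. Because $T$ is not yet in the population, these entries are invisible to $\mathcal{M}$ and do not disturb the adaptivity.

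Under this construction, the RGB-MSS run either stalls, which is case (i), or must add every strategy before PE reaches zero, which is case (ii). The key point is that any population missing some element of $T$ has a strictly positive-gain deviation. This holds because a strict sub-tournament of a cyclic tournament has no mixture that is unbeaten by every member of $T$, provided the cyclic tournament is chosen to be "regular" so that every proper subset is dominated somewhere.

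\textbf{Step 3: the alternative MSS.} Define $\mathcal{M}'$ so that its best responses walk into $T$ quickly. From $\pi_0$, use at most two preliminary steps to reach $T$, since $\pi_0$ may lie outside $T$. After that, at each step choose a meta-strategy whose unique best response is a not-yet-included element of $T$. In the cyclic tournament, a pure strategy $t$ is best-responded by its successor, so the obvious choice is the meta-strategy concentrated on the last added element of $T$. Within the non-$T$ part, the payoff $-1$ entries are arranged so that the best response to a non-$T$ strategy lies in $T$.

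After at most $S+2$ iterations, $T$ is contained in the population and PE is zero. When $S=N$ the population is everything, giving the $\min\{S+2,N-1\}$ bound after accounting for $\pi_0$.

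\textbf{Main obstacle.} The main obstacle is reconciling Steps 2 and 3. The payoffs forced adaptively by $\mathcal{M}$, between fresh strategies and the current population, must not create new equilibria with smaller supports. They must also not spoil the $\mathcal{M}'$ path. The remedy is to choose payoff magnitudes carefully, for example with values $\pm1$ among non-$T$ strategies and $\pm2$ against $T$, so that $T$'s dominance is strict.
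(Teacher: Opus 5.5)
\textbf{There is a genuine gap, and it sits exactly at what you call the main obstacle.} That compatibility is the whole content of this theorem beyond Theorem~\ref{worst_game}, and your proposal gives no mechanism for it.

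The first PSRO iteration is forced for every MSS, since the only mixture on $\{\pi_0\}$ is pure $\pi_0$. So $\mathcal{M}'$ must expose the first member $t\in T$ as the unique full-game BR to some mixture $q$ over outside strategies that the $\mathcal{M}$-run also contains. At the same time, no member of $T$ may be a BR to any mixture $\sigma_k$ that the arbitrary $\mathcal{M}$ outputs before the outside strategies are exhausted.

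Your remark that $T$'s entries are ``invisible to $\mathcal{M}$'' misses that they are visible to the exact BR oracle at every $\sigma_k$. Write $v(\sigma)$ for the best payoff of an outside strategy against $\sigma$. You need $U(t,q)>v(q)$ and $U(t,\sigma_k)<v(\sigma_k)$ for all such $k$. Since $U(t,\cdot)$ is linear and $v$ is convex, this is infeasible whenever $q=\sum_k\lambda_k\sigma_k$ with $v(q)=\sum_k\lambda_k v(\sigma_k)$, in particular whenever $\mathcal{M}$ outputs $q$ itself.

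Choosing magnitudes does not fix this. If $T$-members get $+2$ against outside strategies, they beat the fresh strategy (payoff $1$) at every $\sigma_k$ and enter the $\mathcal{M}$-run at once. Otherwise, which $T$-member wins at $\sigma_k$ depends on $\sigma_k$, which you do not control.

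Step~3 even contradicts Step~2 outright. Under your $+1$ rule, the pure newest strategy is the unique restricted-game equilibrium strategy. So for $\mathcal{M}=$ restricted-game NE, every $\sigma_k$ is a pure outside strategy (pure $\pi_1$ already at iteration 2), yet you require BRs to outside strategies to lie in $T$.

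\textbf{The missing idea is the paper's shortcut vector.} Take $q_j\in\Delta(\Pi_3)$ on the core $\Pi_3=\{\pi_0,\pi_1,\pi_2\}$, chosen to minimize exploitability against the current effective population. Keep it strictly less exploitable than the threshold protecting the $\mathcal{M}$-path: $v_j<\epsilon_{j+1}$. This invariant is maintained in three ways:
\begin{itemize}
\item impose $q_{j-1}^\top u_j^e\le\frac{v_{j-1}+\epsilon_j}{2}-\bar\delta$ on every new outside strategy;
\item recompute $q_j$ when $\mathcal{M}$'s mixture lies on the core with $e_j=v_{j-1}$;
\item use the ordering block $B_{i^*}$ to exclude the equality case.
\end{itemize}
This strict gap is what makes the final linear system feasible: payoff at least $v_*+\delta'$ at $q_*$, and strictly below the current best-response value by $\delta'$ at every earlier path mixture.

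\textbf{Your $+1$ rule also breaks BR uniqueness.} Every later fresh strategy $\pi_{k'}$, $k'>k$, earns $1$ against each member present at time $k$. It therefore ties with $\pi_k$ at $\sigma_k$, and an exact oracle may return a later strategy and shortcut the $\mathcal{M}$-run. You need the margin rows $(p_t^e)^\top u_k^e\le r_t^{(k)}-\delta'$ from the proof of Theorem~\ref{worst_game}, which make each new strategy a strict non-BR to all earlier mixtures.

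\textbf{The successor walk is not well defined as stated.} In a regular tournament with $S\ge5$, each vertex is beaten by $(S-1)/2$ others with equal payoff. Outside strategies with $+1$ against $t$ also tie. So ``the BR to $t$ is its successor'' needs a weighted internal block and $T$ strictly beating every outside strategy.

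With those repairs, your walk inside $T$ is a plausible substitute for the paper's perturbation exposure $q_i=p+\delta h_i$ when $S>1$. It only helps once the first support strategy has been exposed, though, and that is exactly the step your proposal leaves open.
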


The proof is given in Appendix~\ref{Apx:better_MSS}. Figure~\ref{fig:ad_game} shows adversarially constructed games with different equilibrium support sizes: each instance is designed to slow down the MSS named in the subcaption, while the instance-specific solver $\mathcal{M}'$ (from Theorem~\ref{better_MSS}) reaches equilibrium within the corresponding iteration bound.

\begin{figure}[htbp]
    \centering
    \begin{subfigure}{0.23\textwidth}
        \includegraphics[width=\linewidth]{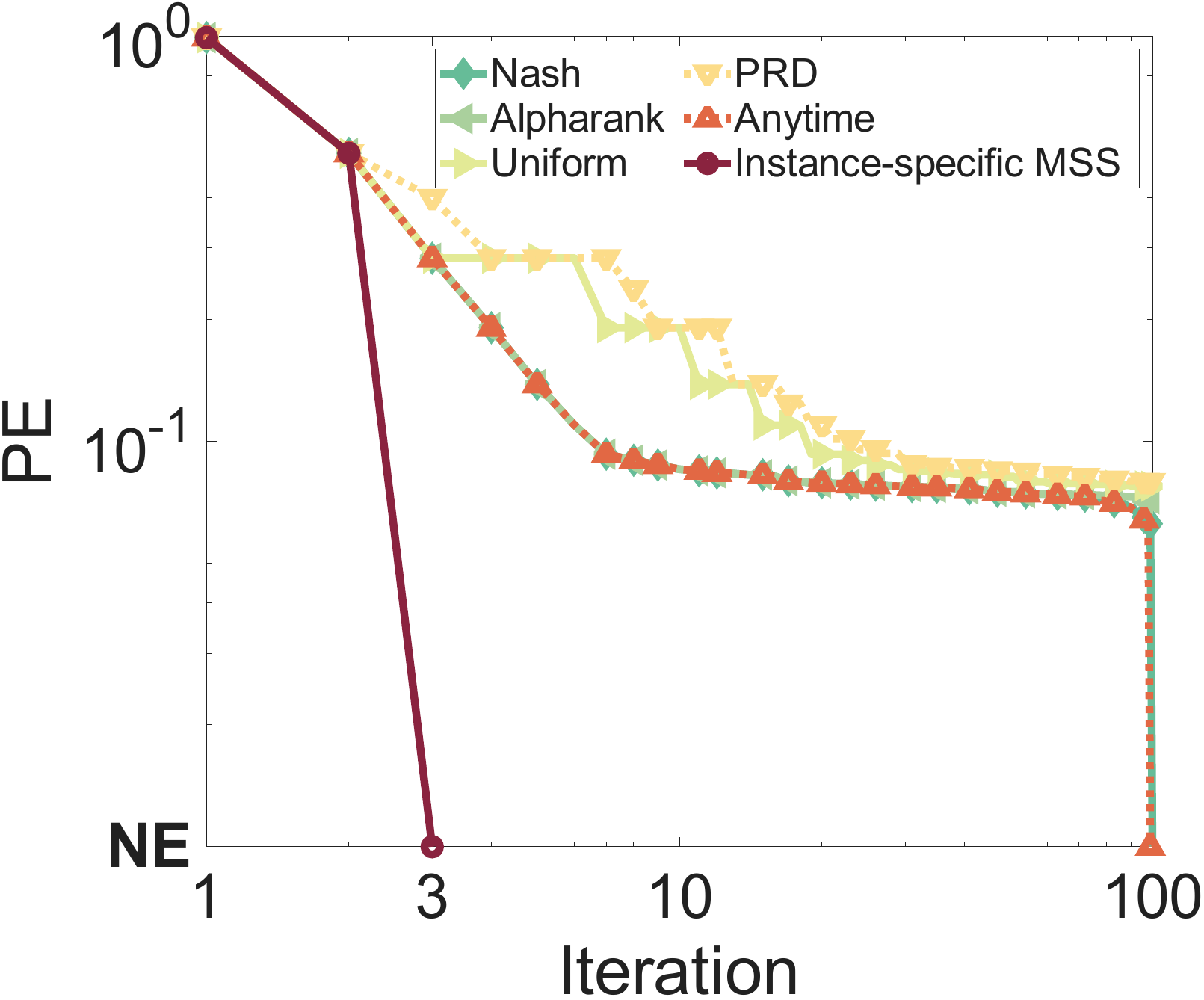}
        \caption{Restricted-game NE (S=1)}
    \end{subfigure}
    \begin{subfigure}{0.23\textwidth}
        \includegraphics[width=\linewidth]{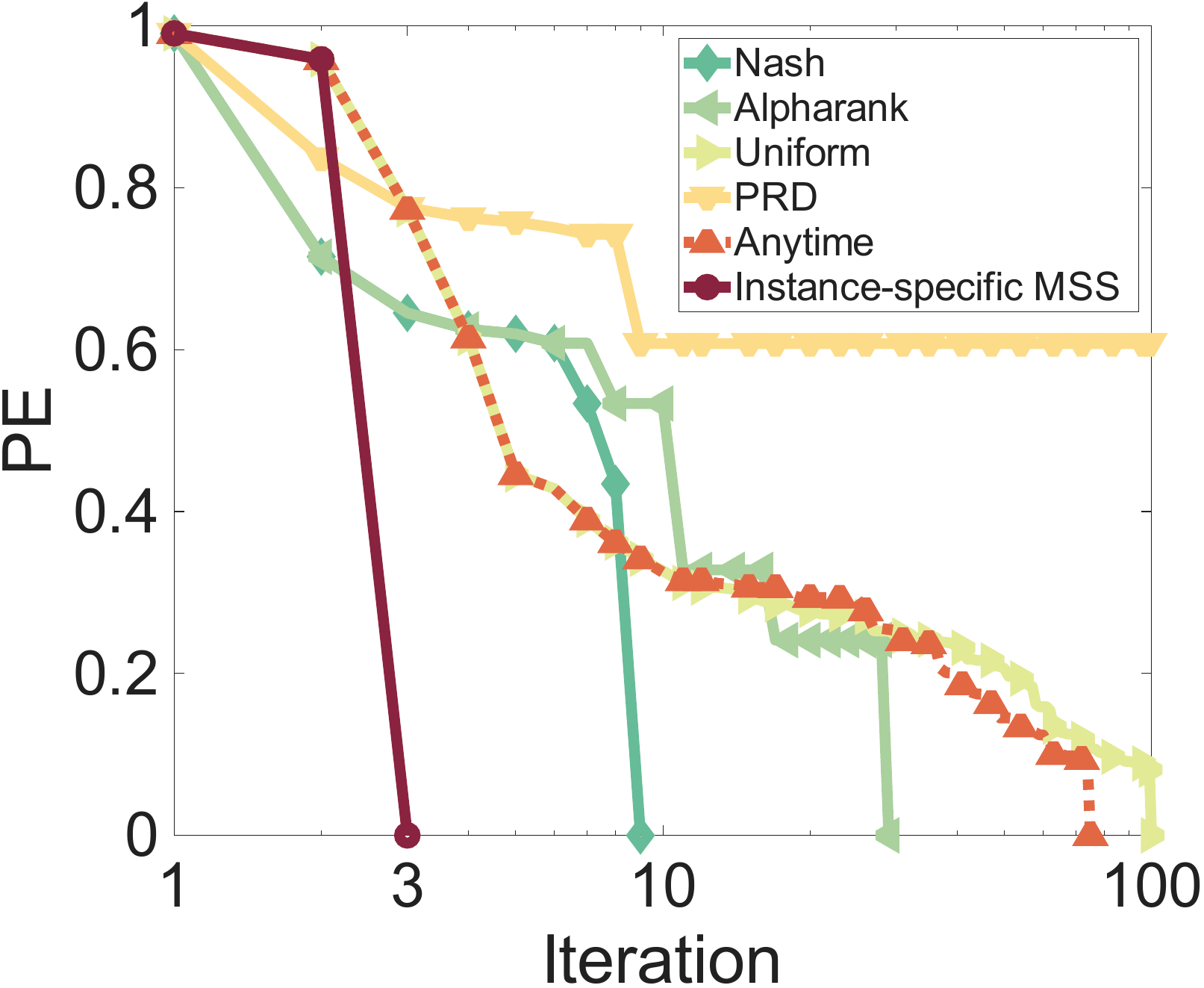}
        \caption{Uniform (S=1)}
    \end{subfigure}
    \begin{subfigure}{0.23\textwidth}
        \includegraphics[width=\linewidth]{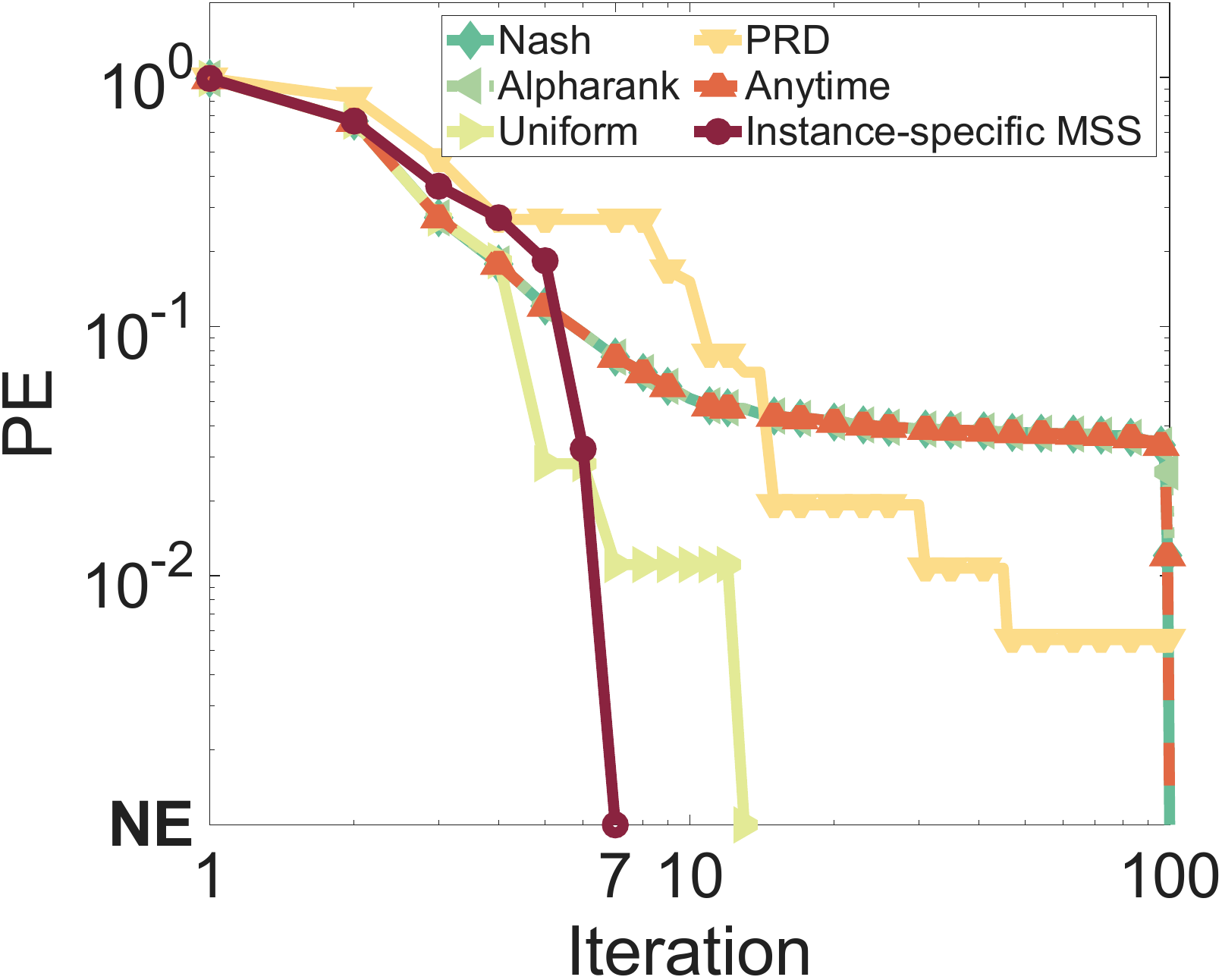}
        \caption{Restricted-game NE (S=5)}
    \end{subfigure}
    \begin{subfigure}{0.23\textwidth}
        \includegraphics[width=\linewidth]{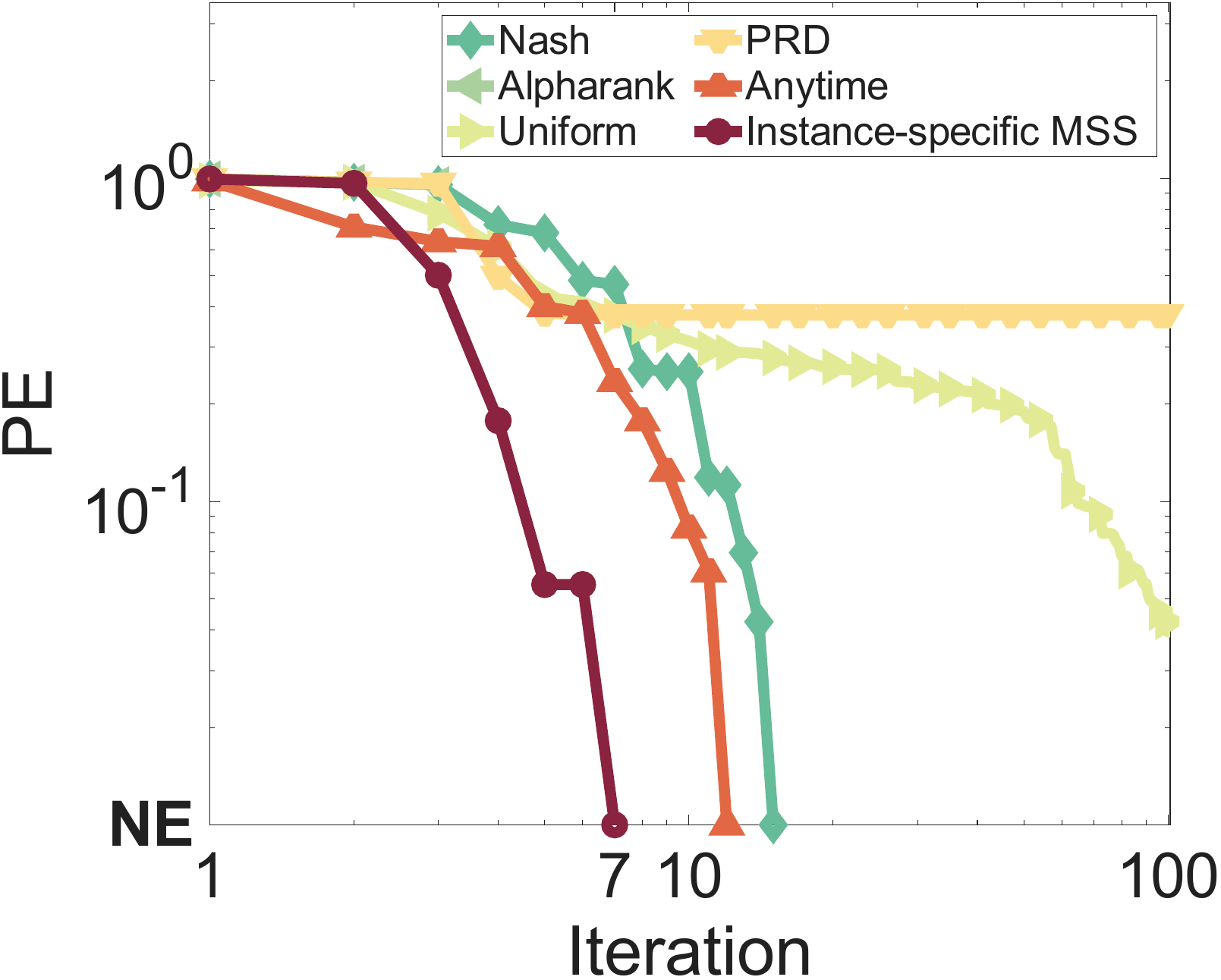}
        \caption{Uniform (S=5)}
    \end{subfigure}
    \caption{PE trajectories of PSRO under various MSSs on adversarially constructed $100\times100$ zero-sum games. Each subfigure is tailored to the MSS in its subcaption. See Appendix \ref{Apx:worst_case_game} for more results.}
    \label{fig:ad_game}
\end{figure}

Notably, the instance-specific MSS $\mathcal{M}'$ in Theorem~\ref{better_MSS} requires access to full-game payoff information and is therefore impractical in large-scale settings. Nevertheless, it highlights a key insight: \emph{incorporating appropriate global information into population expansion can dramatically improve iteration efficiency}.

\begin{figure*}[t]
    \centering
    \includegraphics[width=\linewidth]{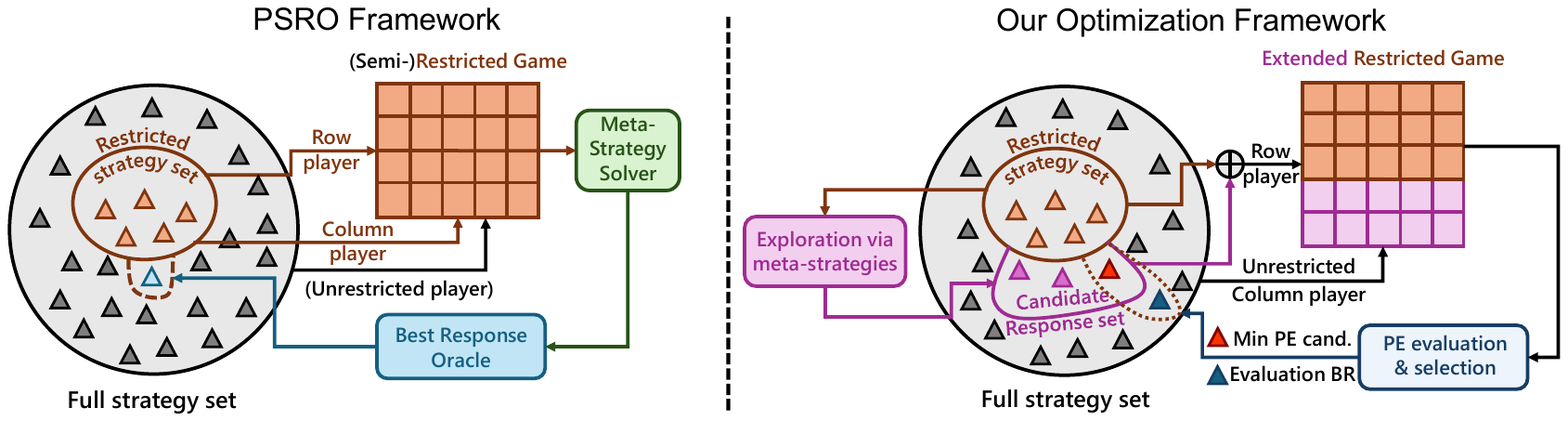}
    \caption{Comparison between optimization frameworks. 
    (a) The PSRO framework: an MSS computes a meta-strategy over the current restricted set, and PSRO adds a BR to expand the population.
    (b) Our framework: each round has two phases. Exploration samples a pool of meta-strategies and trains corresponding responses, forming multiple candidate expansions. Selection estimates each candidate’s PE via a best-response–based procedure, picks the candidate with the lowest PE, and adds both the candidate response and the evaluation BR to the population.}
    \label{fig:framework}
\end{figure*}

\subsection{Semi-Restricted-Game-Based PSRO}

Several PSRO variants incorporate partial full-game information when computing the meta-strategy. A useful abstraction is the semi-restricted setting, where the meta-strategy is supported on the restricted population, but exploitability is evaluated against full-game opponents. We refer to such methods as \emph{Semi-Restricted-Game-Based MSSs (SRGB-MSSs)}.

This category includes methods such as Anytime PSRO \cite{Anytime-PSRO} and EPSRO \cite{EPSRO}, which aim to find a mixed strategy over $\Pi^r$ that minimizes exploitability against the full strategy space:
\begin{equation}
\label{eq:anytime}
\sigma^* = \arg\min_{\sigma \in \Delta(\Pi^r)} \max_{\pi \in \Pi} U(\pi, \sigma).
\end{equation}

While SRGB-MSSs explicitly incorporate full-game opponents when selecting the meta-strategy, they still follow a fixed rule for choosing $\sigma$ and generate a response solely to that strategy. Crucially, these methods do not explicitly evaluate whether adding the resulting BR meaningfully improves the quality of the \emph{strategy population} itself. As PSRO fundamentally aims to construct an effective population with minimal global exploitability, this gap motivates a more direct, population-level evaluation of candidate expansions.

\section{Methods}
\label{sec_methods}

Motivated by the limitations of existing PSRO variants identified in Section~\ref{sec_revisiting}, we focus on \emph{population expansion}: which strategy should be added next so that the \emph{expanded} population best approximates the full-game equilibrium.
We formalize this goal by selecting candidates that minimize the post-expansion PE.
This leads to an optimization objective and an exploration--selection framework, which we instantiate as \textbf{Global PSRO}.

\subsection{Optimization Objective}
\label{subsec:new_obj_framework}

We aim to build a restricted strategy set that approximates the full-game equilibrium. We evaluate it using PE and minimize PE after each expansion step.

Formally, let $\mathbf{\Pi}^r$ be the current restricted strategy set. We select the next strategy as
\begin{equation}
\label{eq:max_e}
\pi^* \in \arg\min_{\pi \in \mathcal{B}(\Delta(\mathbf{\Pi}^r))} \ \mathcal{PE}\!\left(\mathbf{\Pi}^r \cup \{\pi\};\mathcal{G}\right),
\end{equation}
where $\mathcal{B}(\Delta(\mathbf{\Pi}^r))$ denotes the set of BRs induced by mixtures over $\Delta(\mathbf{\Pi}^r)$.

Eq.~\eqref{eq:max_e} differs from standard PSRO updates in that candidates are ranked by the \emph{population-level} improvement they enable after being added, rather than by how well they respond to a single meta-strategy. This directly aligns strategy exploration with the objective of improving full-game equilibrium approximation.

\subsection{Framework}
\label{subsec:global_psro}

Directly optimizing Eq.~\ref{eq:max_e} is intractable in large-scale games.
We alternatively approximate it by a two-phase exploration-selection framework that iteratively expands a strategy population (Fig.~\ref{fig:framework}).

\subsubsection{Phase I: Exploration}
\label{subsec:exploration}

The exploration phase aims to generate a diverse set of candidate expansions for the current restricted strategy set $\mathbf{\Pi}^r$.
We construct a pool of meta-strategies
\begin{equation}
\mathcal{S}_t=\{\sigma_k\}_{k=1}^K \subset \Delta(\mathbf{\Pi}^r),
\end{equation}
that balances (i) \emph{exploitation} of the current restricted game via principled meta-strategies and (ii) \emph{exploration} to discover diverse BRs beyond those obtained from a single selection rule.
For each $\sigma_k\in\mathcal{S}_t$, we compute a BR $\pi_k \in \mathcal{B}(\sigma_k)$ and form the candidate expansion
\begin{equation}
\mathbf{\Pi}_k^+ \;=\; \mathbf{\Pi}^r \cup \{\pi_k\}.
\end{equation}

The exploration phase aims to diversify the meta-strategies in $\mathcal{S}_t$ as much as possible.
It is useful to understand how BRs vary over the simplex $\Delta(\mathbf{\Pi}^r)$ in finite games.
A key property is local stability: if a mixture admits a unique BR, that response remains optimal under small perturbations.
As a result, $\Delta(\mathbf{\Pi}^r)$ decomposes into regions on which the BR is constant, so a finite pool $\mathcal{S}_t$ can cover large portions of the simplex without an exhaustive sweep.

\begin{proposition}
\label{thm:flat_regions}
Consider a simplex $\Delta(\Pi^r)$.
If a mixed strategy $\sigma \in \Delta(\Pi^r)$ admits a unique BR $\pi^\star$,
then there exists $\varepsilon>0$ such that $\pi^\star$ remains the unique BR for all mixed strategies in
$\{\sigma'\in\Delta(\Pi^r)\mid \|\sigma'-\sigma\|_2<\varepsilon\}.$
\end{proposition}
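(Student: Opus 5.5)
The plan is to reduce the claim to continuity of finitely many linear functions together with a strict-inequality margin. Fix the player whose mixed strategy $\sigma\in\Delta(\Pi^r)$ is given, and let $\Pi$ denote the (finite) full pure-strategy set of the responding player. For each pure response $\pi\in\Pi$, define $g_\pi(\sigma') = U(\pi,\sigma')=\sum_{\rho\in\Pi^r}\sigma'(\rho)\,U(\pi,\rho)$. Each $g_\pi$ is linear in $\sigma'$ on the simplex, hence Lipschitz: by Cauchy--Schwarz, $|g_\pi(\sigma')-g_\pi(\sigma)|\le L_\pi\|\sigma'-\sigma\|_2$ with $L_\pi=\big(\sum_{\rho\in\Pi^r}U(\pi,\rho)^2\big)^{1/2}$. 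Set $L=\max_{\pi\in\Pi}L_\pi$; this is finite because the game is finite.

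Uniqueness of the best response $\pi^\star$ at $\sigma$ gives a positive gap. Define
\[
\delta \;=\; g_{\pi^\star}(\sigma)-\max_{\pi\in\Pi\setminus\{\pi^\star\}} g_\pi(\sigma).
\]
We have $\delta>0$ because the maximum over the finite set $\Pi\setminus\{\pi^\star\}$ is attained and is strictly below $g_{\pi^\star}(\sigma)$. If $\Pi=\{\pi^\star\}$ the claim is trivial, so assume $\Pi$ has at least two elements. Choose $\varepsilon=\delta/(2L)$, or any $\varepsilon>0$ if $L=0$; in that degenerate case all payoffs vanish, which would contradict uniqueness unless $|\Pi|=1$.

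Now take $\sigma'\in\Delta(\Pi^r)$ with $\|\sigma'-\sigma\|_2<\varepsilon$, and any $\pi\neq\pi^\star$. Then
\[
g_{\pi^\star}(\sigma')-g_\pi(\sigma') \;\ge\; \big(g_{\pi^\star}(\sigma)-g_\pi(\sigma)\big)-2L\|\sigma'-\sigma\|_2 \;>\; \delta-\delta \;=\; 0 .
\]
So $\pi^\star$ strictly beats every other pure response at $\sigma'$, and is therefore the unique element of $\mathcal{B}(\sigma')$.

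There is no real obstacle here. The only points needing care are using finiteness of $\Pi$, so that the gap $\delta$ is attained and positive and the Lipschitz constant $L$ is uniform over all $\pi$, and noting that linearity of the expected payoff in the mixture supplies the continuity.
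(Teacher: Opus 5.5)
Your proof is correct and follows essentially the same route as the paper: a strictly positive payoff gap at $\sigma$, combined with a Cauchy--Schwarz (Lipschitz) bound on the linear payoffs, and $\varepsilon$ taken proportional to that gap. The only cosmetic difference is that the paper applies Cauchy--Schwarz directly to the difference vector $u(\pi^\star,\Pi^r)-u(\pi,\Pi^r)$, with $L$ the maximum norm of these differences, whereas you bound each payoff separately and absorb a factor of $2$ via the triangle inequality.
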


The proof is deferred to Appendix~\ref{Apx:flat}. Figure~\ref{fig:flat} gives an empirical illustration in several representative matrix games by visualizing the PE landscape over the simplex induced by a three-strategy restricted population. The figure shows that the landscape is partitioned into broad regions within which the PE value remains unchanged. This suggests that dispersing meta-strategies over the simplex can increase the chance of hitting regions that induce different BRs, thereby providing a reliable approximation to the full BR space.

\begin{figure}[htbp]
    \centering
    \begin{subfigure}{0.235\textwidth}
        \includegraphics[width=\linewidth]{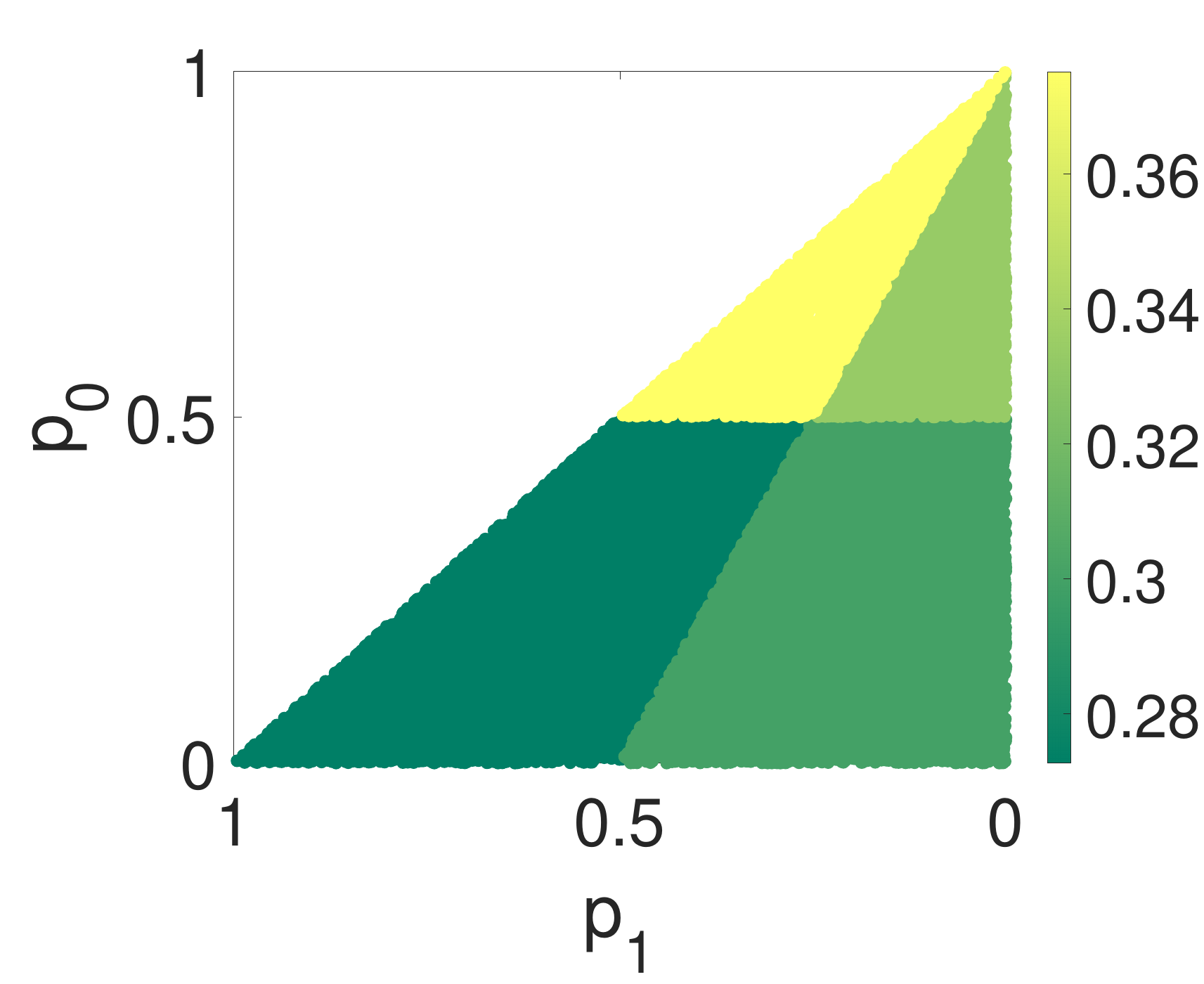}
        \caption{10,4-Blotto}
        \label{fig_flat1}
    \end{subfigure}
    \begin{subfigure}{0.235\textwidth}
        \includegraphics[width=\linewidth]{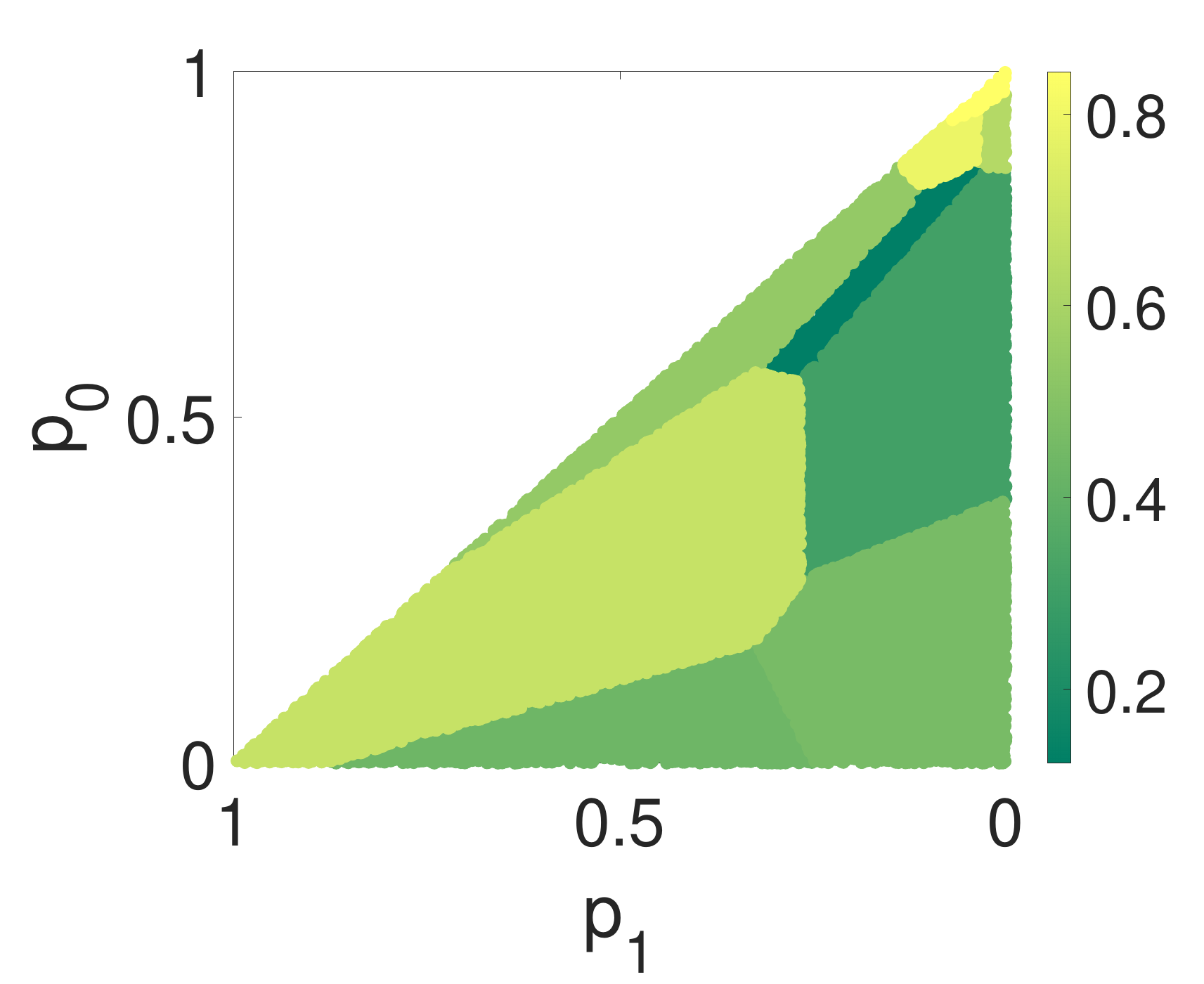}
        \caption{AlphaStar}
        \label{fig_flat2}
    \end{subfigure}
    \caption{Post-expansion PE landscape on the simplex $\{\mathbf{p}\mid p_0+p_1+p_2=1\}$ induced by a three-strategy restricted population. Additional examples are given in Appendix~\ref{Apx:br_dis}.}
    \label{fig:flat}
\end{figure}

\subsubsection{Phase II: Selection}
\label{subsec:evaluation}

The selection phase evaluates the candidate populations and chooses the expansion that yields the lowest estimated PE. 
For each candidate $\mathbf{\Pi}_k^+$, we estimate $\mathcal{PE}(\mathbf{\Pi}_k^+;\mathcal{G})$ via a best-response--based procedure, which returns (i) an estimated PE value $\widehat{\mathcal{PE}}(\mathbf{\Pi}_k^+;\mathcal{G})$ and (ii) the corresponding response policy trained against the candidate's least-exploitable mixture. We then select
\begin{equation}
k^\star \in \arg\min_{k\in\{1,\ldots,K\}} \ \widehat{\mathcal{PE}}\!\left(\mathbf{\Pi}_k^+;\mathcal{G}\right),
\end{equation}
and expand the population as
\begin{equation}
\mathbf{\Pi}^r \leftarrow \mathbf{\Pi}^r \cup \{\pi_{k^\star},\,\beta_{k^\star}\}.
\end{equation}

Selecting $k^\star$ uses a \emph{global} post-expansion criterion: we compare candidates by the PE of their \emph{expanded} populations, rather than by restricted-game payoffs.
After selecting $k^\star$, we additionally add $\beta_{k^\star}$, the full-game response returned by the PE-evaluation routine against the least-exploitable mixture on $\mathbf{\Pi}_{k^\star}^+$.
This is equivalent to an Anytime-PSRO-style expansion applied to the selected population and does not alter the selection criterion.

\subsubsection{Convergence Analysis}
\label{subsec:framework_convergence}

We analyze the convergence properties of the proposed exploration--selection framework.

\textbf{Iteration counting (aligned with PSRO).}
We use the standard PSRO counting: one \emph{iteration} means adding \emph{one} new strategy into the restricted population.
Our exploration--selection framework adds two strategies per round. Therefore, one framework round corresponds to two PSRO-style iterations under this counting.

\textbf{Assumption (Exact oracle).}
Whenever the framework queries a BR (either for candidate generation or for PE estimation), the oracle returns an exact BR. Under this assumption, the PE values used for selection are exact.

\textbf{Framework conditions.}
Fix a base RGB-MSS $\mathcal{M}$. The framework satisfies:
(i) \emph{Base inclusion:} the mixture pool always contains the meta-strategy $\sigma^{\mathcal{M}}$ produced by $\mathcal{M}$; and
(ii) \emph{Conservative tie-breaking:} if multiple candidates achieve the same minimal PE, we select the candidate induced by $\sigma^{\mathcal{M}}$ whenever it is among the ties.

\begin{theorem}
\label{thm:framework_converge}
Assume an exact oracle. Let $\mathcal{M}$ be an RGB-MSS such that PSRO with $\mathcal{M}$ reaches a NE in any two-player zero-sum game in finitely many iterations.
Then any instantiation of the exploration--selection framework that satisfies base inclusion and conservative tie-breaking also reaches a NE in finitely many PSRO-style iterations for any two-player zero-sum game.
\end{theorem}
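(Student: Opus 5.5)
The plan is a finiteness argument. The game is finite, the restricted population only grows, and PE is monotone non-increasing under set inclusion, since enlarging $\mathbf{\Pi}^r$ enlarges the domain of the minimization in Eq.~\eqref{eq:pe_def}. Moreover $\mathcal{PE}(\Pi;\mathcal{G})=0$ once $\mathbf{\Pi}^r=\Pi$, because a full-game NE exists. It therefore suffices to prove one claim: \emph{every round in which the current PE is positive adds at least one strategy not already in $\mathbf{\Pi}^r$}. Granting this, after at most $|\Pi_1|+|\Pi_2|$ productive rounds, i.e.\ at most twice that many PSRO-style iterations under the stated counting, either PE has already hit zero or the population equals $\Pi$. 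In both cases a NE is reached.

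For the key claim, suppose $\mathcal{PE}(\mathbf{\Pi}^r;\mathcal{G})>0$ and that the round adds nothing new, so both $\pi_{k^\star}\in\mathbf{\Pi}^r$ and $\beta_{k^\star}\in\mathbf{\Pi}^r$. Then $\mathbf{\Pi}^+_{k^\star}=\mathbf{\Pi}^r$. Thus $\beta_{k^\star}$ is an exact full-game BR to the least-exploitable mixture $\sigma^\ast$ on $\mathbf{\Pi}^r$, and it already lies in $\mathbf{\Pi}^r$. Since the selected candidate achieves the minimal PE, and every candidate's PE is at most $\mathcal{PE}(\mathbf{\Pi}^r)$ by monotonicity, all candidates tie at $\mathcal{PE}(\mathbf{\Pi}^r)$. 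This uses the exact-oracle assumption. Conservative tie-breaking then forces $k^\star$ to be the base candidate, so $\pi_{k^\star}$ is the exact BR to $\sigma^{\mathcal M}$ and lies in $\mathbf{\Pi}^r$. From that state, PSRO with $\mathcal M$ alone would stall permanently. Because $\mathcal M$ is a deterministic function of the restricted payoff matrix, the population and hence the meta-strategy would never change, and the PE would stay positive forever.

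The main obstacle is turning this stall into a contradiction with the hypothesis on $\mathcal M$. That hypothesis is only stated for runs started from a single initial strategy, whereas $\mathbf{\Pi}^r$ here is an arbitrary population produced by the framework. I would try two routes, in this order.

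First, I would try to show directly that a stall is impossible. Both players' full-game BRs to the least-exploitable mixture lie in $\mathbf{\Pi}^r$. Apply the minimax theorem to the restricted game on $\mathbf{\Pi}^r$ together with $\sigma^\ast$ minimizing full-game exploitability. The aim is to conclude that the full-game value of $\sigma^\ast$ equals the restricted value, and then, applying the same reasoning on both sides, that $\epsilon(\sigma^\ast)=0$, contradicting positive PE. I expect a subtlety here, because $\max_{\Pi}U(\cdot,\sigma^\ast)$ need not a priori equal the restricted value.

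If that fails, I would build an auxiliary game $\mathcal{G}'$ that is an embedding of $\mathcal{G}$ with relabeled or duplicated strategies. The construction should make PSRO with $\mathcal M$ started at $\pi_0$ reproduce the restricted payoff matrix of $\mathbf{\Pi}^r$ and then face the same stall. That would contradict the assumption that $\mathcal M$ converges on every two-player zero-sum game.
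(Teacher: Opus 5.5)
\textbf{Where the proposal stops.} Your reduction is the paper's own argument:
\begin{itemize}
\item finiteness forces a stall at positive PE;
\item in a stalled round both $\pi_{k^\star}$ and $\beta_{k^\star}$ already lie in $\mathbf{\Pi}^r$;
\item all candidates then tie at $\mathcal{PE}(\mathbf{\Pi}^r;\mathcal{G})$;
\item conservative tie-breaking selects the base candidate;
\item so PSRO with $\mathcal{M}$ would stall from that state.
\end{itemize}
Your monotonicity argument for the tie is more careful than the paper's Step~2, which never says why a candidate adding a new strategy is not selected.

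The gap is that you stop exactly where the contradiction is needed. The paper closes this step in one line. It runs PSRO($\mathcal{M}$) ``from the restricted set $\Pi^r_{t_0}$'' and invokes the premise for that run. In other words, it reads the hypothesis on $\mathcal{M}$ as finite convergence from \emph{any} starting restricted population. Under that reading your stall observation already is the contradiction, and your proof is complete and essentially the paper's. Both versions also tacitly need two things: PSRO($\mathcal{M}$) keeps its population as a set, and the oracle is deterministic. Only then does re-adding $\pi^{\mathcal{M}}$ leave the payoff matrix, and hence $\mathcal{M}$'s output, unchanged. Note that the PSRO model in the proof of Theorem~\ref{worst_game} allows duplicate copies.

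\textbf{Your two routes.} Your worry about the literal single-start hypothesis is legitimate, and the paper does not address it. However, neither of your routes resolves it.

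Route~1 aims at a false claim. An exact oracle may return an in-population BR to $\sigma^\ast$ while $\sigma^\ast$ also has BRs outside $\mathbf{\Pi}^r$. Take rock--paper--scissors with $\mathbf{\Pi}^r=\{\text{rock},\text{scissors}\}$. The least-exploitable mixture is $(2/3,1/3)$ on (rock, scissors), and rock and paper are both BRs with value $1/3$. So $\beta\in\mathbf{\Pi}^r$ coexists with $\mathcal{PE}=1/3>0$.

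Route~1 does work when \emph{every} BR to $\sigma^\ast$ lies in $\mathbf{\Pi}^r$, for instance with a unique BR or an oracle that returns a new BR whenever one exists. Let $B$ be the BR set. Then $\sigma^\ast$ is a local, hence global, minimizer of the convex function $\max_{\pi\in B}U(\pi,\cdot)$ on $\Delta(\mathbf{\Pi}^r)$. By minimax, $\mathcal{PE}=\max_{\mu\in\Delta(B)}\min_{\sigma\in\Delta(\mathbf{\Pi}^r)}U(\mu,\sigma)\le\max_{\mu}U(\mu,\mu)=0$. But this needs an extra oracle assumption and never uses $\mathcal{M}$.

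Route~2 cannot work for arbitrary $\mathcal{M}$. Payoffs inside $\mathbf{\Pi}^r$ are fixed, so PSRO($\mathcal{M}$) can reproduce the framework's ordered matrix only if each strategy is a BR to $\mathcal{M}$'s output on the preceding prefix. That need not hold for the evaluation BRs $\beta$ or for candidates induced by non-base mixtures. On matrices that PSRO($\mathcal{M}$) never generates, the single-start hypothesis does not constrain $\mathcal{M}$ at all.

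To finish, state the stronger reading of the premise explicitly and use it, as the paper implicitly does.
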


The proof of Theorem \ref{thm:framework_converge} is deferred to Appendix~\ref{Apx:converage}. Theorem~\ref{thm:framework_converge} establishes that the framework preserves the convergence guarantees of RGB-MSS-based PSRO in zero-sum games. A concrete instantiation satisfying the premise is restricted-game NE, for which the corresponding PSRO enjoys finite-iteration convergence in two-player zero-sum games under the exact-oracle assumption.

\textbf{Iteration complexity.}
Obtaining a non-asymptotic bound on the number of iterations in general games is challenging. Nevertheless, sharper bounds can be derived on adversarial game families previously known to cause worst-case behavior for RGB-MSS.

\begin{theorem}
\label{thm:exp_iter_bound}
Assume the exact oracle.
Consider the exploration--selection framework whose exploration pool has size $K$ in each iteration and is formed by including (i) the restricted-game NE and (ii) $K-1$ i.i.d.\ mixtures sampled from the uniform simplex distribution $\mathrm{Dirichlet}(\mathbf{1})$.

Fix any RGB-MSS $\mathcal{M}$, $N\in\mathbb{N}$, and an odd integer $S$ with $1\le S\le N$.
Then there exists an $N\times N$ two-player symmetric zero-sum game $G$ admitting a NE $(\sigma^\star,\sigma^\star)$ with $|\operatorname{supp}(\sigma^\star)|=S$ such that:
(i) PSRO instantiated with $\mathcal{M}$ exhibits the worst-case behavior on $G$ (as characterized in Sec.~\ref{Sec:RGB-MSS}); yet
(ii) running the above exploration--selection framework on $G$ reaches a NE within an expected number of PSRO-style iterations $T^\star$ satisfying
\begin{equation}
\mathbb{E}[T^\star]
\;\le\;
\min\!\left\{
2 + \frac{2S}{1-(1-c)^{K-1}},\; N-1
\right\},
\end{equation}
where $c\in(0,1]$ is a game-dependent constant.
\end{theorem}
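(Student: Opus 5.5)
We reuse the adversarial construction behind Theorem~\ref{worst_game} and Theorem~\ref{better_MSS}. There the game $G$ is built adaptively around the trajectory of $\mathcal{D}(\mathcal{M},G,\pi_0)$, so that part (i) holds by construction. We then exploit the structure of $G$ to analyse the randomized exploration--selection framework. Concretely, $G$ consists of three kinds of strategies.
\begin{enumerate}[label=(\alph*)]
\item A ``trap chain'' $\pi_0,\pi_1,\dots$ that $\mathcal{M}$'s meta-strategies lead through one pure strategy at a time.
\item A block $E=\{e_1,\dots,e_S\}$ carrying the equilibrium $\sigma^\star$, an odd-size cyclic (rock--paper--scissors-type) symmetric block.
\item Payoffs making every $e_j$ beat all trap strategies that are not themselves in $E$.
\end{enumerate}
Once $E\subseteq\Pi^r$, we have $\mathcal{PE}=0$. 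The payoffs are chosen so that, from any restricted set, the region of the simplex $\Delta(\Pi^r)$ whose BR is a not-yet-included $e_j$ is a union of full-dimensional polytopes. This is the flat-region structure guaranteed by Proposition~\ref{thm:flat_regions}. We define $c>0$ as the minimum, over all reachable restricted sets, of the Dirichlet$(\mathbf{1})$ (i.e.\ normalized Lebesgue) measure of the region whose BR lies in $E\setminus\Pi^r$. Since $G$ is finite, there are finitely many reachable restricted sets and each region has positive volume, so the minimum is strictly positive.

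The key lemma is a monotone progress property. First, adding a strategy from $E\setminus\Pi^r$ strictly decreases $\mathcal{PE}$, or at least never yields a larger post-expansion PE than any trap-chain candidate. Second, the selection phase therefore picks such a candidate whenever one is present in the pool. The payoff magnitudes of the construction are to be tuned so that this holds, e.g.\ by having $e_j$'s dominate the trap strategies by a large margin. The conservative tie-breaking then only matters when no $E$-candidate appears.

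Given the lemma, we count rounds. In each round, the $K-1$ Dirichlet samples each independently land in an $E$-region with probability at least $c$. Hence the round is ``successful'', adding a new element of $E$, with probability at least $p=1-(1-c)^{K-1}$. The number of rounds needed for $S$ successes is then stochastically dominated by a sum of $S$ geometric variables, giving expected rounds at most $S/p$. Each round adds two strategies, so this yields $2S/p$ PSRO-style iterations. The additive $2$ accounts for one initial round: the evaluation BR $\beta_{k^\star}$, or the restricted NE candidate, may first be needed to escape $\{\pi_0\}$. It may also be used to complete the support, because the Anytime-style BR to the least-exploitable mixture on a set missing one $e_j$ is exactly that missing $e_j$. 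Indeed, this observation suggests that $\beta$ often adds an $E$-element as well, which only helps. The cap $N-1$ is trivial, since the population is exhausted after at most $N-1$ additions beyond $\pi_0$.

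The main obstacle is the progress lemma under an \emph{arbitrary} $\mathcal{M}$. Part (i) requires $G$ to be tailored to $\mathcal{M}$'s outputs, and we must show the tailoring never makes a trap candidate tie with or beat an $E$-candidate in post-expansion PE. We would first try to separate the two roles by payoff scales. The trap-chain payoffs that steer $\mathcal{M}$ would be confined to a small-magnitude sub-block, which fully determines $\mathcal{M}$'s behaviour since $\mathcal{M}$ only sees restricted payoffs. The $E$-versus-trap payoffs would be of order $1$. Then the PE of any population lacking some $e_j$ is bounded below by a constant, while including more of $E$ strictly reduces it. A secondary difficulty is ensuring that $c$ does not depend on the random history, which the finite minimum over reachable sets handles.
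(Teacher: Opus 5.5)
Your overall route matches the paper's: inherit the construction of Theorem~\ref{better_MSS}, then count a geometric waiting time with two strategies per round and the trivial $N-1$ cap. However, the progress lemma you flag as the crux is not established, and the scale separation you propose for it would destroy part (i). Suppose trap-versus-trap payoffs live in a small block while every $e_j$ beats every trap strategy by an order-one margin. Then against any mixture of trap strategies some $e_j$ earns strictly more than every trap strategy. So the exact full-game BR to every meta-strategy of $\mathcal{M}$, already to $\pi_0$, lies in $E$, and the worst-case path is lost. In the inherited construction the support strategies mimic the last trap strategy, whose payoffs were built under strict constraints of not being a BR to any earlier $\mathcal{M}$-mixture; $E$ cannot dominate the chain by a large margin.

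The separation that works is the reverse one. Keep the inherited $E$-versus-trap payoffs, set $r=\min_{e_j\in E,\,\rho\notin E}U(e_j,\rho)>0$, and scale the internal $E\times E$ block below $r$ in absolute value. Then trap-only populations have PE at least $r$, and any population meeting $E$ has PE below $r$. Moreover, once $E\cap\Pi^r\neq\emptyset$, adding a trap strategy leaves PE \emph{exactly} unchanged. Hence trap candidates tie with any missing $e_j$ whose addition does not lower PE, and such $e_j$ may exist. So ``never tie'' is unattainable, and ``including more of $E$ strictly reduces PE'' is false in general. What you need is an existence statement. Suppose the subgame with rows $E\cap\Pi^r$ and columns $E$ has a unique equilibrium $(x,y)$ with value $v>0$; this is a nondegeneracy condition you must impose. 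Then some missing $e_j$ has $U(y,e_j)<v$, since otherwise averaging over $y$ gives $U(y,y)\ge v>0$, which is impossible in a symmetric zero-sum game. Adding that particular $e_j$ strictly lowers PE. Accordingly, the hitting probability must be controlled for each missing strategy separately, not for the union region ``BR in $E\setminus\Pi^r$''. A sample landing in the region of a non-improving $e_j$ only creates a tie that selection may resolve in favour of a trap candidate.

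Your constant $c$ is also not positive as defined. For $\{\pi_0\}$ the region whose BR lies in $E$ is empty. Proposition~\ref{thm:flat_regions} only supplies an open neighbourhood around a mixture that \emph{already} has a unique BR in $E$; it says nothing about existence. Existence comes from the exposing vectors $q_i\in\Delta(\Pi_3)$ constructed in the proof of Theorem~\ref{better_MSS} on the three-strategy core $\Pi_3=\{\pi_0,\pi_1,\pi_2\}$. Via local stability, these yield positive Dirichlet mass only for populations containing $\Pi_3$. Because the framework is driven by restricted-game NE and random mixtures rather than by $\mathcal{M}$, you need an explicit device forcing it into the core during the first round. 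The paper requires every full-game BR to a mixture on $\{\pi_0,\pi_1\}$ to lie in $\{\pi_1,\pi_2\}$, so the first round adds $\pi_1$ as the candidate and $\pi_2$ as the evaluation BR. That, rather than merely escaping $\{\pi_0\}$, is what the additive $2$ pays for. Only then can you define $c$ as an infimum over populations containing $\Pi_3$ and over each missing support strategy individually. Combined with the existence lemma above, this gives success probability at least $1-(1-c)^{K-1}$ per round.
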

The proof is deferred to Appendix~\ref{Apx:avoid_worst_case}. The theorem is derived from the adversarial family of restricted-game-based PSRO in Sec.~\ref{Sec:RGB-MSS}, and shows that enriching exploration and selecting by post-expansion PE can circumvent some worst-case behaviors. In particular, as $K\to\infty$, $(1-c)^{K-1}\to 0$ and the bound approaches $\min\{2+2S,N-1\}$ PSRO-style iterations. Thus, the proposed framework can improve iteration efficiency especially when the equilibrium has a small support; for example, when $S=1$, the upper bound is at most four iterations.

\subsection{Global PSRO}
\label{subsec:global_psro_impl}

We now present \textbf{Global PSRO}, a practical instantiation of the exploration--selection framework.
A naive implementation would train and evaluate $K$ separate policies per iteration, which is prohibitive in large-scale games.
Global PSRO amortizes this cost via \emph{conditional} policies that share parameters across candidates.

\subsubsection{Exploration}

\textbf{Mixture pool and conditional responses.}
At each iteration, we first form a pool of meta-strategies $\mathcal{S}_t=\{\sigma_k\}_{k=1}^K \subset \Delta(\mathbf{\Pi}^r)$.
We include one mixture produced by a base RGB-MSS $\mathcal{M}$ and sample the remaining mixtures uniformly on the simplex:
\begin{equation}
\label{eq:pool_sampling}
\mathcal{S}_t = \{\sigma^{\mathcal{M}}\}\cup\{\sigma_k\}_{k=2}^{K}, 
\quad
\sigma_k \sim \mathrm{Dirichlet}(\mathbf{1}).
\end{equation}
Given $\mathcal{S}_t$, we train a single conditional explorer $\pi_\theta(a\mid s,\sigma)$ that outputs a response tailored to each $\sigma\in\mathcal{S}_t$:
\begin{equation}
\label{eq:cond_oracle_obj}
\max_{\theta}\; J(\theta)=\frac{1}{K}\sum_{k=1}^K U\big(\pi_\theta(\cdot\mid \sigma_k), \sigma_k\big).
\end{equation}
We then instantiate candidate responses by slicing $\pi_k \triangleq \pi_\theta(\cdot\mid\sigma_k)$, yielding candidate sets
$\mathbf{\Pi}_k^+ = \mathbf{\Pi}^r \cup \{\pi_k\}$.

\subsubsection{Selection}

\textbf{PE estimation}
To select among candidates, we estimate $\mathcal{PE}(\mathbf{\Pi}_k^+;\mathcal{G})$ for each $k$.
In two-player zero-sum games,
\begin{equation}
\label{eq:pe_saddle}
\mathcal{PE}(\mathbf{\Pi}_k^+;\mathcal{G})
= \min_{\sigma \in \Delta(\mathbf{\Pi}_k^+)} \ \max_{\pi \in \mathbf{\Pi}} \ U(\pi,\sigma).
\end{equation}
Exact evaluation of~\eqref{eq:pe_saddle} is intractable at scale, so we approximate it using RM--BR~\cite{Anytime-PSRO}.
For each candidate $k$, RM maintains a mixture $\rho_k\in\Delta(\mathbf{\Pi}_k^+)$ via regret-minimization updates (RM steps), while a best-response learner tracks $\rho_k$ (BR steps).
Since $\rho_k$ changes incrementally, we warm-start the BR learner and use a small update budget per BR step.

To avoid training $K$ separate BR learners, we amortize the BR component using a conditional evaluator $\beta_\psi(a\mid s,\sigma)$.
For candidate $k$, the BR step is implemented by the slice $\beta_k \triangleq \beta_\psi(\cdot\mid\sigma_k)$, producing the PE estimate
$\widehat{\mathcal{PE}}_k = U(\beta_k,\rho_k)$. In our ablations, we isolate the effect of using the estimated PE for selection.

\textbf{Regularized evaluation metric.}
Because $\rho_k$ and $\beta_k$ are approximate, a candidate may appear to have a small $\widehat{\mathcal{PE}}_k$ even when the newly added strategy $\pi_k$ receives negligible mass under $\rho_k$.
Let $\hat p_k \triangleq \rho_k(\pi_k)$.
We therefore rank candidates by the regularized score
\begin{equation}
\label{eq:reg_pe}
\widehat{\mathcal{PE}}^{\mathrm{reg}}_k
=
(1-\hat p_k)\!\left(
\widehat{\mathcal{PE}}(\mathbf{\Pi}^r;\mathcal{G})
- U(\beta_k,\rho^r)
\right)
+
\widehat{\mathcal{PE}}_k,
\end{equation}
where $\rho^r$ is the mixture used to estimate $\widehat{\mathcal{PE}}(\mathbf{\Pi}^r;\mathcal{G})$.
Since $U(\beta_k,\rho^r)\le \widehat{\mathcal{PE}}(\mathbf{\Pi}^r;\mathcal{G})$, the bracketed term is non-negative, so small $\hat p_k$ yields a larger upward correction. We also ablate this regularization to quantify its contribution beyond the raw PE estimate.

\subsubsection{Algorithm summary}
Algorithm~\ref{alg:global_psro} summarizes Global PSRO, where $\mathcal{M}(\widehat{\mathbf{U}})$ denotes the meta-strategy produced by base MSS and $\mathrm{EVAL}(\mathbf{\Pi}^r)$ evaluates the empirical payoff table of the restricted population.

\begin{algorithm}[tb]
   \caption{Global PSRO}
   \label{alg:global_psro}
\begin{algorithmic}[1]
   \STATE {\bfseries Input:} Initial $\mathbf{\Pi}^r$, explorer $\pi_\theta(\cdot|\sigma)$, evaluator $\beta_\psi(\cdot|\sigma)$, base RGB-MSS $\mathcal{M}$, restricted payoff matrix $\widehat{\mathbf{U}}$
   \REPEAT
      \STATE \textcolor{gray}{// Phase I: Exploration}
      \STATE Form mixture pool $\mathcal{S}_t = \{\sigma_k\}_{k=1}^{K}$ ($\mathcal{M}(\widehat{\mathbf{U}})\in \mathcal{S}_t$)
      \STATE Update Explorer $\pi_\theta$ via DRL against $\{\sigma_k\}_{k=1}^K$
      \STATE Generate the BR set $\{\pi_\theta(\cdot|\sigma_k)\}_{k=1}^K$
      \STATE Form the candidate set $\mathbf{\Pi}_k^+ = \mathbf{\Pi}^r \cup\{\pi_\theta(\cdot|\sigma_k)\}$
      
      \STATE \textcolor{gray}{// Phase II: Selection}
      \STATE Initialize mixed strategy $\rho_k$ for candidate set $\mathbf{\Pi}_k^+$
      \FOR{evaluation steps}
          \STATE Query Evaluator $\beta_k \leftarrow \beta_\psi(\cdot | \sigma_k)$ for all $k$
          \STATE Generate trajectories $\tau_k$ of $\beta_k$ vs $\rho_k$ for all $k$
          \STATE Update $\rho_k$ via RM algorithm using $\tau_k$ for all $k$
          \STATE Update Evaluator $\beta_\psi$ via DRL using $\{\tau_k\}_{k=1}^K$
      \ENDFOR
      \STATE Estimate the PE value $\widehat{\mathrm{PE}}_k^{reg}$ using Eq.~\ref{eq:reg_pe}
      \STATE Select $k^\star \in \arg\min_k \widehat{\mathrm{PE}}^{reg}_k$, breaking ties in favor of the candidate induced by $\mathcal{M}(\widehat{\mathbf{U}})$
      \STATE $\mathbf{\Pi}^r \leftarrow \mathbf{\Pi}^r \cup \{\pi_\theta(\cdot|\sigma_{k^*}), \beta_\psi(\cdot | \sigma_{k^*})\}$
      \STATE $\widehat{\mathbf{U}}\leftarrow\mathrm{EVAL}(\mathbf{\Pi}^r)$
   \UNTIL{Stop condition}
\end{algorithmic}
\end{algorithm}

\section{Related Work}

Here we briefly highlight two complementary lines of work that connect to our method.

\textbf{Diversity-driven response training.}
Some PSRO variants encourage exploration by pushing newly added strategies to differ from the existing population, typically by introducing a diversity regularization term into the training objective \cite{div1,div2,div3,div4}. A representative example is PSD-PSRO~\cite{PSD-PSRO}, which measures diversity via the KL divergence between policies and incorporates it into the training objective. Such methods are complementary to our approach: they can be used to produce more diverse candidate responses in the exploration phase.

\textbf{Conditional-population PSRO.}
Methods such as NeuPL~\cite{neupl} and Simplex-NeuPL~\cite{simplex_NeuPL} represent the PSRO population with a conditional neural network, enabling parameter sharing across many response policies. They retain PSRO-style population expansion but change how policies are represented and trained. We use conditional models for the same purpose; however, our main contribution is orthogonal: selecting expansions via a post-expansion PE criterion. NeuPL therefore provides a natural comparison for separating gains from parameter sharing and from the selection mechanism.

\section{Experiments}
\label{sec:exp}

In this section, we design experiments to answer the following questions:
(\textbf{RQ1}) Compared with PSRO using different MSS variants, does Global PSRO improve equilibrium approximation?
(\textbf{RQ2}) How does Global PSRO compare to diversity-driven methods, and can diversity further improve candidate generation within Global PSRO?
(\textbf{RQ3}) Do the gains of Global PSRO persist when compared to conditional-population methods?
(\textbf{RQ4}) How much does each component of the framework contribute to the overall performance improvements?

\subsection{Experimental Setup}

\textbf{Environments.}
We evaluate on widely used two-player zero-sum extensive-form games in the PSRO literature: Kuhn Poker, Liar's Dice, Leduc Poker, and Goofspiel (Appendix~\ref{Apx:game_dep}).  For Goofspiel, we use two variants: a smaller one with 5 cards and a larger one with 13 cards. These benchmarks span a range of scales, from Kuhn Poker to Liar's Dice and Leduc Poker, and finally to large Goofspiel instances.

\textbf{Baselines.}
We compare \emph{PSRO procedures} under different design choices.
(1) \textbf{Meta-strategy choices:} PSRO equipped with restricted-game NE (abbreviated as Nash in legends and tables), AlphaRank, PRD, Uniform, and the semi-restricted solver used in Anytime PSRO.
(2) \textbf{Diversity-driven PSRO:} PSD-PSRO.
(3) \textbf{Conditional-population PSRO:} NeuPL.

\textbf{Interaction budgets.}
All methods are compared under the same total number of environment interactions, which is the x-axis in all curves. Each individual policy-training run uses the same number of environment steps. This includes BR training in standard PSRO, candidate-response training in Global PSRO's exploration phase, and RM--BR evaluator training for PE estimation in Global PSRO's selection phase. More details are provided in Appendix~\ref{Apx:interaction_budget}.

\textbf{Metrics.}
We report PE of the learned population as the primary metric. \textit{Note: The PE values shown in all figures are computed solely for evaluation to assess strategy performance, and their interaction costs are not included in the training budget.} For Goofspiel, exact PE evaluation is expensive, so we report a high-budget RM--BR estimate as a proxy. In addition, to align with prior PSRO literature, we also report exploitability, with results in Appendix~\ref{Apx:exp_metric}.

\textbf{Implementation details.}
All learning-based methods train BRs with PPO~\cite{MAPPO}. Extensive-form games are implemented in OpenSpiel~\cite{Openspiel}. We use a unified policy architecture $\pi_\theta(a\mid s,z)$ with optional conditioning $z$: methods without conditioning fix $z$ to a constant vector, while NeuPL and Global PSRO set $z$ to the mixture vector. Hyperparameters are in Appendix~\ref{Apx:hyperparameter}. We report mean and variance over four independent runs.

\begin{table*}[t]
\centering
\caption{Estimated PE on Goofspiel (13 cards) across environment-interaction budgets.}
\scriptsize
\setlength{\tabcolsep}{3pt}
\resizebox{\textwidth}{!}{%
\begin{tabular}{lcccccccc}
\toprule
Method & \multicolumn{8}{c}{Environment steps ($\times 10^6$)} \\
\cmidrule(lr){2-9}
 & 19.2 & 38.4 & 57.6 & 76.8 & 96.0 & 115.2 & 134.4 & 153.6 \\
\midrule
\textbf{Global PSRO (ours)} & 0.305 $\pm$ 0.062 & \textbf{0.160 $\pm$ 0.041} & \textbf{0.150 $\pm$ 0.030} & \textbf{0.056 $\pm$ 0.037} & \textbf{0.084 $\pm$ 0.002} & \textbf{0.079 $\pm$ 0.067} & \textbf{0.064 $\pm$ 0.017} & \textbf{0.046 $\pm$ 0.016} \\
PSRO w/ Nash & 0.579 $\pm$ 0.086 & 0.411 $\pm$ 0.062 & 0.321 $\pm$ 0.040 & 0.284 $\pm$ 0.043 & 0.251 $\pm$ 0.023 & 0.206 $\pm$ 0.016 & 0.188 $\pm$ 0.014 & 0.193 $\pm$ 0.044 \\
PSRO w/ PRD & 0.667 $\pm$ 0.046 & 0.569 $\pm$ 0.099 & 0.531 $\pm$ 0.030 & 0.498 $\pm$ 0.059 & 0.479 $\pm$ 0.042 & 0.449 $\pm$ 0.064 & 0.404 $\pm$ 0.013 & 0.366 $\pm$ 0.013 \\
PSRO w/ AlphaRank & 0.459 $\pm$ 0.060 & 0.292 $\pm$ 0.039 & 0.221 $\pm$ 0.037 & 0.188 $\pm$ 0.004 & 0.170 $\pm$ 0.023 & 0.162 $\pm$ 0.065 & 0.190 $\pm$ 0.001 & 0.178 $\pm$ 0.054 \\
PSRO w/ Uniform & 0.500 $\pm$ 0.048 & 0.369 $\pm$ 0.003 & 0.327 $\pm$ 0.043 & 0.277 $\pm$ 0.035 & 0.282 $\pm$ 0.009 & 0.200 $\pm$ 0.044 & 0.254 $\pm$ 0.026 & 0.230 $\pm$ 0.001 \\
Anytime PSRO & 0.404 $\pm$ 0.048 & 0.223 $\pm$ 0.083 & 0.218 $\pm$ 0.083 & 0.223 $\pm$ 0.096 & 0.203 $\pm$ 0.060 & 0.183 $\pm$ 0.058 & 0.203 $\pm$ 0.056 & 0.191 $\pm$ 0.061 \\
NeuPL w/ Nash & 0.337 $\pm$ 0.035 & 0.298 $\pm$ 0.036 & 0.234 $\pm$ 0.007 & 0.225 $\pm$ 0.016 & 0.182 $\pm$ 0.027 & 0.162 $\pm$ 0.042 & 0.169 $\pm$ 0.006 & 0.142 $\pm$ 0.042 \\
NeuPL w/ PRD & 0.563 $\pm$ 0.043 & 0.500 $\pm$ 0.063 & 0.474 $\pm$ 0.025 & 0.443 $\pm$ 0.072 & 0.431 $\pm$ 0.053 & 0.372 $\pm$ 0.033 & 0.345 $\pm$ 0.018 & 0.334 $\pm$ 0.015 \\
NeuPL w/ AlphaRank & \textbf{0.244 $\pm$ 0.011} & 0.212 $\pm$ 0.011 & 0.209 $\pm$ 0.021 & 0.169 $\pm$ 0.009 & 0.183 $\pm$ 0.014 & 0.187 $\pm$ 0.021 & 0.146 $\pm$ 0.004 & 0.132 $\pm$ 0.041 \\
NeuPL w/ Uniform & 0.390 $\pm$ 0.078 & 0.301 $\pm$ 0.048 & 0.273 $\pm$ 0.023 & 0.251 $\pm$ 0.039 & 0.235 $\pm$ 0.002 & 0.215 $\pm$ 0.010 & 0.219 $\pm$ 0.022 & 0.170 $\pm$ 0.004 \\
PSD-PSRO w/ AlphaRank & 0.599 $\pm$ 0.010 & 0.349 $\pm$ 0.021 & 0.284 $\pm$ 0.009 & 0.251 $\pm$ 0.003 & 0.249 $\pm$ 0.029 & 0.220 $\pm$ 0.006 & 0.212 $\pm$ 0.003 & 0.160 $\pm$ 0.006 \\
PSD-PSRO w/ Nash & 0.670 $\pm$ 0.007 & 0.494 $\pm$ 0.026 & 0.396 $\pm$ 0.002 & 0.335 $\pm$ 0.058 & 0.306 $\pm$ 0.037 & 0.282 $\pm$ 0.044 & 0.258 $\pm$ 0.038 & 0.211 $\pm$ 0.028 \\
PSD-PSRO w/ PRD & 0.739 $\pm$ 0.054 & 0.677 $\pm$ 0.081 & 0.616 $\pm$ 0.114 & 0.562 $\pm$ 0.064 & 0.533 $\pm$ 0.061 & 0.437 $\pm$ 0.011 & 0.345 $\pm$ 0.013 & 0.309 $\pm$ 0.001 \\
PSD-PSRO w/ Uniform & 0.554 $\pm$ 0.048 & 0.341 $\pm$ 0.003 & 0.328 $\pm$ 0.002 & 0.281 $\pm$ 0.031 & 0.256 $\pm$ 0.019 & 0.239 $\pm$ 0.018 & 0.203 $\pm$ 0.016 & 0.149 $\pm$ 0.021 \\
\bottomrule
\end{tabular}%
}
\label{tab:goofspiel13_main}
\end{table*}

\subsection{RQ1: Comparison with PSRO using various MSSs}
\label{subsec:exp_q1}

Figure~\ref{fig:PSRO_result} and Table~\ref{tab:goofspiel13_main} compare Global PSRO to standard PSRO instantiated with different MSSs under matched environment-step budgets. Global PSRO achieves consistently lower PE on Liar's Dice, Leduc Poker, and all Goofspiel variants, indicating that its learned populations provide a better equilibrium approximation. On Kuhn Poker, restricted-game NE is competitive and can slightly outperform Global PSRO; this is expected in a small game, where restricted games quickly become close to the full game and leveraging additional global information offers limited marginal benefit.

\begin{figure}[htbp]
    \centering
    \begin{subfigure}{0.235\textwidth}
        \includegraphics[width=\linewidth]{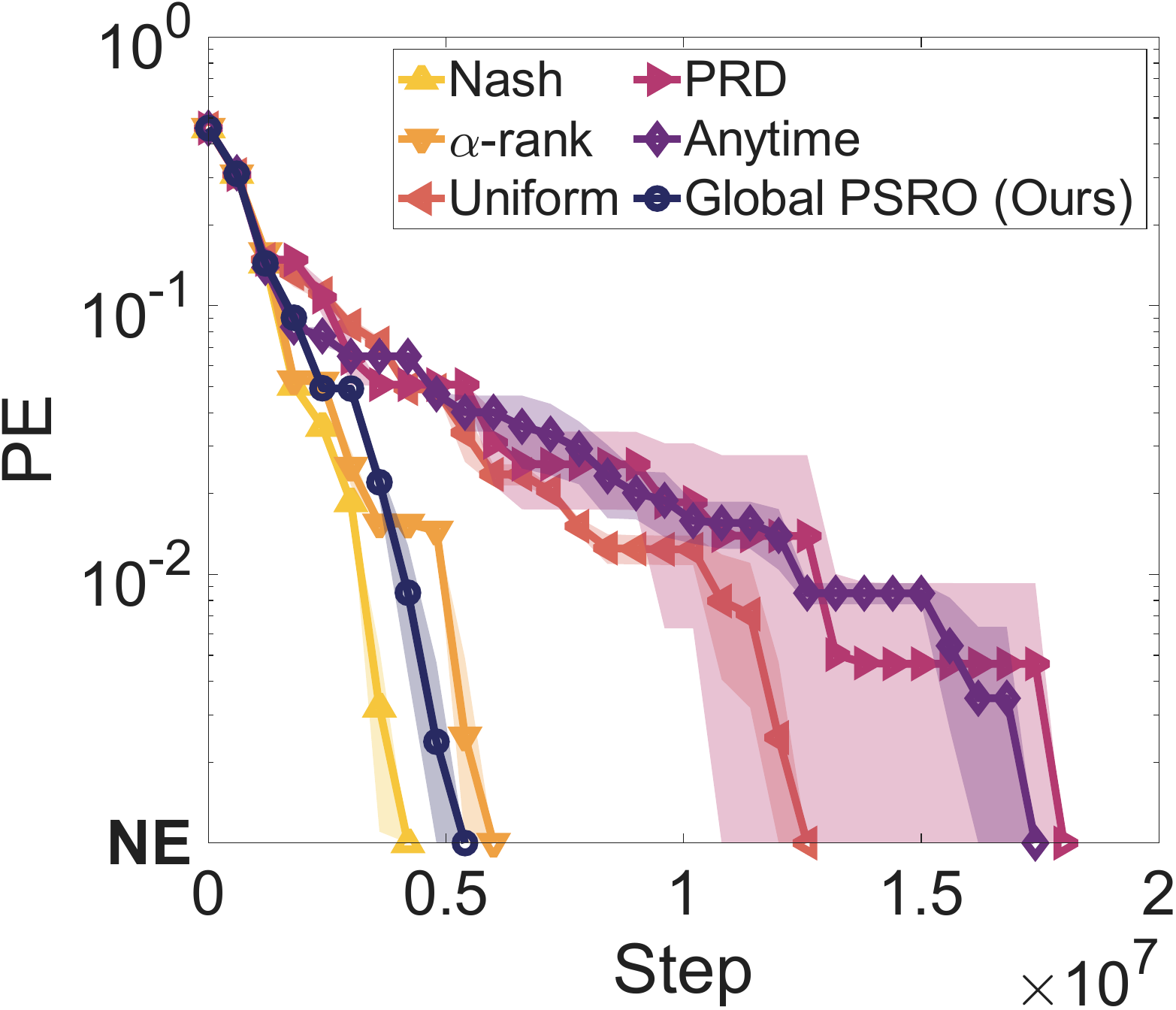}
        \caption{Kuhn Poker}
    \end{subfigure}
    \begin{subfigure}{0.235\textwidth}
        \includegraphics[width=\linewidth]{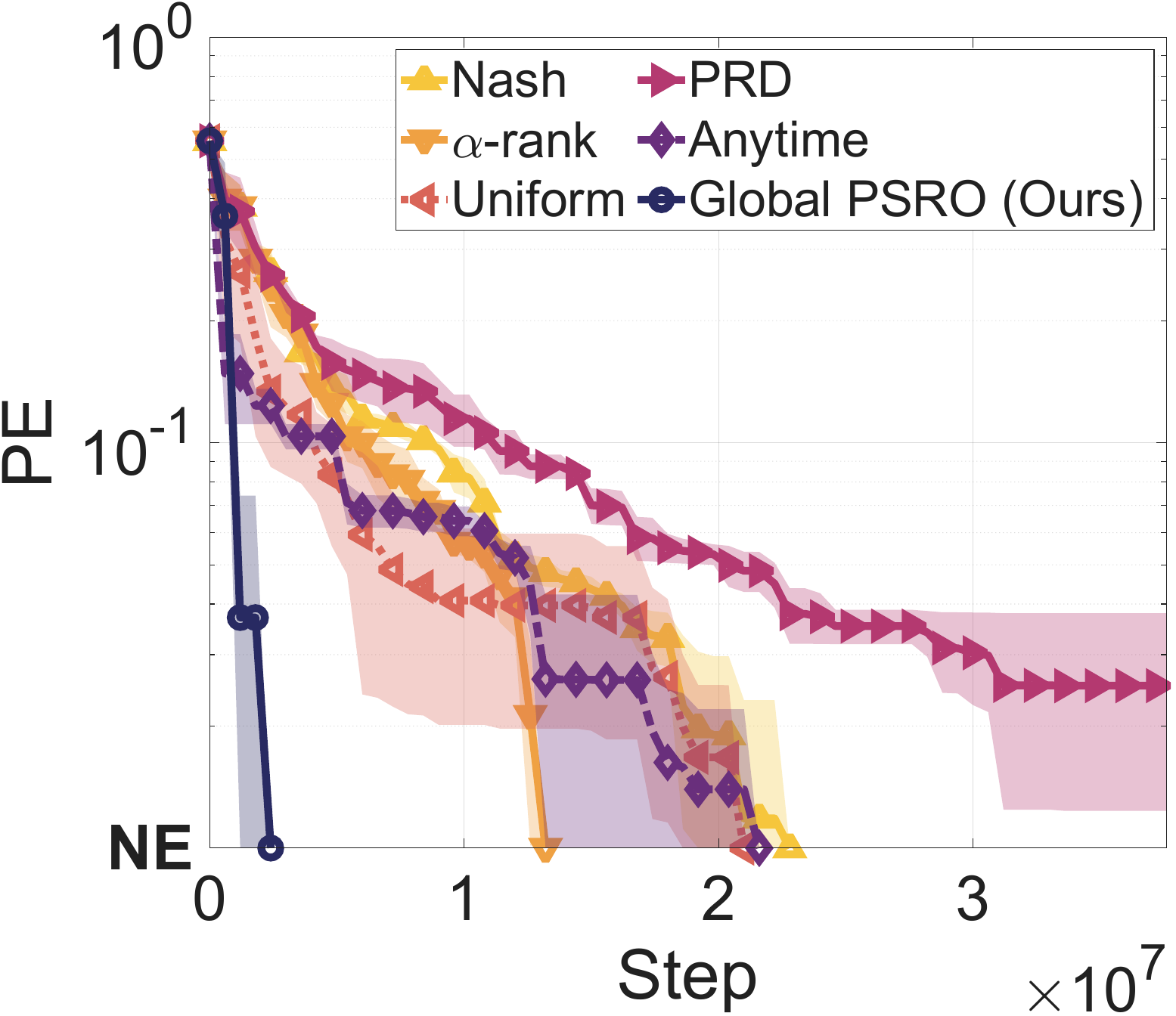}
        \caption{Liar's Dice}
    \end{subfigure}
    \begin{subfigure}{0.235\textwidth}
        \includegraphics[width=\linewidth]{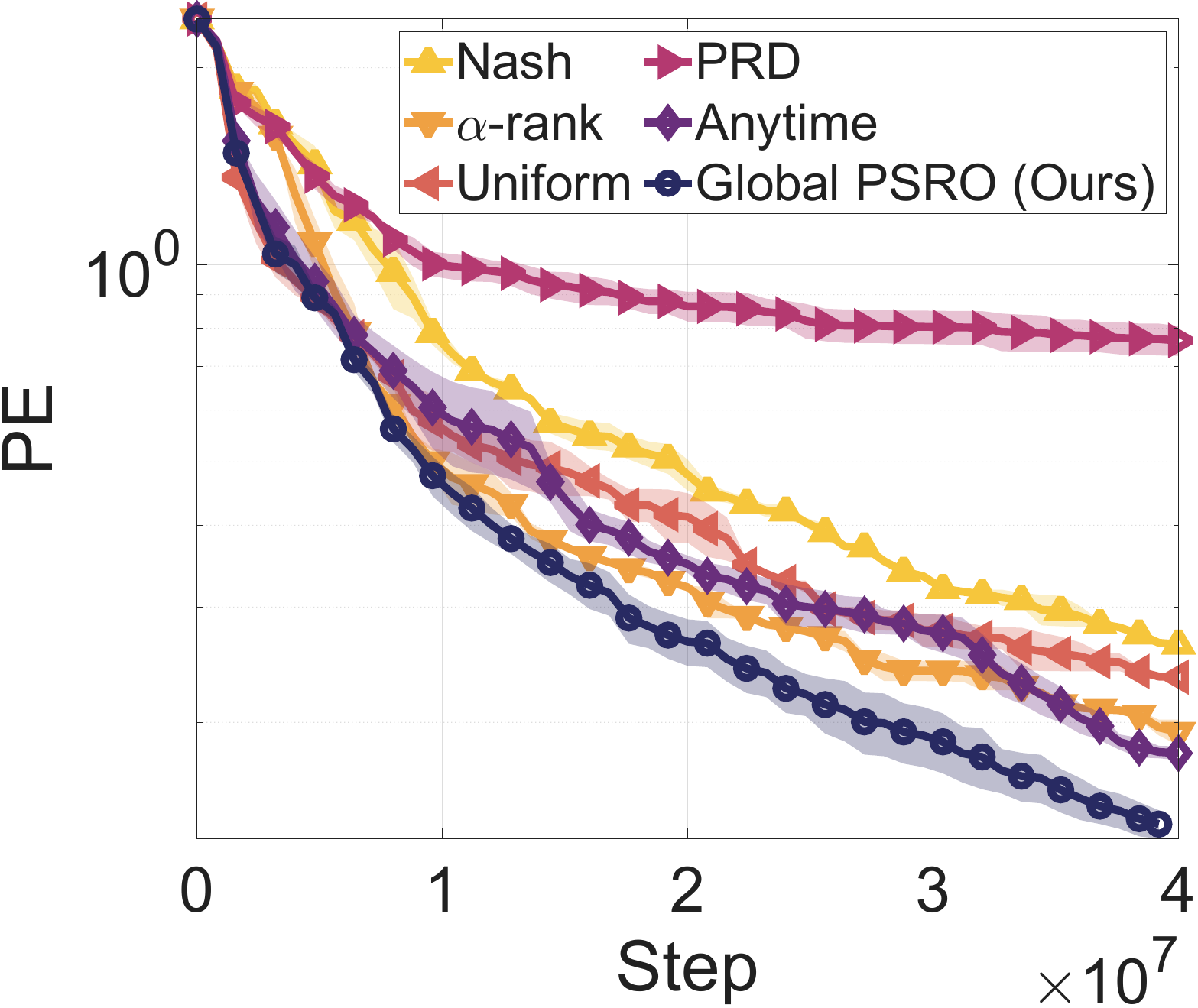}
        \caption{Leduc Poker}
    \end{subfigure}
    \begin{subfigure}{0.235\textwidth}
        \includegraphics[width=\linewidth]{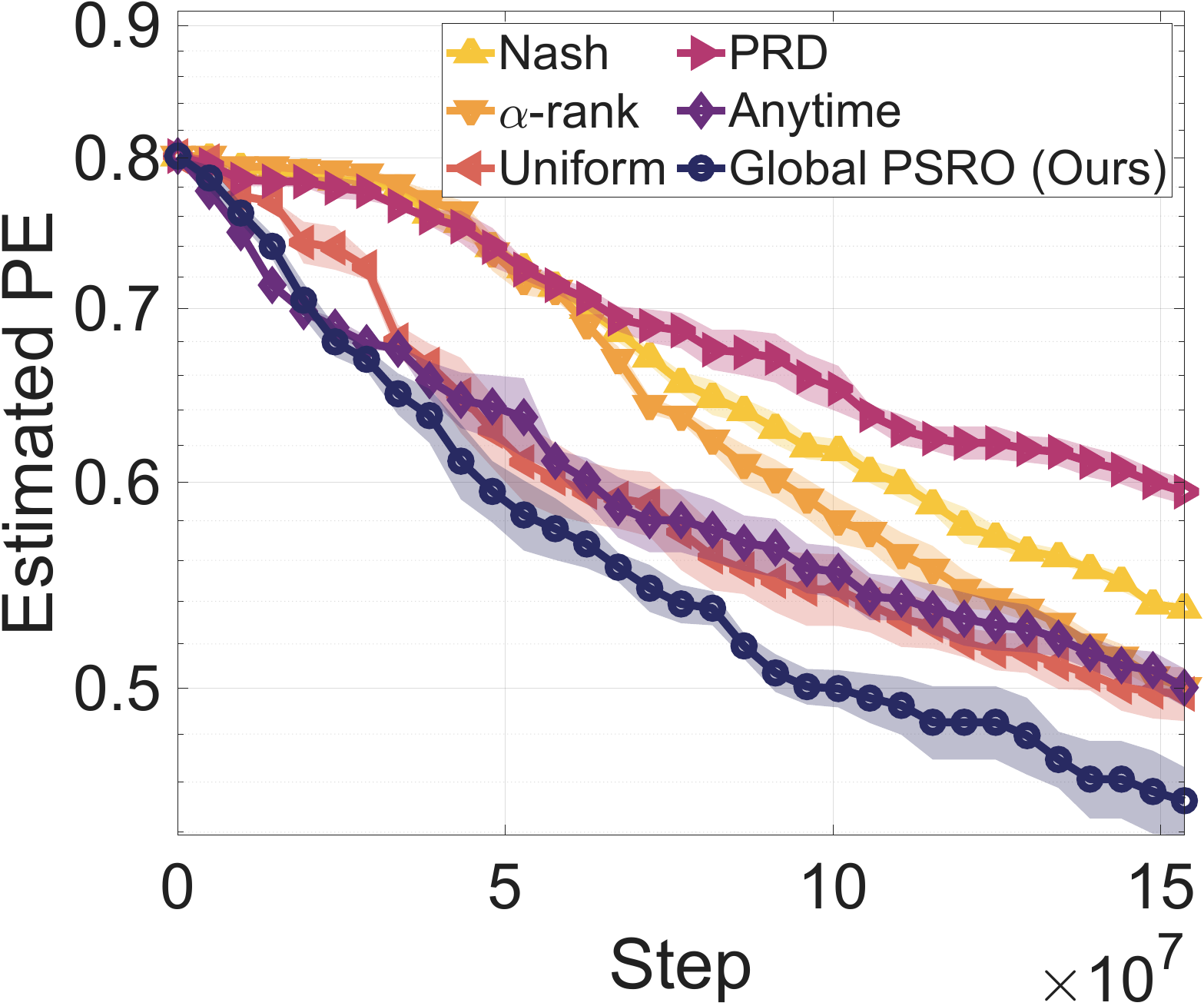}
        \caption{Goofspiel (5 cards)}
    \end{subfigure}
    \caption{Comparison between Global PSRO and PSRO with different MSSs. Legend entries correspond to the MSS used within PSRO.}
    \label{fig:PSRO_result}
\end{figure}

\subsection{RQ2: Comparison with diversity-driven PSRO}
\label{subsec:exp_q2}
Figure~\ref{fig:PSRO_variants_result} compares Global PSRO with the diversity-driven variant PSD-PSRO. Fig.~\ref{fig:PSRO_variants_result} and Table~\ref{tab:goofspiel13_main} report results on the largest games, including Leduc Poker and all Goofspiel variants; the remaining environments are deferred to Appendix~\ref{Apx:div_results}.
Under matched budgets, Global PSRO achieves lower PE across games, indicating that favoring diverse additions alone does not reliably produce effective population expansions.

\begin{figure}[htbp]
    \centering
    \begin{subfigure}{0.235\textwidth}
        \includegraphics[width=\linewidth]{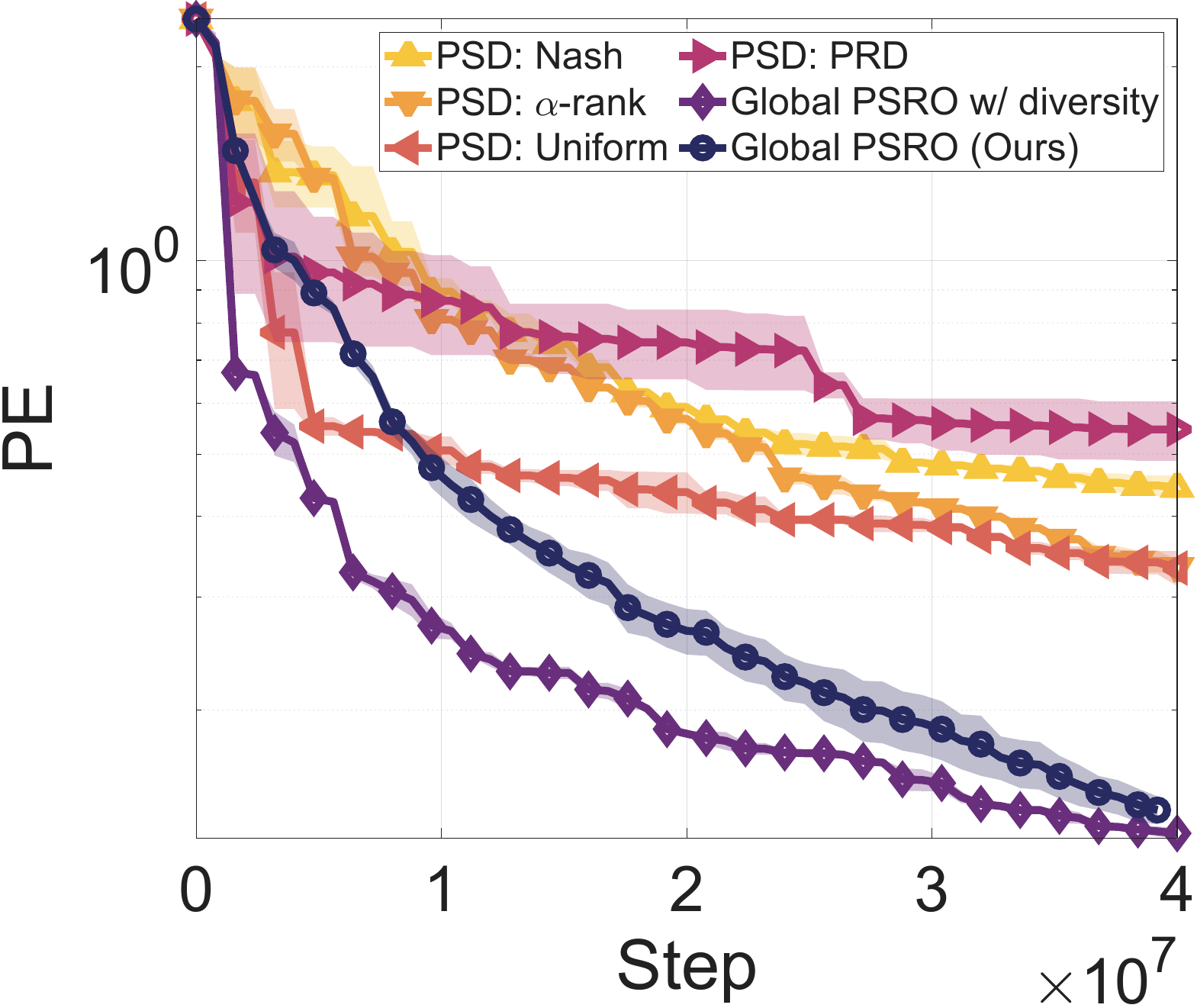}
        \caption{Leduc Poker}
    \end{subfigure}
    \begin{subfigure}{0.235\textwidth}
        \includegraphics[width=\linewidth]{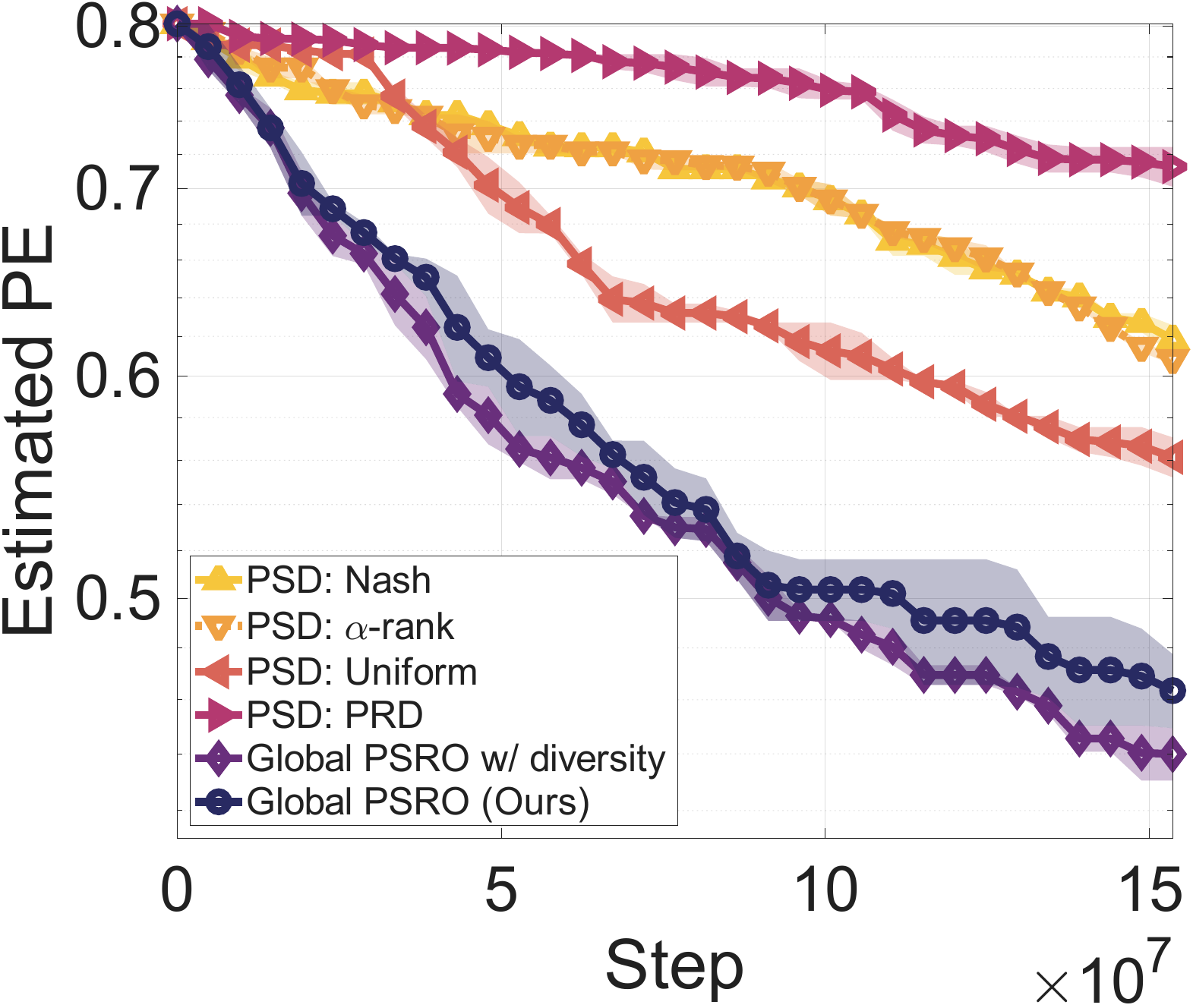}
        \caption{Goofspiel (5 cards)}
    \end{subfigure}
    \caption{Comparison with diversity-driven PSRO. ``Global PSRO w/ diversity'' augments the exploration phase with a diversity objective for candidate generation.}
    \label{fig:PSRO_variants_result}
\end{figure}

We then incorporate diversity into our exploration stage.
Specifically, the conditional explorer takes $(\sigma,\lambda)$ as input, where $\lambda \sim \mathrm{Uniform}[0,0.1]$ weights a diversity regularizer in the BR objective, generating candidates with varying diversity.
As shown in Fig.~\ref{fig:PSRO_variants_result}, this improves Global PSRO overall, with the largest gains in Leduc Poker and smaller or inconsistent gains in other games.

\begin{figure}[htbp]
    \centering
    \begin{subfigure}{0.235\textwidth}
        \includegraphics[width=\linewidth]{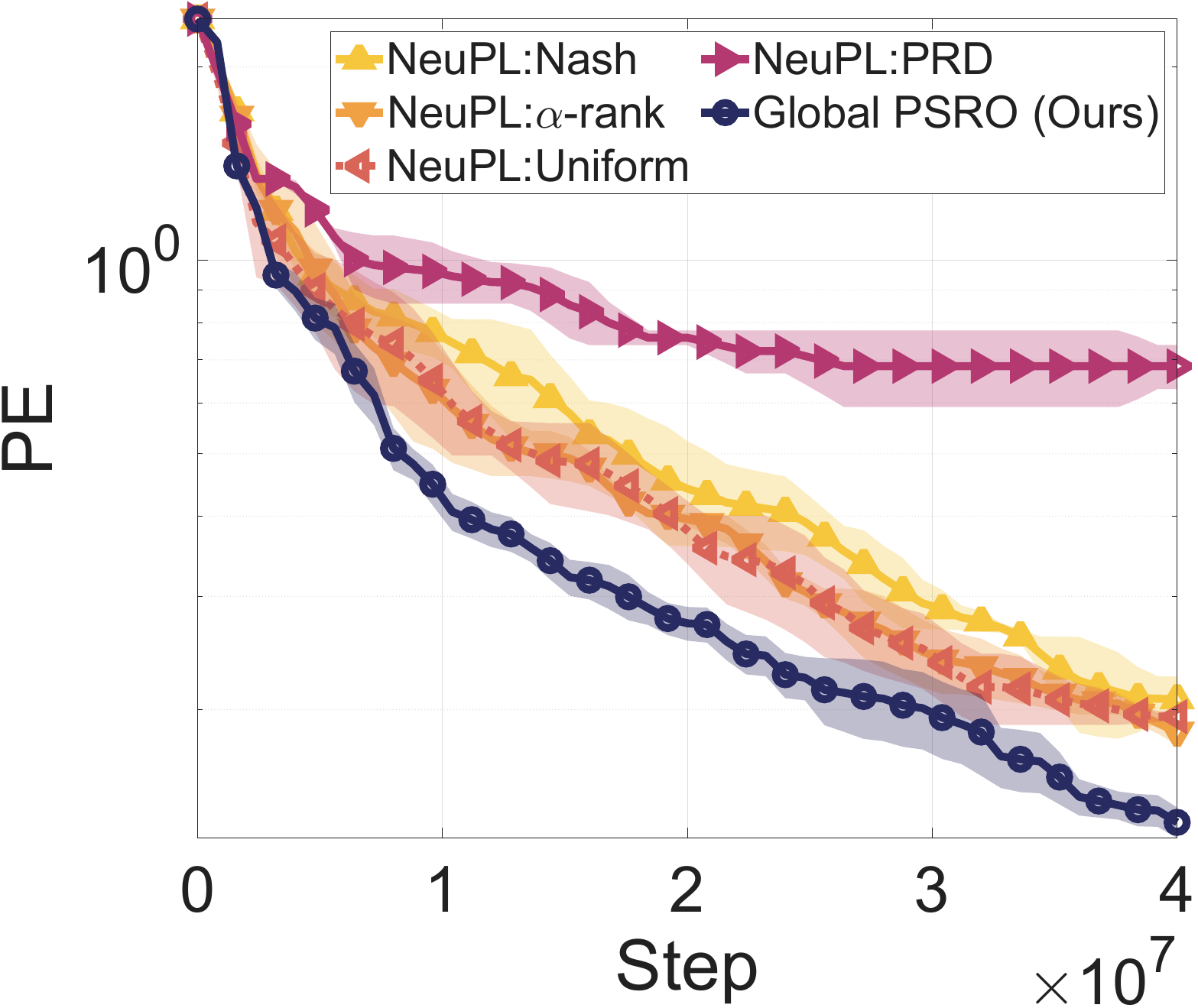}
        \caption{Leduc Poker}
    \end{subfigure}
    \begin{subfigure}{0.235\textwidth}
        \includegraphics[width=\linewidth]{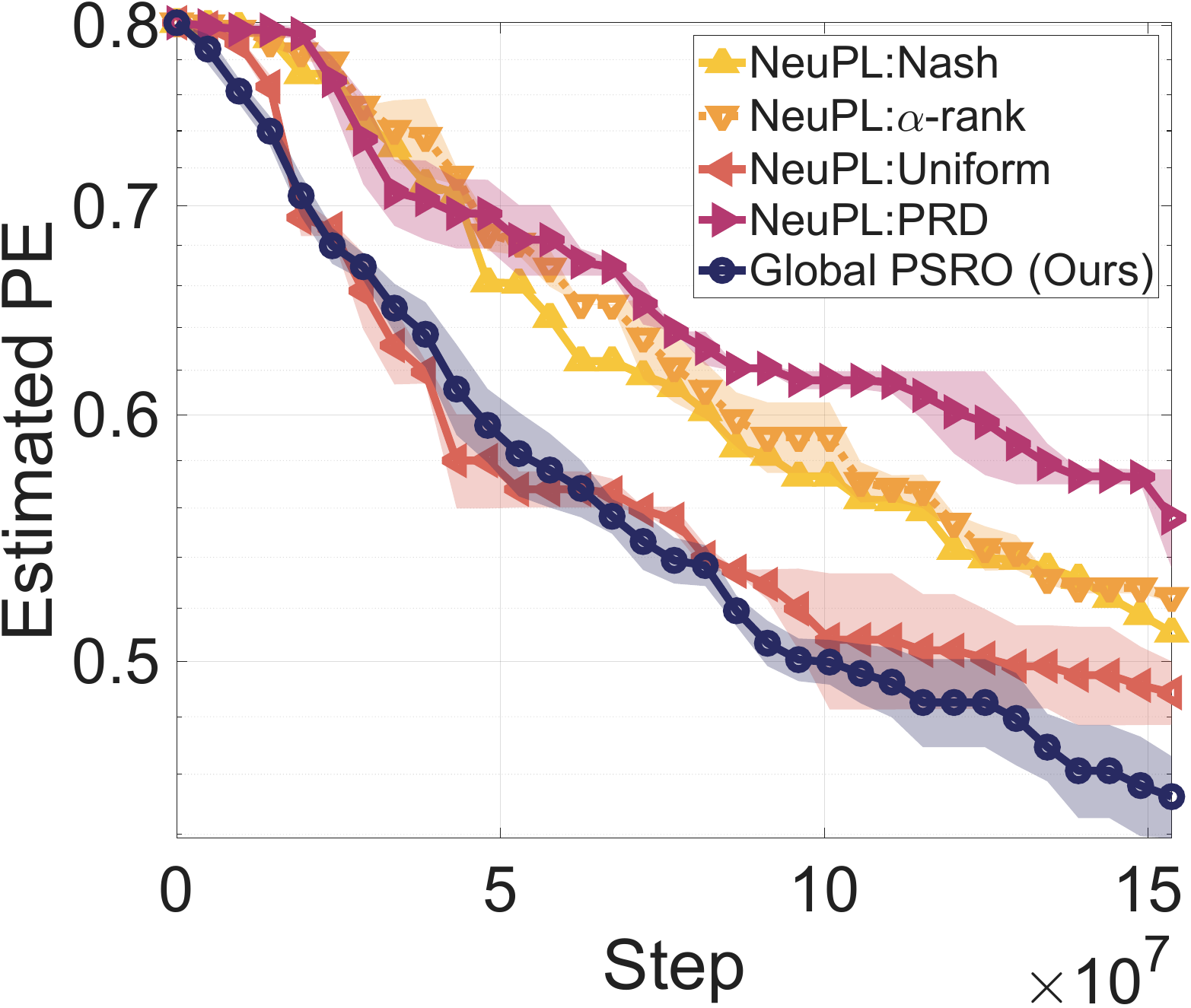}
        \caption{Goofspiel (5 cards)}
    \end{subfigure}
    \caption{Comparison with NeuPL.}
    \label{fig:neupl_result}
\end{figure}

\subsection{RQ3: Comparison with NeuPL} 
\label{subsec:exp_q3}

We compare against NeuPL as a representative PSRO-style baseline that also uses a conditional population model, isolating whether our gains come from the selection mechanism rather than parameter sharing alone. 
Fig.~\ref{fig:neupl_result} and Table~\ref{tab:goofspiel13_main} report results on the largest games, including Leduc Poker and all Goofspiel variants; the remaining environments are deferred to Appendix~\ref{Apx:ab-neupl}.

Across these comparisons, Global PSRO achieves lower final PE than all NeuPL variants.
This suggests that the improvement is not explained by conditional parameterization alone, but mainly comes from selecting population expansions using the post-expansion PE.

\subsection{RQ4: Ablation Studies}
\label{subsec:exp_ablation}

We ablate key components of the proposed exploration--evaluation pipeline. We consider:
(1) \emph{Exploitation only:} the candidate pool contains only the $\sigma^{\mathcal{M}}$ produced by the base RGB-MSS;
(2) \emph{Random selection:} generate the same candidate pool as Global PSRO but select a candidate uniformly at random;
(3) \emph{Without PE regularization:} rank candidates by the raw estimator $\widehat{\mathcal{PE}}_k=U(\beta_k,\rho_k)$ instead of the regularized score in Eq.~\eqref{eq:reg_pe};
(4) \emph{Exact PE evaluation:} replace the estimated PE used for candidate ranking with exact PE values (not counted toward the budget), providing an upper bound on selection quality; and
(5) \emph{Neighbor-search exploration:} replace uniform simplex sampling with local perturbations around $\sigma^{\mathcal{M}}$,
$\sigma_k=(1-\gamma)\sigma^{\mathcal{M}}+\gamma\tilde{\sigma}_k$ with $\tilde{\sigma}_k\sim\mathrm{Dirichlet}(\mathbf{1})$.

\begin{figure}[htbp]
    \centering
    \begin{subfigure}{0.235\textwidth}
        \includegraphics[width=\linewidth]{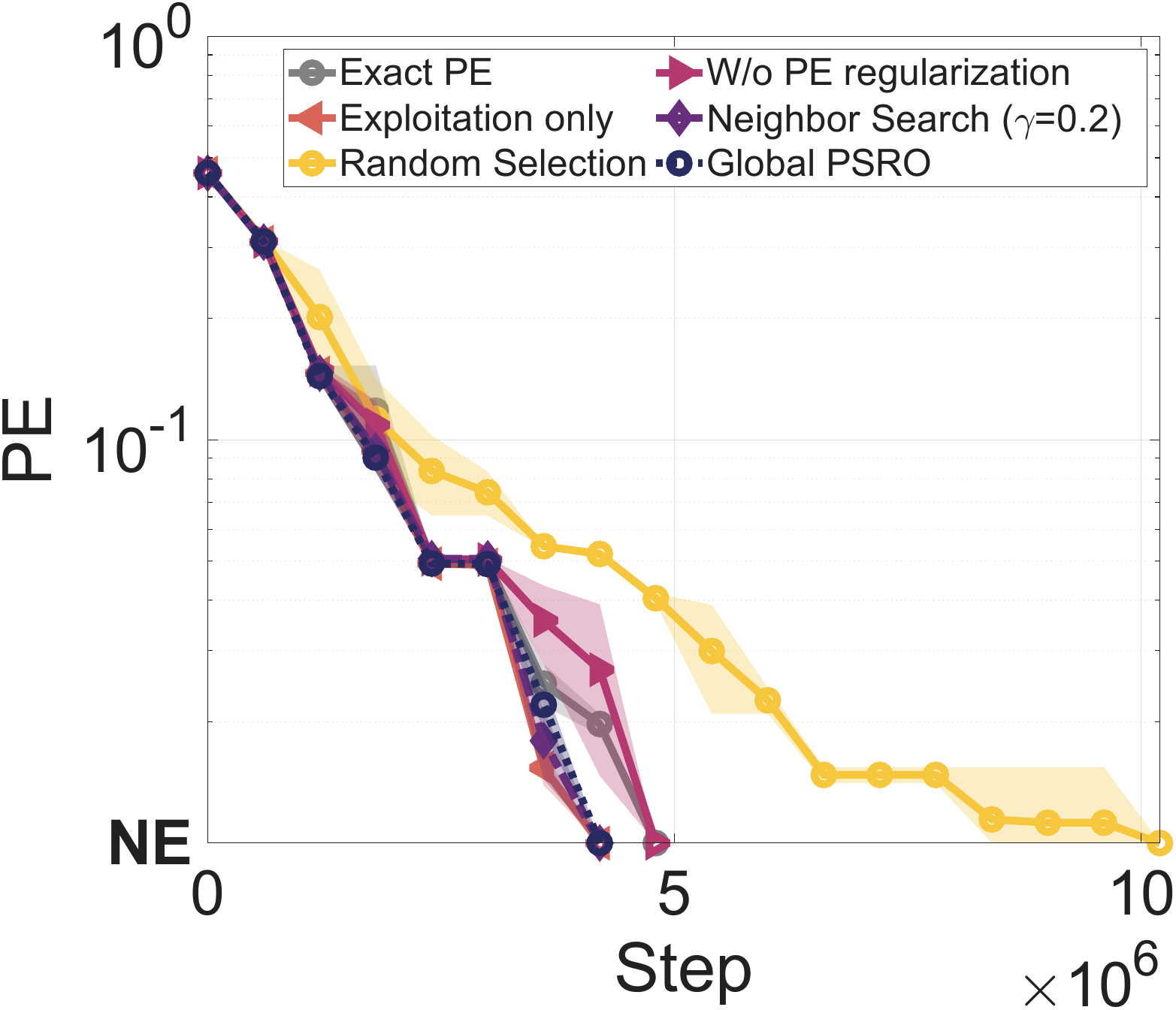}
        \caption{Kuhn Poker}
    \end{subfigure}
    \begin{subfigure}{0.235\textwidth}
        \includegraphics[width=\linewidth]{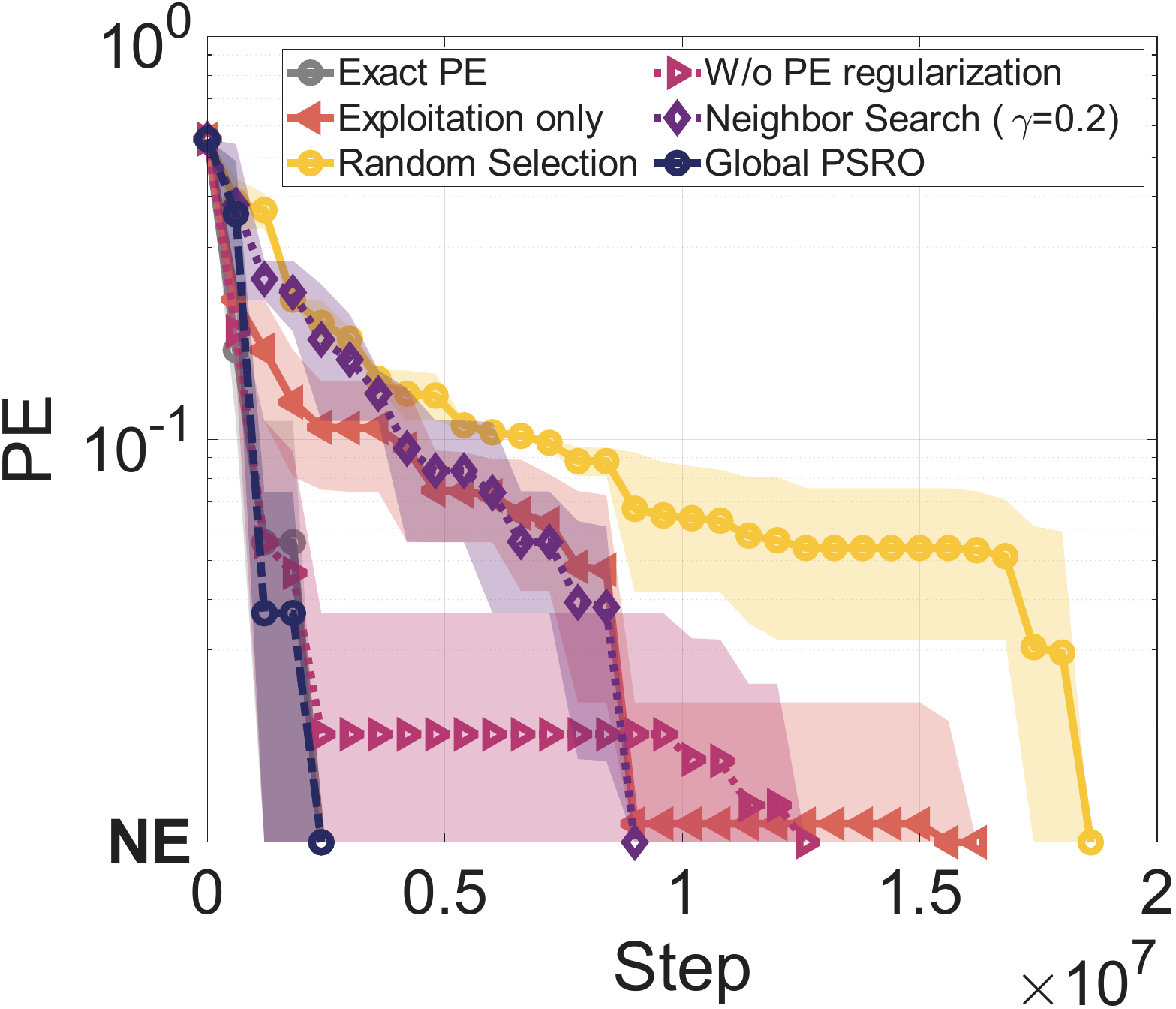}
        \caption{Liar's Dice}
    \end{subfigure}
    \begin{subfigure}{0.235\textwidth}
        \includegraphics[width=\linewidth]{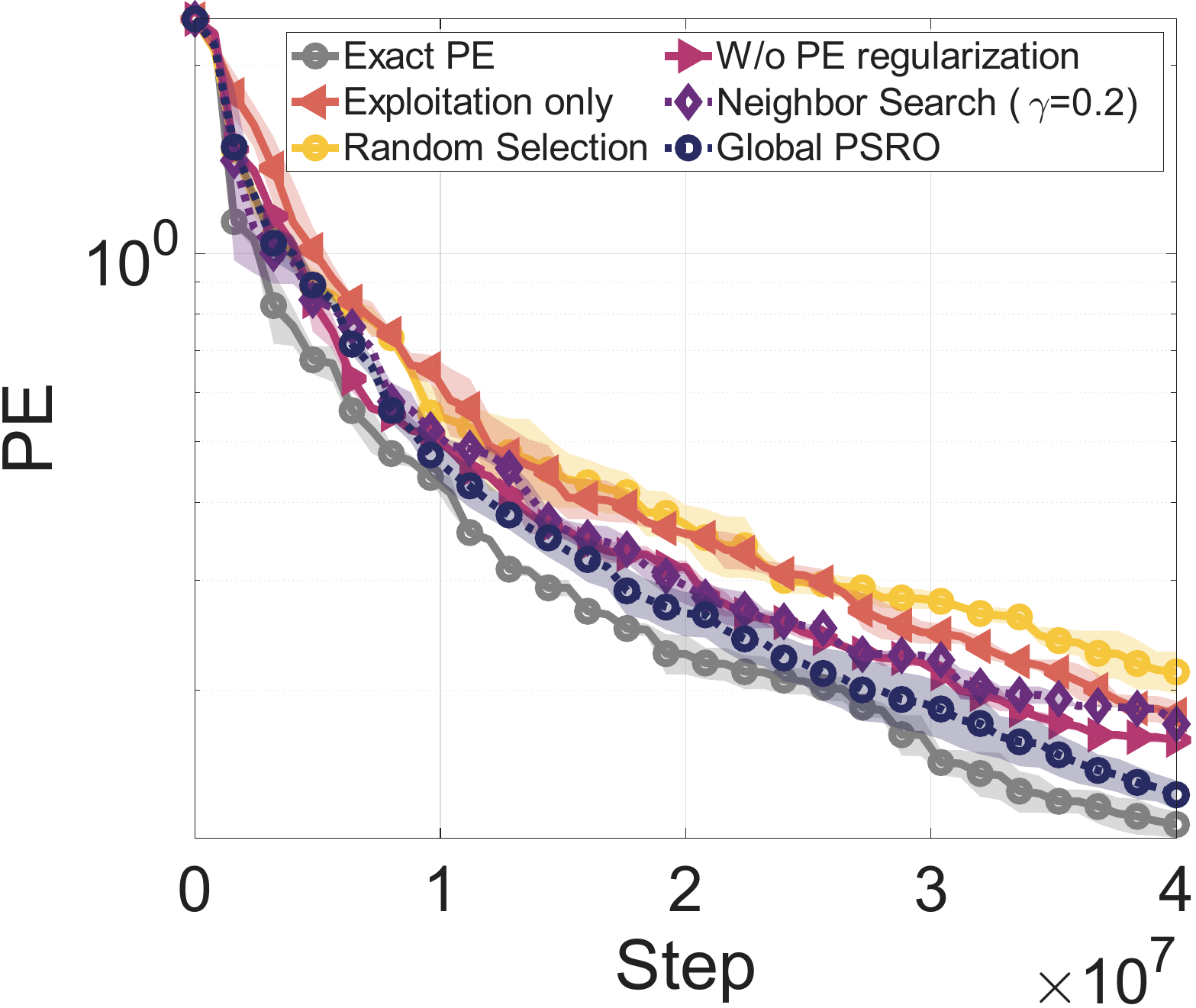}
        \caption{Leduc Poker}
    \end{subfigure}
    \begin{subfigure}{0.235\textwidth}
        \includegraphics[width=\linewidth]{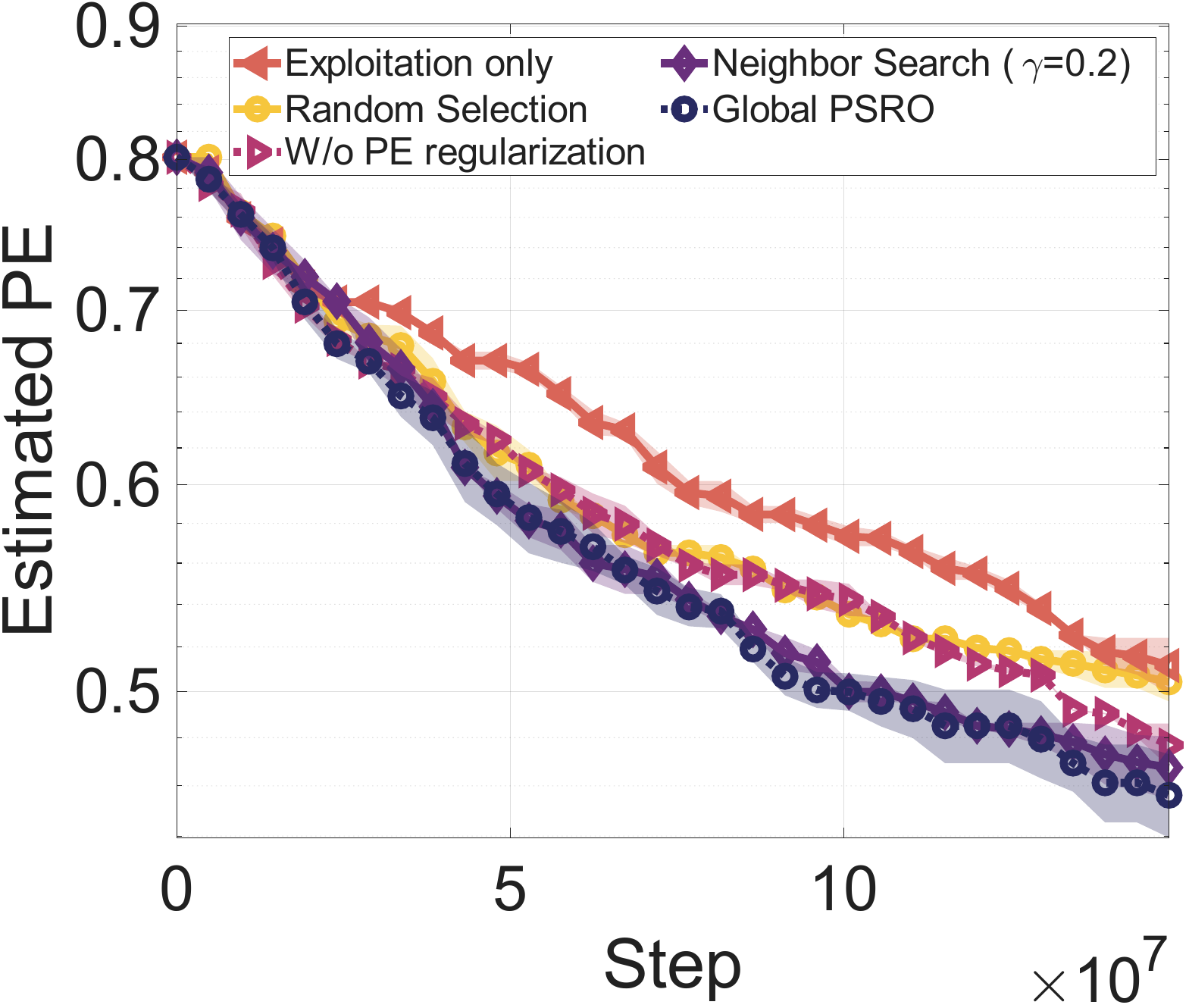}
        \caption{Goofspiel (5 cards)}
    \end{subfigure}
    \caption{Ablation studies.}
    \label{fig:ab_result}
\end{figure}

Figure~\ref{fig:ab_result} reports ablation results across Kuhn Poker, Liar's Dice, Leduc Poker, and Goofspiel with 5 cards. All ablations degrade performance, confirming that each component contributes to the overall gains. The largest drops come from \emph{exploitation only} and \emph{random selection}, as both effectively disable the mechanism that leverages global information. This supports our main claim that the dominant source of improvement is selecting expansions using global information. Using \emph{exact PE estimation} yields a modest additional gain, suggesting that our practical PE estimator is already informative.

\section{Conclusions and Discussions}
\label{sec_limits}

While PSRO-style methods have shown strong empirical performance, strategy population expansions driven solely by restricted-game payoffs may fail to efficiently improve population-level equilibrium approximation. To address this limitation, we propose an exploration–selection framework that uses global information to guide the expansion procedure by evaluating multiple candidates and selecting the strategy that minimizes post-expansion PE, together with its corresponding BR. We instantiate this framework as \emph{Global PSRO}, combining parameter sharing for scalable multi-candidate response generation with PE-based selection. Empirical results demonstrate improved equilibrium approximation over existing PSRO variants under matched interaction budgets, and extensive ablation studies further validate the effectiveness of the proposed framework.

Our discussion in this paper focuses on two-player zero-sum games. Extending the framework to multiplayer general-sum settings is non-trivial because PE no longer has the saddle-point structure in this setting, making it harder to compute. We leave extensions to broader game classes, together with scalable PE approximations or suitable alternative population-level metrics, for future work.

\section*{Impact Statement}

This paper presents work whose goal is to advance the field of Machine
Learning. There are many potential societal consequences of our work, none
which we feel must be specifically highlighted here.

\bibliography{example_paper}
\bibliographystyle{icml2026}

\newpage
\appendix
\onecolumn

\section{Theoretical Analysis} \label{Apx:Theorem}
\subsection{Proof of Theorem \ref{worst_game}}\label{Apx:worst_game}

To prove the theorem, we introduce a constructive procedure for filling in the payoff matrix. Fix the RGB-MSS $\mathcal{M}$, the initial strategy $\pi_0$, and the target size $N$. Our goal is to construct a symmetric zero-sum game on pure strategies $\Pi=\{\pi_0,\ldots,\pi_{N-1}\}$ that admits a pure-strategy NE, but for which PSRO with $\mathcal{M}$ cannot reach this equilibrium until the entire strategy set has been incorporated. The construction proceeds by observing the mixtures produced by $\mathcal{M}$ and appending strategies one by one. Each appended strategy is required to satisfy three properties: it is a BR to the current MSS mixture, it was not a BR to any earlier MSS mixture, and it has positive payoff against every previously constructed strategy. If these extensions can be carried out until all $N$ strategies have appeared, then the final strategy strictly beats every other pure strategy and therefore forms a pure-strategy NE in the resulting symmetric zero-sum game. If the construction cannot be continued, then the induced PSRO process has already stopped making effective progress toward any full-game NE, yielding the non-convergence alternative in Theorem~\ref{worst_game}.

The construction uses a simple feasibility argument based on Farkas' lemma.
\begin{lemma}[Farkas' lemma]
\label{lemma:Farkas}
Let $A\in\mathbb{R}^{m\times n}$ and $b\in\mathbb{R}^{m}$. Exactly one of the following systems is feasible:
\begin{enumerate}
    \item there exists $x\in\mathbb{R}^n$ such that $Ax\le b$ and $x\ge0$;
    \item there exists $y\ge0$ such that $A^\top y\ge0$ and $b^\top y<0$.
\end{enumerate}
\end{lemma}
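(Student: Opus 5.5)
The statement immediately above is Farkas' lemma, and I would prove it by the standard separating-hyperplane argument applied to a closed convex cone.

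\textbf{Step 1: both systems cannot hold at once.} Suppose $x\ge 0$ satisfies $Ax\le b$, and $y\ge0$ satisfies $A^\top y\ge 0$. Then
\[
b^\top y \;\ge\; (Ax)^\top y \;=\; x^\top (A^\top y) \;\ge\; 0 .
\]
The first inequality uses $y\ge 0$ together with $b-Ax\ge 0$. The last uses $x\ge0$ and $A^\top y\ge 0$. This contradicts $b^\top y<0$.

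\textbf{Step 2: reduce to the equality-form cone.} Now assume system 1 is infeasible. Introduce slack variables $s\ge 0$, so that system 1 becomes: $b\in C$, where
\[
C=\{Ax+s : x\ge0,\ s\ge0\}=\{[A\ I]\,z : z\ge 0\}.
\]
So $C$ is the conic hull of the finitely many columns of $[A\ I]$, and infeasibility of system 1 means $b\notin C$.

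\textbf{Step 3: separate $b$ from $C$.} The key fact needed is that a finitely generated cone is closed. I would prove this via Carathéodory's theorem for cones. Each point of $C$ is a nonnegative combination of a linearly independent subset of the columns. For each such subset, the cone it generates is the image of the closed orthant under an injective linear map, hence closed. Therefore $C$ is a finite union of closed sets and is closed. This step is the main technical obstacle; it is the only place where finiteness is genuinely used.

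With $C$ closed and convex, let $p$ be the Euclidean projection of $b$ onto $C$ and set $y=p-b\neq 0$. The projection optimality condition gives $(c-p)^\top (b-p)\le 0$ for all $c\in C$. Because $C$ is a cone, taking $c=0$ and $c=2p$ yields $p^\top(b-p)=0$. Hence $c^\top y\ge 0$ for all $c\in C$, and
\[
b^\top y \;=\; p^\top y-\|y\|^2 \;=\; -\|y\|^2 \;<\; 0 .
\]
Here $p^\top y=0$ by the previous identity.

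\textbf{Step 4: read off system 2.} Plugging the generators of $C$ into $c^\top y\ge 0$ recovers the required conditions. The columns of $A$ give $A^\top y\ge 0$, and the columns of $I$ give $y\ge 0$. Together with $b^\top y<0$, this is exactly system 2.

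Step 1 shows at most one system is feasible, and Steps 2–4 show that if system 1 fails then system 2 holds. This establishes exactly one.
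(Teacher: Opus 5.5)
Your proof is correct and complete. Step 1 is the usual weak-duality computation $b^\top y\ge (Ax)^\top y=x^\top A^\top y\ge 0$. Steps 2--4 are the standard separation argument, with one technical point handled properly. You rewrite system 1 as $b\in C=\{[A\ I]z: z\ge 0\}$ and get closedness of this finitely generated cone from conic Carath\'eodory: images of orthants under injective linear maps are closed, and $C$ is a finite union of such images. You then project $b$ onto $C$, and the cone property (testing $c=0$ and $c=2p$) gives $p^\top(b-p)=0$. Hence $y=p-b$ satisfies $c^\top y\ge 0$ on $C$ and $b^\top y=-\|y\|^2<0$. Evaluating at the columns of $A$ and of $I$ yields exactly $A^\top y\ge 0$ and $y\ge 0$.

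The paper gives no proof of this lemma at all. It quotes Farkas' lemma as a classical fact, so there is no competing route to compare with; yours is the standard textbook argument and is self-contained.

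One remark on how the lemma is used in the paper. Wherever the reduction is actually carried out, the right-hand side of the system in $x$ is componentwise nonnegative. This covers Eq.~\ref{final_eq_x_new} and the analogous systems in the proof of Theorem~\ref{better_MSS}. In those cases $x=0$ is already feasible, so the nontrivial direction of your argument (Steps 2--4, closedness plus separation) is never genuinely needed for the paper's purposes.
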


We first introduce the notation used in the construction. At iteration $k$, let $\Pi_k^r$ denote the current restricted population, and let $U_k$ be its restricted payoff matrix. The RGB-MSS outputs a probability vector
\[
p_k=\mathcal{M}(U_k),
\]
which represents a mixed strategy over $\Pi_k^r$. Since PSRO may reintroduce a pure strategy that is already present, the restricted population may contain duplicate copies of the same underlying strategy. We therefore also use an effective restricted population $\Pi_k^e$, obtained by identifying such duplicates. We write
\[
\Pi_k^e=\{\pi_0,\ldots,\pi_{n_k-1}\}
\]
for this effective population, $U_k^e$ for the payoff matrix on $\Pi_k^e$, and $p_k^e\in\Delta^{n_k-1}$ for the distribution induced by $p_k$ after aggregating the probabilities assigned to identical pure strategies. Finally, $p_k^E$ denotes a Nash equilibrium strategy of the effective restricted game $U_k^e$.

We initialize the construction by prescribing the first response. In the first PSRO iteration, the restricted population contains only $\pi_0$, so the MSS output is necessarily $p_1=[1]$. We set $U(\pi_1,\pi_0)=1$ and, by symmetry and zero-sum structure, $U(\pi_0,\pi_1)=-1$, and take $\pi_1$ as the first BR introduced by PSRO.

Now consider an arbitrary later iteration $k$. For any earlier mixture $p_t^e$ with dimension smaller than $n_k$, we append zeros so that it is evaluated in the current effective game $U_k^e$. For each previous mixture $p_t^e$ with $t<k$, define its current BR value as
\[
r_t^{(k)}=\max_{1\le i\le n_k}(U_k^e p_t^e)_i,
\]
and set
\[
\epsilon_k=\min_{1\le t<k} r_t^{(k)}.
\]
Thus $\epsilon_k$ is the smallest exploitation level, in the current effective game, among all MSS mixtures generated in previous rounds. Separately, define the current MSS mixture's BR value as
\[
e_k=\max_{1\le i\le n_k}(U_k^e p_k^e)_i.
\]
The comparison between $e_k$ and $\epsilon_k$ determines the next step of the construction: if $e_k\ge\epsilon_k$, an existing strategy already exploits the current MSS mixture at least as much as the protected threshold, so we can use an existing BR and do not need to introduce a new effective strategy; if $e_k<\epsilon_k$, we will construct a new strategy whose payoff vector preserves all previous BR relations while making it a BR to $p_k^e$.

We prove this formally by induction using two statements.
\begin{itemize}
    \item $Q_k$: $\epsilon_k>0$.
    \item $P_k$: If $e_k<\epsilon_k$, then there exists a new strategy $\pi_{n_k}$ with payoff vector
    \[
    u_k^e=(U(\pi_{n_k},\pi_0),\ldots,U(\pi_{n_k},\pi_{n_k-1}))^\top>0
    \]
    and a margin $\delta'>0$ satisfying
    \begin{equation}
    \label{final_eq}
    \left[
    \begin{array}{c}
    (p_1^e)^\top\\
    \vdots\\
    (p_{k-1}^e)^\top\\
    -(p_k^e)^\top\\
    -(p_k^E)^\top
    \end{array}
    \right]u_k^e
    \le
    \left[
    \begin{array}{c}
    r_1^{(k)}\\
    \vdots\\
    r_{k-1}^{(k)}\\
    -e_k\\
    0
    \end{array}
    \right]
    -\delta'\mathbf{1}.
    \end{equation}
\end{itemize}
The first $k-1$ rows of Eq.~\ref{final_eq} ensure that the new strategy is not a BR to any earlier MSS mixture. The row involving $-p_k^e$ ensures that it strictly improves on the current best payoff $e_k$, so it is a BR to the current MSS mixture. The final row ensures that the current restricted-game equilibrium is exploitable by the new strategy, so the current restricted population cannot already contain a full-game NE.

We now show $Q_k\Rightarrow P_k$. Assume $\epsilon_k>0$ and $e_k<\epsilon_k$. Write
\[
u_k^e=x+\delta\mathbf{1}, \qquad x\ge0,
\]
where
\[
\delta=\frac{\epsilon_k+e_k}{2},
\qquad
\delta'=\frac{\epsilon_k-e_k}{4}.
\]
Substituting $u_k^e=x+\delta\mathbf{1}$ into Eq.~\ref{final_eq}, it is enough to find $x\ge0$ such that
\begin{equation}
\label{final_eq_x_new}
\left[
\begin{array}{c}
(p_1^e)^\top\\
\vdots\\
(p_{k-1}^e)^\top\\
-(p_k^e)^\top\\
-(p_k^E)^\top
\end{array}
\right]x
\le
\left[
\begin{array}{c}
r_1^{(k)}-(\delta+\delta')\\
\vdots\\
r_{k-1}^{(k)}-(\delta+\delta')\\
-e_k+(\delta-\delta')\\
\delta-\delta'
\end{array}
\right].
\end{equation}
For every $t<k$, the definition of $\epsilon_k$ gives $r_t^{(k)}\ge\epsilon_k$. Hence
\[
r_t^{(k)}-(\delta+\delta')\ge \epsilon_k-\frac{3\epsilon_k+e_k}{4}>0.
\]
Moreover,
\[
-e_k+(\delta-\delta')=\frac{\epsilon_k-e_k}{4}>0,
\qquad
\delta-\delta'=\frac{\epsilon_k+3e_k}{4}\ge0.
\]
Thus the right-hand side of Eq.~\ref{final_eq_x_new} is nonnegative, and the system is feasible; equivalently, the infeasibility certificate in Lemma~\ref{lemma:Farkas} cannot hold. Taking any feasible $x\ge0$ gives $u_k^e=x+\delta\mathbf{1}>0$ satisfying Eq.~\ref{final_eq}, proving $P_k$.

It remains to show $P_k\Rightarrow Q_{k+1}$. There are two cases. If $e_k\ge\epsilon_k$, the current MSS mixture is already exploitable by an existing strategy at least up to the protected threshold. We therefore choose an existing BR and introduce no new effective strategy, so $\epsilon_{k+1}=\epsilon_k>0$. If $e_k<\epsilon_k$, then by $P_k$ we append a new strategy with $u_k^e>0$. The updated threshold satisfies
\[
\epsilon_{k+1}=\min\{\epsilon_k,(u_k^e)^\top p_k^e\}>0,
\]
because both terms are positive. This completes the induction.

Finally, if the extension case $e_k<\epsilon_k$ occurs enough times to construct all $N$ effective strategies, then the last strategy $\pi_{N-1}$ has strictly positive payoff against every earlier strategy. Since the game is symmetric zero-sum, every earlier strategy has strictly negative payoff against $\pi_{N-1}$, while $U(\pi_{N-1},\pi_{N-1})=0$. Hence $(\pi_{N-1},\pi_{N-1})$ is a pure-strategy NE of the full game. By construction, this equilibrium strategy is unavailable until all pure strategies have been incorporated, so PSRO requires at least $N-1$ iterations to reach equilibrium.

If, on the other hand, the effective population cannot be extended for sufficiently many iterations under the mixtures produced by $\mathcal{M}$, then the induced PSRO process never reaches a restricted population containing a full-game NE. To complete the payoff matrix in this case, we continue the construction using the Nash equilibrium of the current restricted game as the surrogate MSS output for the remaining steps. This fills all unspecified payoff entries consistently with the symmetric zero-sum structure, while the original PSRO process driven by $\mathcal{M}$ has already failed to converge. Therefore one of the two alternatives in Theorem~\ref{worst_game} must hold.

\subsection{Proof of Theorem \ref{better_MSS}}\label{Apx:better_MSS}
\paragraph{Roadmap.}
The proof proceeds in two parts. Part One proves the special case $S=1$: we construct a game in which the target RGB-MSS $\mathcal{M}$ still follows the worst-case path, but an instance-specific MSS $\mathcal{M}'$ reaches the pure-strategy NE in three iterations. Part Two extends this construction by replacing the pure equilibrium strategy with an $S$-strategy full-support equilibrium block from a symmetric zero-sum subgame. The strategies in this block have positive payoff against all outside strategies, and their payoffs against outside strategies are chosen to closely mimic the original pure equilibrium strategy. Hence the block remains the equilibrium support of the full game, while the target RGB-MSS still introduces the outside strategies first and only reaches equilibrium after adding the support strategies. At the same time, these $S$ support strategies are exposed as BRs from the three-strategy restricted game, so the shortcut MSS can reach equilibrium within $S+2$ iterations.

\paragraph{Code implementation of the following constructive proof.}
A concrete version of the construction is provided in the accompanying code.

\paragraph{Part One: Pure-equilibrium shortcut construction.}
Part One is the pure-equilibrium case $S=1$. We start from the adversarial construction in Theorem~\ref{worst_game} and focus on the first time when the effective restricted population contains three strategies and the target RGB-MSS produces a mixture that must introduce a new effective response. More precisely, let this iteration be $k$, let the three-strategy core be
\[
\Pi_3=\{\pi_0,\pi_1,\pi_2\},
\]
and suppose the MSS mixture satisfies $e_k<\epsilon_k$ in the notation of Theorem~\ref{worst_game}. We then construct the next strategy, denoted by $\pi_3$, with additional constraints. The inherited constraints keep the target RGB-MSS on the same worst-case path, while the new constraints make $\pi_3$ useful for a shortcut MSS defined over the three-strategy core.

Let $u_k^e$ be the payoff vector of $\pi_3$ against the current effective population. As in Theorem~\ref{worst_game}, the first constraints require that $\pi_3$ is not a BR to any earlier MSS mixture, is a BR to the current MSS mixture, and exploits the restricted-game equilibrium. We add one more block of inequalities that forces the payoff of $\pi_3$ against one core strategy to be strictly smaller than its payoff against the other two core strategies. Choose an index $1<i^*\le3$ such that some other coordinate of $p_k^e$ is at least as large as $(p_k^e)_{i^*}$, and require
\[
U(\pi_3,\pi_{i^*-1})<U(\pi_3,\pi_{i-1}),\qquad i\neq i^*.
\]
Equivalently, define
\[
B_{i^*}=
\begin{bmatrix}
- I_{i^*-1} & \mathbf{1}_{i^*-1} & 0\\
0 & \mathbf{1}_{n_k-i^*} & - I_{n_k-i^*}
\end{bmatrix}.
\]
For any payoff vector $u$, the inequalities $B_{i^*}u\le -\epsilon'\mathbf{1}$ mean that the $i^*$-th coordinate of $u$ is smaller than every other coordinate by margin at least $\epsilon'$. We impose
\[
\begin{bmatrix}
(p_1^e)^\top\\
\vdots\\
(p_{k-1}^e)^\top\\
-(p_k^e)^\top\\
-(p_k^E)^\top\\
B_{i^*}
\end{bmatrix}u_k^e
\le
\begin{bmatrix}
r_1^{(k)}\\
\vdots\\
r_{k-1}^{(k)}\\
-e_k\\
0\\
\mathbf{0}_{n_k-1}
\end{bmatrix}
-
\begin{bmatrix}
\delta'\mathbf{1}_{n_k+1}\\
\epsilon'\mathbf{1}_{n_k-1}
\end{bmatrix},
\]
for some margins $\delta',\epsilon'>0$. This is the same feasibility problem as in Theorem~\ref{worst_game}, with the extra ordering block $B_{i^*}$. We now spell out the feasibility argument.

Write
\[
u_k^e=x+\delta b_{i^*}^{\alpha},\qquad x\ge0,
\]
where $b_{i^*}^{\alpha}$ is the all-one vector except that its $i^*$-th coordinate is $\alpha<1$. Substituting this expression and moving the fixed term $\delta b_{i^*}^{\alpha}$ to the right-hand side gives the following sufficient system for $x$:
\[
\begin{bmatrix}
(p_1^e)^\top\\
\vdots\\
(p_{k-1}^e)^\top\\
-(p_k^e)^\top\\
-(p_k^E)^\top\\
B_{i^*}
\end{bmatrix}x
\le
\begin{bmatrix}
r_1^{(k)}-\delta\langle p_1^e,b_{i^*}^{\alpha}\rangle-\delta'\\
\vdots\\
r_{k-1}^{(k)}-\delta\langle p_{k-1}^e,b_{i^*}^{\alpha}\rangle-\delta'\\
-e_k+\delta\langle p_k^e,b_{i^*}^{\alpha}\rangle-\delta'\\
\delta\langle p_k^E,b_{i^*}^{\alpha}\rangle-\delta'\\
\bigl((1-\alpha)\delta-\epsilon'\bigr)\mathbf{1}_{n_k-1}
\end{bmatrix}.
\]
The last block follows from $B_{i^*}b_{i^*}^{\alpha}=-(1-\alpha)\mathbf{1}_{n_k-1}$.

Choose
\[
\delta=\frac{\epsilon_k+e_k}{2},\qquad
\delta'=\frac{\epsilon_k-e_k}{4},\qquad
\alpha=1-\frac{\epsilon_k-e_k}{4(\epsilon_k+e_k)},\qquad
\epsilon'=\frac{\epsilon_k-e_k}{16}.
\]
We check that every block on the right-hand side is nonnegative. For each previous target-path mixture, $r_t^{(k)}\ge\epsilon_k$ by the definition of $\epsilon_k$, and $\langle p_t^e,b_{i^*}^{\alpha}\rangle\le1$, so
\[
r_t^{(k)}-\delta\langle p_t^e,b_{i^*}^{\alpha}\rangle-\delta'
\ge
\epsilon_k-\delta-\delta'
=
\frac{\epsilon_k-e_k}{4}>0.
\]
For the current mixture, the choice of $i^*$ implies $(p_k^e)_{i^*}\le 1/2$, because some other core coordinate is at least as large. Hence
\[
\langle p_k^e,b_{i^*}^{\alpha}\rangle
=1-(1-\alpha)(p_k^e)_{i^*}
\ge 1-\frac{1-\alpha}{2},
\]
and therefore
\[
-e_k+\delta\langle p_k^e,b_{i^*}^{\alpha}\rangle-\delta'
\ge
\frac{3}{16}(\epsilon_k-e_k)>0.
\]
For the restricted-game equilibrium vector $p_k^E$, all coordinates of $b_{i^*}^{\alpha}$ are at least $\alpha$, so
\[
\delta\langle p_k^E,b_{i^*}^{\alpha}\rangle-\delta'
\ge
\delta\alpha-\delta'
=
\frac{\epsilon_k+7e_k}{8}\ge0.
\]
Finally,
\[
(1-\alpha)\delta-\epsilon'
=
\frac{\epsilon_k-e_k}{16}>0.
\]
Thus the induced right-hand side is componentwise nonnegative. By the same Farkas-lemma argument used in Theorem~\ref{worst_game}, there exists $x\ge0$ satisfying the sufficient system. Consequently $u_k^e=x+\delta b_{i^*}^{\alpha}>0$ satisfies all inherited constraints and the additional ordering constraint, and the target RGB-MSS path is not changed.

We now start constructing the shortcut probability vector on the three-strategy core. The goal is to find a distribution over $\Pi_3$ that can later be maintained through the remaining adversarial construction and, in the final step, be made to induce the pure-strategy equilibrium as its BR. The first such candidate is obtained from a zero-sum subgame whose row strategy set is the enlarged effective population $\Pi_{k+1}^e=\Pi_3\cup\{\pi_3\}$ and whose column strategy set is the core $\Pi_3$. For $q\in\Delta(\Pi_3)$, write
\[
\phi_k(q)=\max_{\pi\in\Pi_{k+1}^e}U(\pi,q).
\]
Let
\[
q_k\in\arg\min_{q\in\Delta(\Pi_3)}\phi_k(q),
\qquad
v_k=\phi_k(q_k).
\]
Thus $q_k$ is the best three-strategy shortcut candidate, and $v_k$ is its exploitability in the current subgame. At this point we need the same two properties as in the original proof:
\begin{enumerate}[label=(\arabic*)]
    \item $\pi_3$ is a best response to $q_k$, i.e.
    \[
    v_k=U(\pi_3,q_k)=\max_{\pi\in\Pi_{k+1}^e}U(\pi,q_k).
    \]
    \item The shortcut candidate is still less exploitable than the target-path threshold:
    \[
    v_k<\epsilon_{k+1}.
    \]
\end{enumerate}
For (1), take a saddle point $(\mu_k,q_k)$ of this subgame, where $\mu_k\in\Delta(\Pi_{k+1}^e)$ is the row-player equilibrium strategy. We claim that $\pi_3\in\operatorname{supp}(\mu_k)$. If not, then $\mu_k$ is supported entirely on $\Pi_3$. Since the subgame restricted to $\Pi_3$ on both sides is symmetric zero-sum, the column player can guarantee row payoff at most zero against any row mixture supported on $\Pi_3$. On the other hand, by construction $U(\pi_3,\pi)>0$ for every $\pi\in\Pi_3$, so against the column mixture $q_k$ the row player can switch to $\pi_3$ and obtain strictly positive payoff. This contradicts the optimality of a row equilibrium strategy supported only on $\Pi_3$. Hence $\pi_3\in\operatorname{supp}(\mu_k)$. Every row strategy in the support of a saddle point is a best response to the column equilibrium strategy, so $\pi_3$ is a best response to $q_k$.

For (2), let $\mathcal{P}_k$ denote the three-strategy mixtures from the target path whose exploitability defines $\epsilon_{k+1}$ after $\pi_3$ is added. Equivalently,
\[
\epsilon_{k+1}=\min_{q\in\mathcal{P}_k}\phi_k(q).
\]
Since $v_k=\min_{q\in\Delta(\Pi_3)}\phi_k(q)$ and $\mathcal{P}_k\subseteq\Delta(\Pi_3)$, we have $v_k\le\epsilon_{k+1}$, so $v_k>\epsilon_{k+1}$ is impossible. It remains to rule out equality. Suppose $v_k=\epsilon_{k+1}$, and let $q\in\mathcal{P}_k$ attain the minimum above. Then $q$ is also a minimizer of $\phi_k$, hence it is a column equilibrium strategy of the same subgame. By the target-path construction, each mixture in $\mathcal{P}_k$ has a unique BR. Therefore the row equilibrium strategy paired with $q$ must put all mass on this unique BR. However, the argument for (1) applies to every column equilibrium strategy of this subgame, so this unique BR must be $\pi_3$. Thus the equality case can only occur for the current target mixture $p_k^e$, whose unique BR is the newly constructed strategy $\pi_3$.

Now use the column-player optimality condition in the saddle point $(\pi_3,p_k^e)$. If $p_k^e$ is a column equilibrium response to the row pure strategy $\pi_3$, then it must be supported only on minimizers of $U(\pi_3,\cdot)$ over $\Pi_3$. But the ordering constraint enforced by $B_{i^*}$ makes $\pi_{i^*-1}$ the unique minimizer of $U(\pi_3,\cdot)$ on $\Pi_3$, while the choice of $i^*$ guarantees that $p_k^e$ is not supported only on $\pi_{i^*-1}$. This contradiction rules out equality, and therefore $v_k<\epsilon_{k+1}$.

The vector $q_k$ is therefore the starting point of a shortcut path on the three-strategy core. It is not yet the final shortcut vector; rather, the later construction will maintain and possibly update this vector until the last step, where the final shortcut vector is made to directly induce the pure-strategy NE as its BR. The inequality $v_k<\epsilon_{k+1}$ gives the slack needed to preserve this shortcut while the adversarial construction continues.

From this point on, each later stage of the construction maintains two objects. The first is a shortcut probability vector $q_j\in\Delta(\Pi_3)$. The second is its exploitability in the current effective restricted game,
\[
v_j:=\max_{\pi\in\Pi_j^e}U(\pi,q_j).
\]
Starting from the value $v_k$ above, we keep the invariant
\[
v_j<\epsilon_{j+1}.
\]
This invariant says that the current shortcut vector remains less exploitable than the threshold that protects the target RGB-MSS path, so it can still be used later without disturbing the adversarial construction.

We now describe the continuation of the construction. At a later iteration $j$, let
\[
e_j=\max_{\pi\in\Pi_j^e}U(\pi,p_j^e)
\]
be the exploitability of the target RGB-MSS mixture in the current effective restricted game, consistent with the notation $e_k$ used above. Also let
\[
r_t^{(j)}=\max_{\pi\in\Pi_j^e}U(\pi,p_t^e),\qquad t<j,
\]
denote the protected exploitability levels of earlier target-path mixtures. There are three cases.
\begin{enumerate}[label=(\arabic*)]
    \item If $e_j>v_{j-1}$, then an existing strategy is already a sufficient BR to the target RGB-MSS mixture. No new effective strategy is introduced. We keep the shortcut vector unchanged, $q_j=q_{j-1}$, and simply re-evaluate its exploitability in the unchanged effective game, so the invariant is preserved.

    \item If $e_j<v_{j-1}$, or if $e_j=v_{j-1}$ but $p_j^e$ assigns positive probability to some strategy outside the core $\Pi_3$, then we introduce a new outside strategy. In this case the shortcut vector remains unchanged, $q_j=q_{j-1}$, but the new payoff vector $u_j^e$ is constrained not to make this shortcut too exploitable. Concretely, in addition to the inherited target-path constraints, we impose
    \[
    \begin{bmatrix}
    (p_1^e)^\top\\
    \vdots\\
    (p_{j-1}^e)^\top\\
    -(p_j^e)^\top\\
    -(p_j^E)^\top\\
    q_{j-1}^\top
    \end{bmatrix}u_j^e
    \le
    \begin{bmatrix}
    r_1^{(j)}\\
    \vdots\\
    r_{j-1}^{(j)}\\
    -\frac{v_{j-1}+\epsilon_j}{2}\\
    0\\
    \frac{v_{j-1}+\epsilon_j}{2}
    \end{bmatrix}
    -\bar\delta\mathbf{1}.
    \]
    This is the analogue of the old proof's case (2). To verify feasibility, write
    \[
    u_j^e=x+\delta_1\mathbf{1}_{n_j}+\delta_2 c_j,
    \qquad
    c_j=\begin{bmatrix}\mathbf{0}_3^\top & \mathbf{1}_{n_j-3}^\top\end{bmatrix}^\top,
    \]
    and choose
    \[
    \delta_1=\frac{\epsilon_j+v_{j-1}}{2}
    -\frac{(\epsilon_j-v_{j-1})c_j^\top p_j^e}{16},
    \qquad
    \delta_2=\frac{\epsilon_j-v_{j-1}}{4},
    \qquad
    \bar\delta=\frac{(\epsilon_j-v_{j-1})c_j^\top p_j^e}{16}.
    \]
    Since $v_{j-1}<\epsilon_j$, these choices leave a nonnegative right-hand side for the induced system in $x$, so the same Farkas-lemma argument gives a feasible $u_j^e$. The new strategy obtains payoff below $(v_{j-1}+\epsilon_j)/2$ against $q_{j-1}$, while the target mixture $p_j^e$ is exploited above this midpoint. Hence, after adding the new strategy and re-evaluating the unchanged shortcut vector,
    \[
    v_j<\frac{v_{j-1}+\epsilon_j}{2}<\epsilon_{j+1}.
    \]

    \item The remaining case is that $e_j=v_{j-1}$ and $p_j^e$ is supported entirely on the core $\Pi_3$. In this case we refresh the shortcut vector rather than forcing the old one to remain fixed. The invariant gives the strict slack $v_{j-1}<\epsilon_j$, and hence $e_j<\epsilon_j$. We construct the new payoff vector exactly as in the construction of $\pi_3$, with the same inherited target-path constraints and an ordering block $B_{i^*}$. In addition, we choose the payoff of the new strategy against $p_j^e$ so that, after adding the new strategy, $e_j<r_j^{(j+1)}:=\max_{\pi\in\Pi_{j+1}^e}U(\pi,p_j^e)\le\epsilon_j$. This is feasible by the same Farkas-lemma argument, because $e_j<\epsilon_j$. Thus the new strategy becomes the BR to the target RGB-MSS mixture $p_j^e$, but the new exploitability of this mixture is still below the old threshold. Consequently, $\epsilon_{j+1}=\min\{\epsilon_j,r_j^{(j+1)}\}=r_j^{(j+1)}$.

    We then recompute the shortcut vector from the zero-sum subgame whose row strategy set is the updated effective population $\Pi_{j+1}^e$ and whose column strategy set is $\Pi_3$. Define $\phi_j(q)=\max_{\pi\in\Pi_{j+1}^e}U(\pi,q)$ for $q\in\Delta(\Pi_3)$, and choose $q_j\in\arg\min_{q\in\Delta(\Pi_3)}\phi_j(q)$ with $v_j=\phi_j(q_j)$. Since $p_j^e$ itself is a distribution on $\Pi_3$, we immediately have $v_j\le \phi_j(p_j^e)=r_j^{(j+1)}=\epsilon_{j+1}$.
    As in the argument immediately after introducing $\pi_3$, the equality case can be ruled out, so
    \[
    v_j<\epsilon_{j+1}.
    \]
\end{enumerate}

We now use the maintained shortcut vector to construct the final pure equilibrium strategy. Let $q_*$ be the shortcut vector available just before the last strategy is added. Let $v_* = \max_{\pi\in\Pi_T^e} U(\pi,q_*)$ be its exploitability in the current effective restricted game, and let $\epsilon_*$ be the corresponding target-path threshold. The invariant maintained above says $v_*<\epsilon_*$. Suppose the last strategy to be constructed is $\pi_{N-1}$, and write $T$ for the corresponding construction step. We construct its payoff vector $u_T^e$ using the same constraints as in Theorem~\ref{worst_game}, but add one extra condition: against the shortcut vector $q_*$, the new final strategy must do better than every strategy already present. Since $v_*$ is exactly the best payoff already available against $q_*$, this condition is $(u_T^e)^\top q_* > v_*$. Equivalently, we impose the following linear system:
\[
\begin{bmatrix}
(p_1^e)^\top\\
\vdots\\
(p_{T-1}^e)^\top\\
-(p_T^e)^\top\\
-(p_T^E)^\top\\
-q_*^\top
\end{bmatrix}u_T^e
\le
\begin{bmatrix}
r_1^{(T)}\\
\vdots\\
r_{T-1}^{(T)}\\
-e_T\\
0\\
-v_*
\end{bmatrix}
-
\delta'\mathbf{1}.
\]
The first five blocks are the usual target-path constraints. The last row is the shortcut constraint, because it is equivalent to $(u_T^e)^\top q_*\ge v_*+\delta'$. This extra constraint is feasible by the same Farkas-lemma argument: the strict gap $v_*<\epsilon_*$ gives enough room to include it together with the target-path constraints. Therefore the final strategy $\pi_{N-1}$ is the unique BR to $q_*$. Since the construction also makes $\pi_{N-1}$ strictly beat every earlier pure strategy, $\pi_{N-1}$ is the pure-strategy NE of the full game.

An instance-specific MSS can now reproduce the two mixtures needed to expose the three-strategy core and then output $q_*$. The BR to this third mixture is the final pure-strategy NE, so this MSS reaches equilibrium in three PSRO iterations. This completes the pure-equilibrium shortcut construction.

\paragraph{Part Two: Mixed-equilibrium extension.}
We now extend the pure-strategy equilibrium construction to the case where the target equilibrium has support size $S$. The case $S=1$ is exactly Part One, so assume that $S>1$ is odd. We describe the nontrivial regime where the shortcut bound $S+2$ is active and the inherited three-strategy core is present; when the bound is instead $N-1$, the statement follows from the trivial enumeration bound.

We inherit the pure-equilibrium game constructed in Part One. More precisely, start from that game with $N-S+1$ strategies, and let $U_{\mathrm{pure}}(\cdot,\cdot)$ denote its payoff function. Its final strategy, denoted by $\pi_\star$, is a pure-strategy NE. Let $\Pi_{\mathrm{out}}=\Pi_{\mathrm{pure}}\setminus\{\pi_\star\}$ be the set of all other strategies. This inherited game has the following properties. First, $\pi_\star$ strictly beats every outside strategy, i.e., $U_{\mathrm{pure}}(\pi_\star,\rho)>0$ for every $\rho\in\Pi_{\mathrm{out}}$. Second, the target RGB-MSS is forced to add all strategies in $\Pi_{\mathrm{out}}$ before it can add $\pi_\star$; all inequalities enforcing this path are strict, so they have a positive minimum margin. Third, there is a shortcut MSS that first constructs the three-strategy core $\Pi_3=\{\pi_0,\pi_1,\pi_2\}$ and then outputs a vector $p\in\Delta(\Pi_3)$ whose BR is exactly the pure equilibrium strategy $\pi_\star$.

We now extend this construction by replacing the single pure equilibrium strategy $\pi_\star$ with a group of strategies $\Pi_{\mathrm{eq}}=\{\pi_1^\star,\ldots,\pi_S^\star\}$. The idea is simple. Internally, the strategies in $\Pi_{\mathrm{eq}}$ form a subgame whose equilibrium has full support on all $S$ strategies and has no smaller equilibrium support. Externally, each strategy in $\Pi_{\mathrm{eq}}$ behaves like the old pure equilibrium strategy $\pi_\star$ against $\Pi_{\mathrm{out}}$, up to arbitrarily small perturbations used later to expose the individual support strategies.

\emph{Remark.} The oddness of $S$ is only due to the skew-symmetric payoff matrix required by symmetric zero-sum games. In a general two-player zero-sum game, the same replacement idea can be implemented for any support size.

Because the strategies in $\Pi_{\mathrm{eq}}$ behave like the old pure equilibrium strategy $\pi_\star$ against $\Pi_{\mathrm{out}}$, this design has two consequences. First, $\Pi_{\mathrm{eq}}$ becomes the equilibrium support of the full game: the support strategies strictly exploit outside strategies, while their internal block supplies the mixed equilibrium. Second, the target RGB-MSS still follows the old outside path and therefore introduces the strategies in $\Pi_{\mathrm{out}}$ first; only after that can it start adding strategies from $\Pi_{\mathrm{eq}}$, and it cannot realize the equilibrium until all of them are present. Thus this path still requires incorporating all $N$ strategies in the constructed game. Formally, the final strategy set is $\Pi_{\mathrm{out}}\cup\Pi_{\mathrm{eq}}$, which has size $(N-S)+S=N$. On $\Pi_{\mathrm{out}}\times\Pi_{\mathrm{out}}$ we copy the old payoff entries from $U_{\mathrm{pure}}$. It remains to define the block on $\Pi_{\mathrm{eq}}\times\Pi_{\mathrm{eq}}$ and the cross payoffs between $\Pi_{\mathrm{eq}}$ and $\Pi_{\mathrm{out}}$.

The internal payoff block is straightforward, so we focus on constructing the payoff between $\Pi_{\mathrm{eq}}$ and $\Pi_{\mathrm{out}}$. A trivial choice would be to make every $\pi_i^\star$ have exactly the same payoff against $\Pi_{\mathrm{out}}$ as the old pure equilibrium strategy $\pi_\star$. But it would also make the support strategies tied from the viewpoint of a shortcut MSS; whether all of them are introduced would then depend on tie-breaking. We instead perturb the payoffs around $U_{\mathrm{pure}}(\pi_\star,\cdot)$ so that three requirements hold: (i) every support strategy still strictly exploits every outside strategy; (ii) these perturbations are small enough that no support strategy is introduced early by the target RGB-MSS; and (iii) there exist $S$ probability vectors on the core $\Pi_3$ whose unique BRs are $\pi_1^\star,\ldots,\pi_S^\star$, respectively.

We first discuss requirement (iii), since this is the part that exposes the individual support strategies from the three-strategy core. Let
\[
u=U_{\mathrm{pure}}(\pi_\star,\Pi_3)
\]
be the old payoff vector of the pure equilibrium strategy against the core. The shortcut vector $p\in\Delta(\Pi_3)$ from Part One made $\pi_\star$ the unique BR, so there is a strict margin
\[
\gamma
=
\min_{\rho\in\Pi_{\mathrm{out}}}
\Bigl(\langle u,p\rangle-\langle U_{\mathrm{pure}}(\rho,\Pi_3),p\rangle\Bigr)
>0.
\]
We say that a probability vector $q_i\in\Delta(\Pi_3)$ exposes $\pi_i^\star$ if $\pi_i^\star$ is the unique BR to $q_i$ in the full game. Our goal is to construct one such vector for each support strategy.

Choose distinct unit tangent directions $h_1,\ldots,h_S$ in the feasible cone of the simplex at $p$; that is, $\mathbf{1}^{\top}h_i=0$ and $p+\delta h_i\in\Delta(\Pi_3)$ for all sufficiently small $\delta>0$. Since the directions are finite and distinct, define
\[
\alpha=\min_{i\ne j}(1-\langle h_i,h_j\rangle)>0.
\]
For small parameters $\delta,\eta>0$, set
\[
q_i=p+\delta h_i,
\qquad
U(\pi_i^\star,\Pi_3)=u+d_i,
\qquad
d_i=\eta\bigl(h_i-\langle p,h_i\rangle\mathbf{1}\bigr).
\]
The perturbation $d_i$ is invisible at the base point $p$, because $\langle p,d_i\rangle=0$. However, it separates the support strategies at the nearby point $q_i$. For any $j\ne i$,
\[
\begin{aligned}
\langle U(\pi_i^\star,\Pi_3),q_i\rangle
-
\langle U(\pi_j^\star,\Pi_3),q_i\rangle
&=
\langle d_i-d_j,p+\delta h_i\rangle\\
&=
\delta\eta(1-\langle h_i,h_j\rangle)\\
&\ge \delta\eta\alpha>0.
\end{aligned}
\]
Thus $q_i$ makes $\pi_i^\star$ strictly better than every other support strategy.

It remains, for exposure, to ensure that no outside strategy becomes a BR to $q_i$. This follows from the margin $\gamma$. By taking $\delta$ and $\eta$ sufficiently small, the payoff comparison at $p$ changes only slightly, and hence for every $\rho\in\Pi_{\mathrm{out}}$ and every $i$,
\[
\langle U(\pi_i^\star,\Pi_3),q_i\rangle
>
\langle U(\rho,\Pi_3),q_i\rangle.
\]
Combining this inequality with the strict separation among support strategies, each $q_i$ exposes exactly one support strategy, namely $\pi_i^\star$.

The remaining cross-payoff entries can be chosen in the same small-perturbation manner. The core entries have already been set by $U(\pi_i^\star,\Pi_3)=u+d_i$; for $\rho\in\Pi_{\mathrm{out}}\setminus\Pi_3$, take $U(\pi_i^\star,\rho)=U_{\mathrm{pure}}(\pi_\star,\rho)+\xi_{i,\rho}$, and fill all reverse entries by skew-symmetry. Since $U_{\mathrm{pure}}(\pi_\star,\rho)>0$ for every $\rho\in\Pi_{\mathrm{out}}$, sufficiently small perturbations preserve $U(\pi_i^\star,\rho)>0$ for all $\pi_i^\star\in\Pi_{\mathrm{eq}}$ and $\rho\in\Pi_{\mathrm{out}}$, so requirement (i) holds. Moreover, the target MSS path in Part One is enforced by strict BR inequalities. Taking $d_i$ and $\xi_{i,\rho}$ smaller than the minimum margin of those inequalities ensures that no support strategy is introduced before the corresponding outside path is completed, so requirement (ii) also holds. Thus the construction has all three desired properties.

Finally define the shortcut MSS $\mathcal{M}'$. It first outputs the same two mixtures as in Part One, thereby exposing the three-strategy core $\Pi_3$. It then outputs
\[
q_1,q_2,\ldots,q_S.
\]
By the construction above, the corresponding full-game BRs are exactly
\[
\pi_1^\star,\pi_2^\star,\ldots,\pi_S^\star.
\]
After these $S$ additional iterations, the restricted population contains the entire equilibrium support $\Pi_{\mathrm{eq}}$, and the restricted game already realizes the full-game NE supported on $\Pi_{\mathrm{eq}}$. Hence $\mathcal{M}'$ reaches a NE within $S+2$ iterations. Together with the trivial bound obtained by enumerating all strategies, this gives $\min\{S+2,N-1\}$ and completes the proof of Theorem~\ref{better_MSS}.

\subsection{Proof of Proposition \ref{thm:flat_regions}}\label{Apx:flat}

Let $\Pi^r=\{\pi_1,\ldots,\pi_n\}$, and view any mixture
$\sigma\in\Delta(\Pi^r)$ as its probability vector over this ordered set.
For any pure strategy $\pi\in\Pi$, define its payoff vector against the
restricted population as
\[
u(\pi,\Pi^r)
=
\bigl(U(\pi,\pi_1),\ldots,U(\pi,\pi_n)\bigr)^\top .
\]
Then the payoff of $\pi$ against a mixture $\sigma$ supported on $\Pi^r$ is
\[
U(\pi,\sigma)=\langle u(\pi,\Pi^r),\sigma\rangle .
\]

Since $\Pi$ is finite and $\pi^\star$ is the unique BR to $\sigma$, the
payoff gap between $\pi^\star$ and its closest competitor is strictly positive:
\[
\gamma
=
\min_{\pi\in\Pi,\ \pi\ne\pi^\star}
\left\langle
u(\pi^\star,\Pi^r)-u(\pi,\Pi^r),\sigma
\right\rangle
>0.
\]
Also define
\[
L
=
\max_{\pi\in\Pi,\ \pi\ne\pi^\star}
\left\|u(\pi^\star,\Pi^r)-u(\pi,\Pi^r)\right\|_2 .
\]
The uniqueness of $\pi^\star$ implies $L>0$, so we set
$\varepsilon=\gamma/(2L)$.

Now take any $\sigma'\in\Delta(\Pi^r)$ with
$\|\sigma'-\sigma\|_2<\varepsilon$. For any $\pi\ne\pi^\star$,
\[
\begin{aligned}
&\left\langle
u(\pi^\star,\Pi^r)-u(\pi,\Pi^r),\sigma'
\right\rangle \\
&=
\left\langle
u(\pi^\star,\Pi^r)-u(\pi,\Pi^r),\sigma
\right\rangle
+
\left\langle
u(\pi^\star,\Pi^r)-u(\pi,\Pi^r),\sigma'-\sigma
\right\rangle \\
&\ge
\gamma
-
\left\|u(\pi^\star,\Pi^r)-u(\pi,\Pi^r)\right\|_2
\left\|\sigma'-\sigma\right\|_2 \\
&>
\gamma-L\varepsilon
=
\frac{\gamma}{2}
>0 .
\end{aligned}
\]
Thus $\pi^\star$ still obtains strictly larger payoff than every
$\pi\ne\pi^\star$ against $\sigma'$. Therefore $\pi^\star$ remains the unique
BR throughout the neighborhood
$\{\sigma'\in\Delta(\Pi^r):\|\sigma'-\sigma\|_2<\varepsilon\}$.

\subsection{Proof of Theorem \ref{thm:framework_converge}}\label{Apx:converage}

We argue by contradiction.
Assume the framework does \emph{not} converge on some finite game $\mathcal{G}$, i.e.,
$\mathcal{PE}(\Pi_t^r;\mathcal{G})>0$ for all $t$.

\paragraph{Step 1: Non-convergence implies eventual stagnation (finite game).}
Since the game is finite, $|\Pi|<\infty$.
The restricted set $\Pi_t^r$ grows only by adding pure strategies from $\Pi$, so it can change at most $|\Pi|-|\Pi_0^r|$ times.
Therefore, there exists an index $t_0$ such that
\[
\Pi_t^r=\Pi_{t_0}^r \qquad \text{for all } t\ge t_0 .
\]
Moreover, by the non-convergence assumption we have $\mathcal{PE}(\Pi_{t_0}^r;\mathcal{G})>0$.

\paragraph{Step 2: On a stagnant suffix, selection reduces to the base MSS.}
Fix any $t\ge t_0$. For any candidate update that does not introduce a new strategy,
the ``expanded'' set equals the current set, hence its PE is identical:
\[
\mathcal{PE}(\Pi_t^r\cup\{\pi\};\mathcal{G})=\mathcal{PE}(\Pi_t^r;\mathcal{G})
\qquad \text{whenever } \pi\in \Pi_t^r .
\]
Under the exact oracle, all such no-op candidates attain the same PE value, so selection faces a tie among them.
By conservative tie-breaking and base inclusion, the framework must select the candidate induced by $\sigma_t^{\mathcal{M}}$.
Thus, on every iteration $t\ge t_0$ in which no new strategy can be added, the framework behaves identically to PSRO instantiated with the base MSS $\mathcal{M}$ on the same restricted set.

\paragraph{Step 3: Contradiction with the convergence of PSRO($\mathcal{M}$).}
Consider running PSRO with MSS $\mathcal{M}$ from the restricted set $\Pi_{t_0}^r$ under the same exact-oracle assumption.
By the premise of the theorem, PSRO($\mathcal{M}$) must reach a NE after finitely many PSRO-style iterations.
Since $\mathcal{PE}(\Pi_{t_0}^r;\mathcal{G})>0$, PSRO($\mathcal{M}$) must introduce a new pure strategy at some future iteration (otherwise the restricted set would remain fixed and PE could not drop to zero).
However, Step 2 implies that from $t_0$ onward the framework's tie-breaking forces it to follow the $\mathcal{M}$-induced choice whenever selection does not change the set; combined with the fact that $\Pi_t^r$ is fixed for all $t\ge t_0$, this means even the $\mathcal{M}$-induced update fails to add any new strategy.
This contradicts the assumed finite-step convergence of PSRO($\mathcal{M}$) from $\Pi_{t_0}^r$.

Therefore the framework must converge after finitely many PSRO-style iterations.

\subsection{Proof of Theorem \ref{thm:exp_iter_bound}}\label{Apx:avoid_worst_case}

We need to exhibit a game in the adversarial family for which the exploration--selection framework has the claimed expected iteration bound. We start from the mixed-equilibrium construction in Appendix~\ref{Apx:better_MSS}, and impose three additional generic conditions: (1) the full-game BR space induced by the two-strategy prefix $\{\pi_0,\pi_1\}$ is contained in $\{\pi_1,\pi_2\}$, so the framework deterministically reaches the three-strategy core; (2) letting $r=\min_{\pi_i^\star\in\Pi_{\mathrm{eq}},\,\rho\in\Pi_{\mathrm{out}}}U(\pi_i^\star,\rho)>0$, every entry of the internal payoff block on $\Pi_{\mathrm{eq}}\times\Pi_{\mathrm{eq}}$ has absolute value smaller than $r$; and (3) the subgame induced by $\Pi_{\mathrm{eq}}$, and each subgame induced by a subset of $\Pi_{\mathrm{eq}}$, is nondegenerate, so its Nash equilibrium is unique. Condition (1) can be enforced by setting the payoff of $\pi_2$ against both $\pi_0$ and $\pi_1$ to a common value $c$ close to one, and then keeping all later relevant payoffs strictly below $c$. Condition (2) is obtained by scaling the internal block. Condition (3) is generic for finite games.

We first prove three lemmas that will be used in the iteration bound.

\begin{lemma}
For any restricted population $\Pi^r$, if $\Pi^r\subseteq\Pi_{\mathrm{out}}$, then $\mathcal{PE}(\Pi^r;G)\ge r$. If $\Pi^r\cap\Pi_{\mathrm{eq}}\ne\emptyset$, then $\mathcal{PE}(\Pi^r;G)<r$.
\end{lemma}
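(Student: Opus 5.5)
For a symmetric zero-sum game the exploitability of a mixture $\sigma$ reduces to $\max_{\pi\in\Pi}U(\pi,\sigma)$, because $U(\sigma,\sigma)=0$ and the two players' terms coincide. This gives
\[
\mathcal{PE}(\Pi^r;G)=\min_{\sigma\in\Delta(\Pi^r)}\max_{\pi\in\Pi}U(\pi,\sigma).
\]
This is the saddle-point form in Eq.~\eqref{eq:pe_saddle}. Both parts of the lemma then come down to bounding this min--max by a well-chosen row strategy (for the lower bound) or column mixture (for the upper bound).

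\textbf{Lower bound.} Suppose $\Pi^r\subseteq\Pi_{\mathrm{out}}$ and let $\sigma\in\Delta(\Pi^r)$ be arbitrary. Pick any fixed support strategy $\pi_1^\star\in\Pi_{\mathrm{eq}}$ as the opponent. Every $\rho\in\operatorname{supp}(\sigma)$ lies in $\Pi_{\mathrm{out}}$, so by the definition of $r$ we have $U(\pi_1^\star,\rho)\ge r$. Averaging over $\sigma$ gives $U(\pi_1^\star,\sigma)\ge r$, hence $\max_\pi U(\pi,\sigma)\ge r$. Taking the minimum over $\sigma$ yields $\mathcal{PE}(\Pi^r;G)\ge r$.

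\textbf{Upper bound.} Suppose $\Pi^r\cap\Pi_{\mathrm{eq}}\ne\emptyset$, and take a support strategy $\pi_i^\star\in\Pi^r$. It suffices to find a single $\sigma\in\Delta(\Pi^r)$ with $\max_{\pi\in\Pi}U(\pi,\sigma)<r$; I will use the pure mixture $\sigma=\pi_i^\star$. There are two kinds of opponents to check.
\begin{enumerate}
\item Opponents $\rho\in\Pi_{\mathrm{out}}$: skew-symmetry gives $U(\rho,\pi_i^\star)=-U(\pi_i^\star,\rho)\le -r<0<r$.
\item Opponents $\pi_j^\star\in\Pi_{\mathrm{eq}}$: condition (2) of the construction says every internal entry has absolute value smaller than $r$, so $U(\pi_j^\star,\pi_i^\star)<r$. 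This includes $j=i$, where the payoff is $0$.
\end{enumerate}
The strategy set is finite, so the maximum over all opponents is attained and is strictly below $r$. Therefore $\mathcal{PE}(\Pi^r;G)\le\max_\pi U(\pi,\pi_i^\star)<r$.

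\textbf{Main obstacle.} The computation is short; the real work is confirming that the two constants used are actually valid in the constructed game. First, $r>0$ must be a true uniform lower bound on the cross payoffs $U(\pi_i^\star,\rho)$. This follows from requirement (i) of the Part Two perturbation in Appendix~\ref{Apx:better_MSS} together with finiteness of the strategy sets. Second, condition (2) must use strict inequality and must be imposed after the perturbations are fixed. Rescaling the internal block does not change the cross payoffs, so $r$ stays unchanged, and it also preserves both skew-symmetry and the internal equilibrium. I would state both points explicitly before running the argument above.
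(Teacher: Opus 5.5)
Your proof is correct and follows the paper's argument essentially verbatim. The lower bound averages $U(\pi_i^\star,\rho)\ge r$ over $\sigma$, and the upper bound evaluates the pure profile on $\pi_i^\star$, using skew-symmetry for outside opponents and condition (2) for support opponents; your extra checks on $r>0$ and on the strictness and timing of condition (2) are sound, but the paper simply takes them as given in the construction.
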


\begin{proof}
For the first claim, take any mixed strategy $\sigma\in\Delta(\Pi^r)$ with $\Pi^r\subseteq\Pi_{\mathrm{out}}$. For every support strategy $\pi_i^\star\in\Pi_{\mathrm{eq}}$, linearity gives
\[
U(\pi_i^\star,\sigma)
=\sum_{\rho\in\Pi^r}\sigma(\rho)U(\pi_i^\star,\rho)
\ge r.
\]
Thus a support strategy provides a unilateral improvement of at least $r$, so $\epsilon(\sigma)\ge r$ for every $\sigma\in\Delta(\Pi^r)$, and $\mathcal{PE}(\Pi^r;G)\ge r$.

For the second claim, suppose $\pi_i^\star\in\Pi^r\cap\Pi_{\mathrm{eq}}$ and consider the pure strategy profile that places all probability on $\pi_i^\star$. Any outside strategy $\rho\in\Pi_{\mathrm{out}}$ obtains negative payoff against $\pi_i^\star$, because $U(\rho,\pi_i^\star)=-U(\pi_i^\star,\rho)<0$. Any support strategy can gain against $\pi_i^\star$ only through the internal block, whose entries have absolute value strictly smaller than $r$. Therefore the exploitability of this pure profile is smaller than $r$, and $\mathcal{PE}(\Pi^r;G)<r$.
\end{proof}

\begin{lemma}
Suppose $\Pi^r\cap\Pi_{\mathrm{eq}}\ne\emptyset$. Then, for any outside strategy $\rho\in\Pi_{\mathrm{out}}$, adding $\rho$ does not strictly decrease PE:
\[
\mathcal{PE}(\Pi^r\cup\{\rho\};G)=\mathcal{PE}(\Pi^r;G).
\]
\end{lemma}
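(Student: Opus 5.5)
Since adding a strategy can only enlarge the feasible set in the minimization defining PE, we always have $\mathcal{PE}(\Pi^r\cup\{\rho\};G)\le\mathcal{PE}(\Pi^r;G)$. The plan is therefore to prove the reverse inequality. We will show that any optimal mixture on the enlarged population can be replaced, without increasing exploitability, by a mixture supported on $\Pi^r$. We use the saddle-point form \eqref{eq:pe_saddle}: in the symmetric zero-sum game, $\mathcal{PE}(\Pi^r;G)=\min_{\sigma\in\Delta(\Pi^r)}\max_{\pi\in\Pi}U(\pi,\sigma)$. The idea is that every outside strategy is strictly dominated, against the relevant opponents, by some strategy of $\Pi_{\mathrm{eq}}\cap\Pi^r$, so shifting mass away from outside strategies never helps the opponent.

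Fix $\pi_i^\star\in\Pi^r\cap\Pi_{\mathrm{eq}}$. Let $\sigma\in\Delta(\Pi^r\cup\{\rho\})$ be a minimizer, and put $\lambda=\sigma(\rho)$. If $\rho\in\Pi^r$ already, there is nothing to prove, so assume $\rho\notin\Pi^r$ and $\lambda>0$. Define $\sigma'=\sigma-\lambda\delta_\rho+\lambda\delta_{\pi_i^\star}$, which is supported on $\Pi^r$. For any opponent pure strategy $\pi$, linearity gives $U(\pi,\sigma')-U(\pi,\sigma)=\lambda\bigl(U(\pi,\pi_i^\star)-U(\pi,\rho)\bigr)$. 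The goal is to show $\max_\pi U(\pi,\sigma')\le\max_\pi U(\pi,\sigma)$.

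We split into cases according to the maximizing opponent $\pi$.
\begin{itemize}
\item If $\pi\in\Pi_{\mathrm{out}}$, then $U(\pi,\pi_i^\star)<0$. The term $-U(\pi,\rho)$, an outside-versus-outside payoff, is however uncontrolled. A crude pointwise swap therefore fails, and we need a finer argument.
\item If $\pi\in\Pi_{\mathrm{eq}}$, then $U(\pi,\pi_i^\star)<r\le U(\pi,\rho)$ by condition (2) and the definition of $r$, so the change is negative.
\end{itemize}

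For the outside case we would instead compare maxima globally. Because $\pi_i^\star\in\Pi^r$, the previous lemma gives $\mathcal{PE}(\Pi^r;G)<r$. If $\lambda>0$, then every $\pi_j^\star$ earns $U(\pi_j^\star,\sigma)\ge\lambda r+(1-\lambda)\min_{\Pi^r}U(\pi_j^\star,\cdot)$. We would try to show that an optimal $\sigma$ puts zero weight on $\Pi_{\mathrm{out}}$ altogether. The route is to restrict attention to the sub-block $\Pi_{\mathrm{eq}}\cap(\Pi^r\cup\{\rho\})$ and compare with the mixture $\sigma_{\mathrm{eq}}$ that is the restricted-game Nash equilibrium of that block. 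Against $\sigma_{\mathrm{eq}}$, every outside opponent gets a negative payoff, since each of them loses to every support strategy. Every inside opponent gets at most the internal-block exploitability, which is below $r$. So $\max_\pi U(\pi,\sigma_{\mathrm{eq}})$ equals the value $w$ of the internal restricted problem $\min_{\tau\in\Delta(\Pi_{\mathrm{eq}}\cap\Pi^r)}\max_{\pi\in\Pi_{\mathrm{eq}}}U(\pi,\tau)$, which we can show is nonnegative.

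Conversely, for any $\sigma$ with outside mass $\lambda$, we would like to lower-bound $\max_{\pi\in\Pi_{\mathrm{eq}}}U(\pi,\sigma)$ by $w$. Write $\sigma=(1-\lambda)\tau+\lambda\nu$ with $\tau$ on the inside part and $\nu$ on outside strategies. Then
\[
\max_{\pi\in\Pi_{\mathrm{eq}}}U(\pi,\sigma)\ge(1-\lambda)\max_{\pi\in\Pi_{\mathrm{eq}}}U(\pi,\tau)+\lambda r\;\ge\;(1-\lambda)w+\lambda r\;\ge\;w,
\]
using $w<r$. Hence the minimum is attained with no outside mass. It follows that both $\mathcal{PE}(\Pi^r\cup\{\rho\};G)$ and $\mathcal{PE}(\Pi^r;G)$ equal $w$, because $\rho$ adds nothing to the inside block.

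The main obstacle is the first inequality in the display. It takes the max over $\pi\in\Pi_{\mathrm{eq}}$ of a sum, and we lower-bound it by evaluating at the maximizer for $\tau$ and using $U(\pi,\nu)\ge r$ for every support strategy. This is valid precisely because the bound $U(\pi_j^\star,\rho')\ge r$ holds uniformly over all $j$ and all $\rho'\in\Pi_{\mathrm{out}}$. We also need $w\ge 0$ so that $\lambda r$ dominates, and in fact only $w<r$ is needed, which holds by condition (2).
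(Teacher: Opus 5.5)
Your final argument is correct and is essentially the paper's. Both identify the PE of any population meeting $\Pi_{\mathrm{eq}}$ in $E=\Pi^r\cap\Pi_{\mathrm{eq}}$ (a set that adding $\rho$ leaves unchanged) with the value $w$ of the $E$-versus-$\Pi_{\mathrm{eq}}$ subgame. Your upper bound is the paper's check that outside opponents cannot exploit $x_E$. Your lower bound replaces its check against $y_E$ by evaluating at a best pure response in $\Pi_{\mathrm{eq}}$ to $\tau$. That has two side benefits: it isolates the two facts the paper uses only implicitly, $w\ge 0$ (from $U(\tau,\tau)=0$) and $w<r$ (condition (2)), and it does not need the nondegeneracy/uniqueness assumption.

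Two small cleanups. First, $\sigma_{\mathrm{eq}}$ must be a minimizer of $\max_{\pi\in\Pi_{\mathrm{eq}}}U(\pi,\tau)$ over $\Delta(E)$ (the paper's $x_E$), not the Nash equilibrium of the $E\times E$ block. Second, $\lambda$ in your final display should be the total mass on $\Pi_{\mathrm{out}}$ (including $\Pi^r\cap\Pi_{\mathrm{out}}$), not just $\sigma(\rho)$. With those fixes, the abandoned pointwise-swap paragraph can be deleted.
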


\begin{proof}
Let $E=\Pi^r\cap\Pi_{\mathrm{eq}}$, which is nonempty by assumption. Consider the zero-sum game with row-player strategy set $E$ and column-player strategy set $\Pi_{\mathrm{eq}}$. Let $(x_E,y_E)\in\Delta(E)\times\Delta(\Pi_{\mathrm{eq}})$ be its Nash equilibrium. By the nondegeneracy condition, this equilibrium is unique.

We claim that the same pair $(x_E,y_E)$ also determines $\mathcal{PE}(\Pi^r;G)$ in the zero-sum game with row-player strategy set $\Pi^r$ and column-player strategy set $\Pi$. Indeed, no row strategy in $\Pi^r\setminus E$ can improve against $y_E$, because outside strategies are strictly worse against the equilibrium-support strategies in $\Pi_{\mathrm{eq}}$. Similarly, no column strategy in $\Pi\setminus\Pi_{\mathrm{eq}}$ can improve against $x_E$, because every strategy in $E\subseteq\Pi_{\mathrm{eq}}$ strictly exploits outside strategies. Thus the equilibrium that realizes $\mathcal{PE}(\Pi^r;G)$ uses only $E$ on the restricted side and only $\Pi_{\mathrm{eq}}$ on the full-game side.

Now add an outside strategy $\rho\in\Pi_{\mathrm{out}}$ to the restricted population. The intersection with the equilibrium support is unchanged:
\[
(\Pi^r\cup\{\rho\})\cap\Pi_{\mathrm{eq}}=E.
\]
Therefore the same subgame $E$ versus $\Pi_{\mathrm{eq}}$ determines the PE after adding $\rho$, and the PE value is unchanged:
\[
\mathcal{PE}(\Pi^r\cup\{\rho\};G)=\mathcal{PE}(\Pi^r;G).
\]
\end{proof}

\begin{lemma}
Suppose $E=\Pi^r\cap\Pi_{\mathrm{eq}}$ is nonempty and $E\subsetneq\Pi_{\mathrm{eq}}$. Then there exists a missing support strategy $\pi_j^\star\in\Pi_{\mathrm{eq}}\setminus E$ such that
\[
\mathcal{PE}(\Pi^r\cup\{\pi_j^\star\};G)<\mathcal{PE}(\Pi^r;G).
\]
\end{lemma}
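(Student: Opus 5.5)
The plan is to use the structure identified in the proof of the previous lemma: when $E=\Pi^r\cap\Pi_{\mathrm{eq}}\neq\emptyset$, the population exploitability is the value of the zero-sum game in which the restricted side plays $E$ and the full-game side plays $\Pi_{\mathrm{eq}}$:
\[
\mathcal{PE}(\Pi^r;G)=v(E):=\min_{x\in\Delta(E)}\max_{\pi\in\Pi_{\mathrm{eq}}}U(\pi,x).
\]
Adding a support strategy $\pi_j^\star$ replaces $E$ by $E\cup\{\pi_j^\star\}$, and the same reduction applies. Two facts follow directly: $v$ is monotone nonincreasing in $E$, and $v(\Pi_{\mathrm{eq}})=0$, because $\Pi_{\mathrm{eq}}$ carries the symmetric equilibrium. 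It therefore suffices to show $v(E\cup\{\pi_j^\star\})<v(E)$ for a suitable $j$.

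\textbf{Step 1: show $v(E)>0$.} Suppose instead $v(E)\le 0$. Then some $x\in\Delta(E)$ satisfies $U(\pi,x)\le 0$ for all $\pi\in\Pi_{\mathrm{eq}}$. By skew-symmetry, $(x,x)$ is then a symmetric equilibrium of the internal subgame on $\Pi_{\mathrm{eq}}$. Nondegeneracy (condition (3)) gives uniqueness, so $x$ must have full support $\Pi_{\mathrm{eq}}$. This contradicts $E\subsetneq\Pi_{\mathrm{eq}}$.

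\textbf{Step 2: pick $j$ and get a strict decrease.} Let $(x_E,y_E)$ be the unique saddle point of the subgame $E$ versus $\Pi_{\mathrm{eq}}$, so $U(y_E,x_E)=v(E)>0$. Consider the sub-block restricted to $E$ on both sides. It is symmetric zero-sum with value $0$, so $y_E$ cannot be supported only on $E$: otherwise the row player (restricted side) could guarantee $\le 0$ against it, contradicting the value $v(E)>0$. Hence some $\pi_j^\star\in\operatorname{supp}(y_E)\setminus E$ exists. Since $\pi_j^\star$ is in the support of the maximizer's optimal mixture, it is a best response to $x_E$:
\[
U(\pi_j^\star,x_E)=v(E).
\]
For the new set $E'=E\cup\{\pi_j^\star\}$, perturb the restricted mixture to $x_\lambda=(1-\lambda)x_E+\lambda\,\pi_j^\star$. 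Then
\[
U(\pi,x_\lambda)=(1-\lambda)U(\pi,x_E)+\lambda U(\pi,\pi_j^\star).
\]
The goal is to show $\max_\pi U(\pi,x_\lambda)<v(E)$ for small $\lambda>0$.

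\textbf{Main obstacle.} For the best responses $\pi$ to $x_E$, the derivative of $U(\pi,x_\lambda)$ at $\lambda=0$ is $U(\pi,\pi_j^\star)-v(E)$, which need not be negative. The first-order perturbation may therefore fail. The fallback is a contradiction argument. Suppose $v(E')=v(E)$. Then $x_E$, padded with a zero on $\pi_j^\star$, remains optimal in the $E'$ game, and $y_E$ remains optimal for the maximizer. Complementary slackness in the $E'$ game then requires $U(y_E,\pi_j^\star)\ge v(E)>0$, equivalently $U(\pi_j^\star,y_E)\le -v(E)<0$. Combined with $\pi_j^\star\in\operatorname{supp}(y_E)$ and $U(y_E,x_E)=v(E)$, this should produce a degenerate tie among the support payoffs, and condition (3) excludes such ties (nondegeneracy of subgames on subsets of $\Pi_{\mathrm{eq}}$). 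If this does not close directly, I would instead choose $j$ by a counting argument over the supports of the unique equilibria of the nested subgames.

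Finally, the reduction to $E$ versus $\Pi_{\mathrm{eq}}$ is exact by the previous lemma, because outside strategies are strictly dominated in the relevant comparisons. The strict decrease of $v$ therefore transfers directly to $\mathcal{PE}$.
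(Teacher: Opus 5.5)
Your reduction to the subgame $E$ versus $\Pi_{\mathrm{eq}}$ and your Step~1 ($v(E)>0$) are fine, but Step~2 has a genuine gap. An arbitrary $\pi_j^\star\in\operatorname{supp}(y_E)\setminus E$ need not decrease PE, and condition~(3) cannot supply the contradiction you hope for. Your own slackness computation is actually an exact criterion. Since $y_E$ is the unique maximizer of $g_E(y)=\min_{\pi\in E}U(y,\pi)$, one has $v(E\cup\{\pi_j^\star\})=v(E)$ if and only if $U(y_E,\pi_j^\star)\ge v(E)$. Nothing prevents this for a support strategy of $y_E$.

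Concretely, take the skew-symmetric block on $\pi_1,\dots,\pi_5$ determined by $U(\pi_1,\pi_2)=0.2$, $U(\pi_3,\pi_1)=1.5$, $U(\pi_3,\pi_2)=0.5$, $U(\pi_4,\pi_1)=0.5$, $U(\pi_4,\pi_2)=1.5$, $U(\pi_4,\pi_3)=4$, $U(\pi_1,\pi_5)=1.8$, $U(\pi_2,\pi_5)=2.2$, $U(\pi_3,\pi_5)=2$, $U(\pi_5,\pi_4)=6$. All row sums vanish, and the block on $\pi_1,\dots,\pi_4$ has determinant $7.84\neq 0$. Hence the uniform mixture is the unique equilibrium, and it has full support. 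For $E=\{\pi_1,\pi_2\}$ you get $x_E=\tfrac12(\pi_1+\pi_2)$, $v(E)=1$, and the unique $y_E=\tfrac12(\pi_3+\pi_4)$. However, $U(y_E,\pi_3)=2>v(E)$. So adding $\pi_3$, which lies in $\operatorname{supp}(y_E)\setminus E$ and is a best response to $x_E$, leaves PE unchanged. Adding $\pi_4$, with $U(y_E,\pi_4)=-2$, does lower it. All relevant inequalities are strict, so a small generic perturbation makes every subgame nondegenerate without changing this, and rescaling handles condition~(2).

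The missing idea is to select $j$ by the dual test $U(y_E,\pi_j^\star)<v(E)$, and to get existence by averaging against $y_E$ itself; this is what the paper does. On one hand, $U(y_E,\pi)\ge v(E)$ for every $\pi\in E$. On the other, $\sum_{\pi\in\Pi_{\mathrm{eq}}}y_E(\pi)\,U(y_E,\pi)=U(y_E,y_E)=0<v(E)$. So some $\pi_j^\star\in\operatorname{supp}(y_E)\setminus E$ satisfies $U(y_E,\pi_j^\star)<v(E)$. Your remark that $y_E$ cannot be supported on $E$ is the coarse form of this averaging.

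For that $j$ the strict decrease is immediate. The new value is $\max_{y}\min\{g_E(y),U(y,\pi_j^\star)\}$. This is below $v(E)$ at $y_E$ by the choice of $j$, and below $v(E)$ at every $y\neq y_E$ by uniqueness of the maximizer of $g_E$. By compactness the maximum is strictly less than $v(E)$. Equivalently, your contradiction argument closes at once with this choice of $j$, and the first-order perturbation of $x_E$ is not needed.
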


\begin{proof}
Use the notation from Lemma 2: let $(x_E,y_E)$ be the unique NE of the subgame with restricted side $E$ and full-game side $\Pi_{\mathrm{eq}}$, and let $v_E=\mathcal{PE}(\Pi^r;G)$. Because $E$ is a proper subset of the full equilibrium support, $v_E>0$.

We use the following PE-improvement criterion. Define the dual guarantee of the full-game side against the current restricted set by
\[
    g_E(y)=\min_{\pi\in E}U(y,\pi),
    \qquad y\in\Delta(\Pi_{\mathrm{eq}}).
\]
Then $v_E=\max_y g_E(y)$, and by nondegeneracy the maximizer is the unique strategy $y_E$. After adding a missing strategy $\pi_j^\star$, the new PE value is
\[
    v_j=\max_{y\in\Delta(\Pi_{\mathrm{eq}})}
    \min\{g_E(y),U(y,\pi_j^\star)\}.
\]
This formula gives a simple test for whether adding $\pi_j^\star$ improves the population. Before the addition, the full-game side can guarantee the value $v_E$ by playing $y_E$. After the addition, the restricted side has one more pure strategy available, so $y_E$ continues to certify the old value only if the new strategy is not better against $y_E$ than the old PE value, namely only if $U(y_E,\pi_j^\star)\ge v_E$.

If instead $U(y_E,\pi_j^\star)<v_E$, then the old certificate breaks: at $y_E$ the new minimum in the definition of $v_j$ is already below $v_E$. For any $y$ away from $y_E$, the old guarantee $g_E(y)$ is also below $v_E$, because $y_E$ is the unique maximizer of $g_E$. Hence no full-game-side strategy can still guarantee value $v_E$, and therefore $v_j<v_E$. This is the PE-decrease condition we will use.

It remains to prove that at least one missing strategy satisfies this condition. From $g_E(y_E)=v_E$, we already have
\[
    U(y_E,\pi)\ge v_E,
    \qquad \forall \pi\in E.
\]
Suppose no missing support strategy satisfies $U(y_E,\pi_j^\star)<v_E$. Then the same lower bound also holds for every $\pi_j^\star\in\Pi_{\mathrm{eq}}\setminus E$, so $U(y_E,\pi)\ge v_E$ for every pure strategy $\pi\in\Pi_{\mathrm{eq}}$. Averaging this inequality with respect to the mixed strategy $y_E$ gives $U(y_E,y_E)\ge v_E>0$. This is impossible in a symmetric zero-sum game, where every mixed strategy has payoff zero against itself. Therefore some missing strategy $\pi_j^\star\in\Pi_{\mathrm{eq}}\setminus E$ satisfies $U(y_E,\pi_j^\star)<v_E$, and by the criterion above its addition strictly decreases PE.
\end{proof}

We now apply the lemmas to the exploration--selection process. The argument has three stages: first the construction forces the population to reach a three-strategy core; then PE selection makes the first exposed equilibrium-support strategy preferable to any outside strategy; after that, the same PE-decrease argument repeatedly adds the remaining strategies in $\Pi_{\mathrm{eq}}$ until the full equilibrium support is present.

\paragraph{Step 1: A three-strategy core is reached deterministically.}
We first use the early-core condition stated above.
Starting from the initial population $\Pi_0^r=\{\pi_0\}$, under the exact-oracle assumption the first candidate-generation step returns the unique BR $\pi_1$ to $\pi_0$, hence $\pi_1$ is added after one PSRO-style iteration.

Next, we design the payoff structure on the restricted set $\{\pi_0,\pi_1\}$ such that, for the mixture $\rho$ produced by the PE-evaluation subroutine on $\Pi^r=\{\pi_0,\pi_1\}$, the set of BRs in the full game satisfies
\[
\mathcal{B}(\rho)\subseteq \{\pi_1,\pi_2\},
\]
where $\pi_2\notin \Pi^r$.
Since $\pi_1$ is already contained in the current population, the evaluation best-response added by the framework must introduce $\pi_2$.
Therefore, after at most two PSRO-style iterations the restricted population deterministically contains the three-strategy core
\[
\Pi^{\mathrm{core}}=\{\pi_0,\pi_1,\pi_2\}.
\]

We now use the mixed-equilibrium construction from Appendix~\ref{Apx:better_MSS}. Let
\[
\Pi_{\mathrm{eq}}=\{\pi_1^\star,\ldots,\pi_S^\star\}
\]
be the equilibrium support and let $\Pi_{\mathrm{out}}=\Pi\setminus\Pi_{\mathrm{eq}}$. After the three-strategy core is present, the construction exposes every support strategy from the core: for each $i\in\{1,\ldots,S\}$, there exists a mixture $q_i\in\Delta(\Pi^{\mathrm{core}})$ such that
\[
\mathcal{B}(q_i)=\{\pi_i^\star\}.
\]
By local stability of strict best responses in finite games, this remains true in an open neighborhood of $q_i$. Hence, for any population $\Pi^r\supseteq\Pi^{\mathrm{core}}$ that does not yet contain $\pi_i^\star$, the region
\[
\mathcal{R}_i(\Pi^r)\triangleq
\{\sigma\in\Delta(\Pi^r):\mathcal{B}(\sigma)=\{\pi_i^\star\}\}
\]
has positive Lebesgue measure. Since the game is finite, we can take a uniform lower bound over all such populations and missing support strategies:
\[
c \triangleq
\inf_{\Pi^r\supseteq\Pi^{\mathrm{core}},\,\pi_i^\star\notin\Pi^r}
\Pr_{\sigma\sim\mathrm{Dirichlet}(\mathbf{1})}
[\sigma\in\mathcal{R}_i(\Pi^r)]
\in(0,1].
\]

\paragraph{Step 2: Hitting a missing support strategy.}
Fix any state with $\Pi^{\mathrm{core}}\subseteq\Pi_t^r$ and $\Pi_{\mathrm{eq}}\nsubseteq\Pi_t^r$. Choose any missing support strategy $\pi_i^\star\notin\Pi_t^r$. Among the $K-1$ i.i.d.\ samples from $\mathrm{Dirichlet}(\mathbf{1})$, the probability that at least one sample lies in $\mathcal{R}_i(\Pi_t^r)$ is at least
\[
p \triangleq 1-(1-c)^{K-1}.
\]
On this event, the exact oracle generates $\pi_i^\star$ as a candidate response.

\paragraph{Step 3: Collecting the equilibrium support.}
After the three-strategy core is present, it remains to add the strategies in the equilibrium support $\Pi_{\mathrm{eq}}$ to the restricted population. The key point is that PE selection prefers progress toward this support whenever such progress is available. If the current population contains no strategy from $\Pi_{\mathrm{eq}}$, Lemma 1 says that adding a support strategy makes PE strictly smaller than adding any strategy from $\Pi_{\mathrm{out}}$. If the current population already contains some, but not all, strategies in $\Pi_{\mathrm{eq}}$, then Lemma 2 says that adding a strategy from $\Pi_{\mathrm{out}}$ cannot strictly decrease PE, while Lemma 3 says that at least one missing strategy from $\Pi_{\mathrm{eq}}$ strictly decreases PE. Thus, whenever the candidate set contains a useful missing support strategy, the PE rule selects a strategy from $\Pi_{\mathrm{eq}}$.

Random exploration determines how long we wait until such a candidate appears. At any state with $\Pi^{\mathrm{core}}\subseteq\Pi_t^r$ and $\Pi_{\mathrm{eq}}\nsubseteq\Pi_t^r$, at least one support strategy is still missing. For each missing support strategy, Step~2 gives probability at least $p$ that the $K-1$ random samples expose it; hence the probability of exposing some missing support strategy is also at least $p$. Conditional on this event, the PE rule incorporates a previously missing strategy from $\Pi_{\mathrm{eq}}$. Therefore each missing support strategy costs, in expectation, at most $1/p$ framework rounds to discover and add. Since there are at most $S$ support strategies to collect, the expected number of framework rounds after the core is formed is at most $S/p$.

Each framework round adds at most two new strategies: the selected candidate response and the PE-evaluation BR. Hence the expected number of additional PSRO-style iterations after the core is formed is at most $2S/p$. Including the at most two PSRO-style iterations needed to reach the core,
\[
\mathbb{E}[T^\star]
\le
2+\frac{2S}{1-(1-c)^{K-1}}.
\]

\paragraph{Step 4: Finite-space bound.}
The game has $N$ pure strategies. Thus $T^\star\le N-1$ deterministically. Combining this bound with Step~3 yields
\[
\mathbb{E}[T^\star]
\le
\min\!\left\{2+\frac{2S}{1-(1-c)^{K-1}},\ N-1\right\},
\]
which completes the proof.

\section{Additional Results}\label{Apx:exp}
\subsection{Additional worst-case games} \label{Apx:worst_case_game}

Figure~\ref{fig:add_worst_case} provides additional adversarial game instances for the MSSs considered in the main text.

\begin{figure}[htbp]
    \centering
    \begin{subfigure}{0.24\textwidth}
        \includegraphics[width=\linewidth]{real_nash_ad_game}
        \caption{Restricted-game NE (S=1)}
    \end{subfigure}
    \begin{subfigure}{0.24\textwidth}
        \includegraphics[width=\linewidth]{real_uniform_ad_game}
        \caption{Uniform (S=1)}
    \end{subfigure}
    \begin{subfigure}{0.24\textwidth}
        \includegraphics[width=\linewidth]{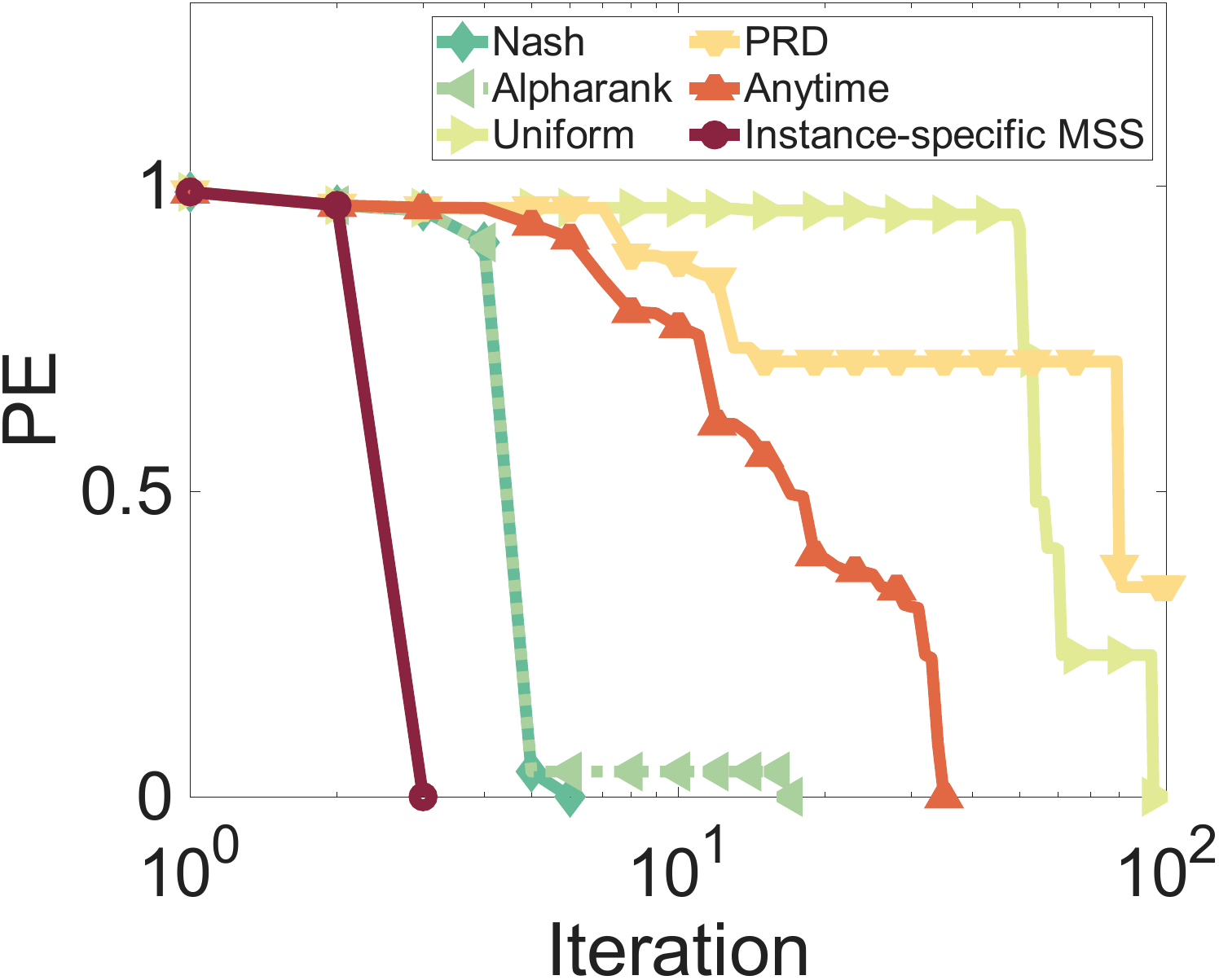}
        \caption{PRD (S=1)}
    \end{subfigure}
    \begin{subfigure}{0.24\textwidth}
        \includegraphics[width=\linewidth]{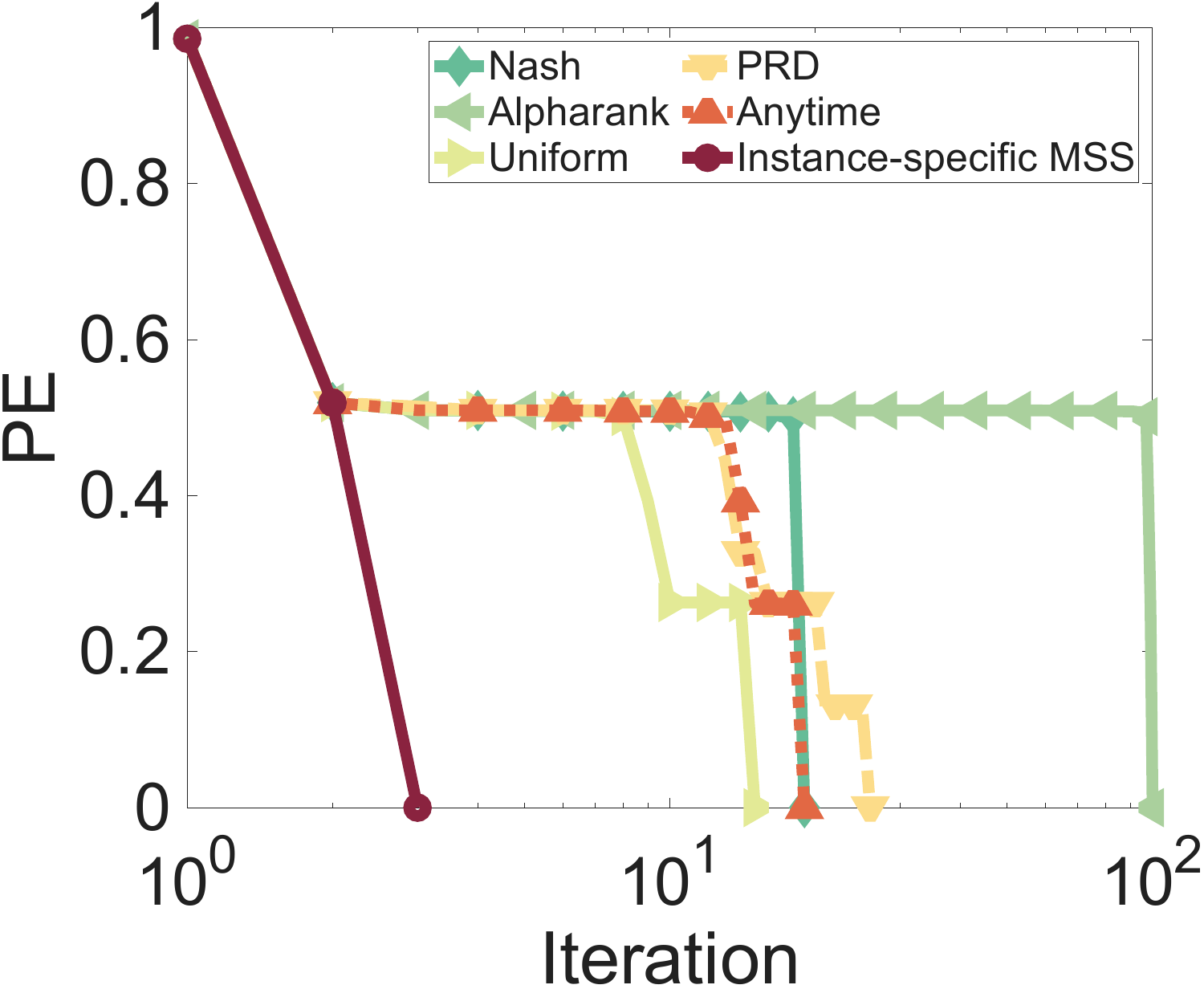}
        \caption{AlphaRank (S=1)}
    \end{subfigure}
        \begin{subfigure}{0.24\textwidth}
        \includegraphics[width=\linewidth]{real_nash_ad_game_S5.pdf}
        \caption{Restricted-game NE (S=5)}
    \end{subfigure}
    \begin{subfigure}{0.24\textwidth}
        \includegraphics[width=\linewidth]{real_uniform_ad_game_S5.pdf}
        \caption{Uniform (S=5)}
    \end{subfigure}
    \begin{subfigure}{0.24\textwidth}
        \includegraphics[width=\linewidth]{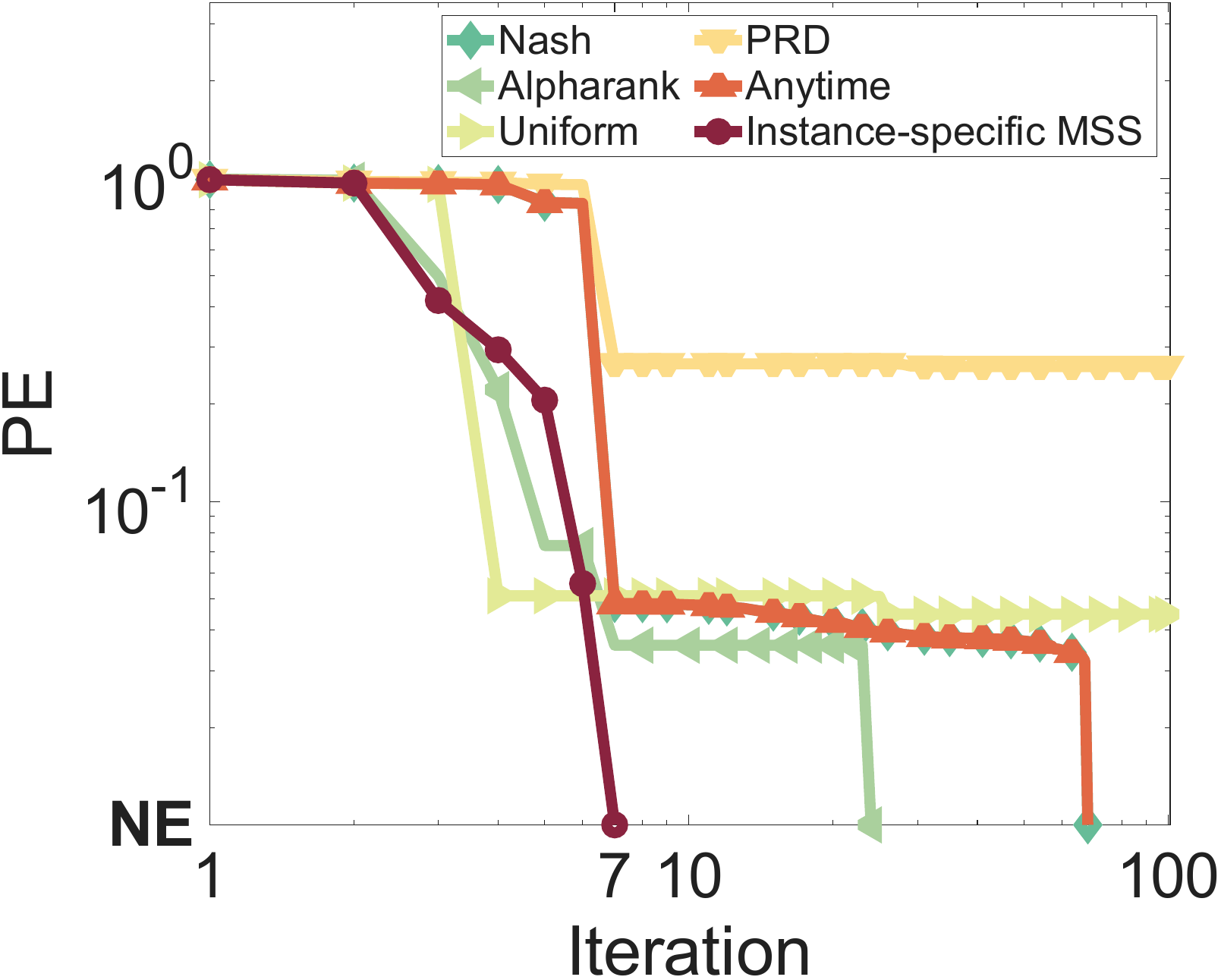}
        \caption{PRD (S=5)}
    \end{subfigure}
    \begin{subfigure}{0.24\textwidth}
        \includegraphics[width=\linewidth]{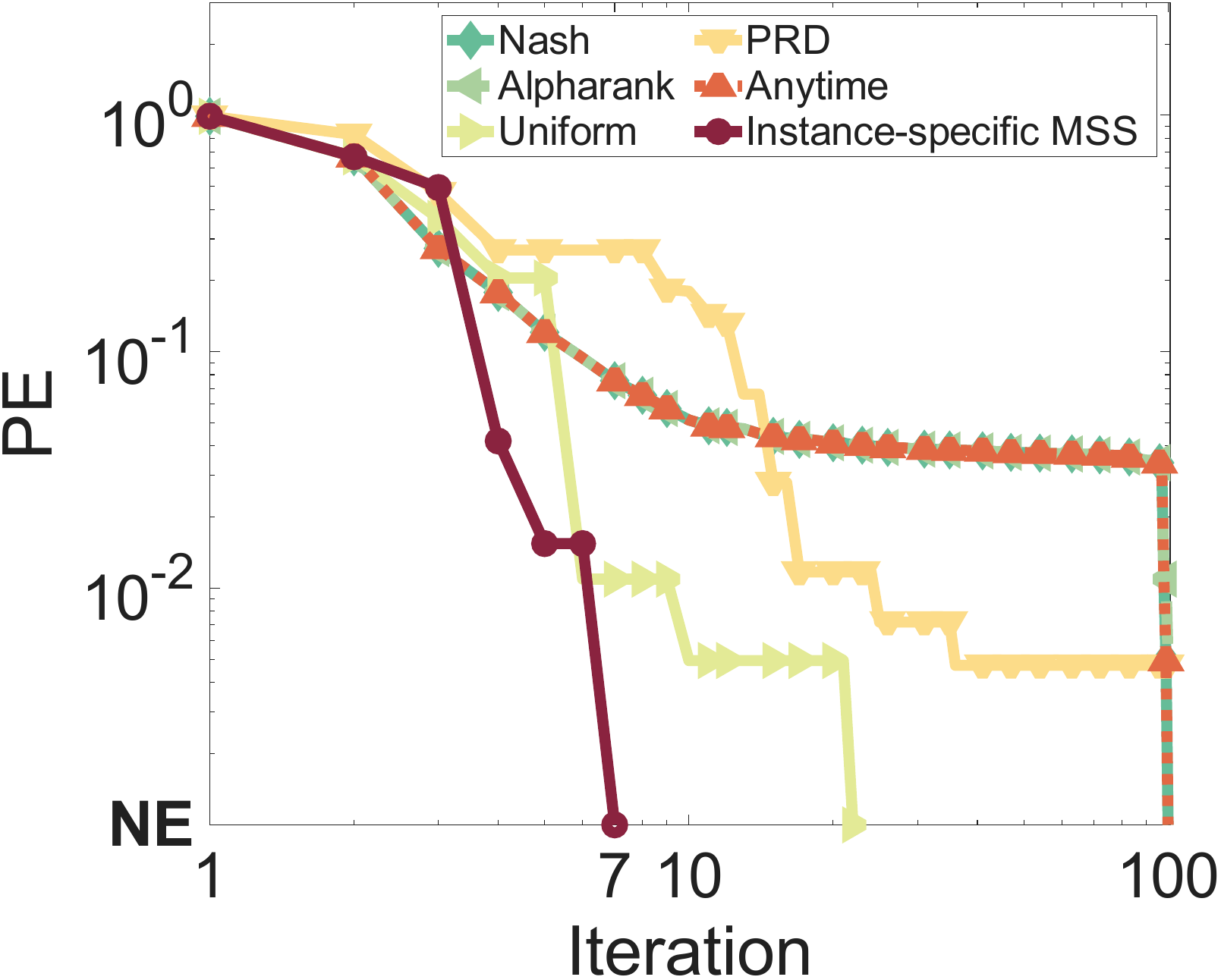}
        \caption{AlphaRank (S=5)}
    \end{subfigure}
    \caption{PE trajectories of PSRO under various MSSs on adversarially constructed $100\times100$ zero-sum games. Each subfigure is tailored to the MSS in its subcaption. ``Instance-specific MSS'' refers to the MSS $\mathcal{M}'$ from Theorem~\ref{better_MSS} that reaches equilibrium within three iterations.}
    \label{fig:add_worst_case}
\end{figure}

\subsection{Additional BR distribution on simplex} \label{Apx:br_dis}

Figure~\ref{fig:add_br_distribution} shows additional simplex visualizations of post-expansion PE regions.

\begin{figure}[htbp]
    \centering
    \begin{subfigure}{0.24\textwidth}
        \includegraphics[width=\linewidth]{10,4-Blotto-3PE-big}
        \caption{10,4-Blotto}
    \end{subfigure}
    \begin{subfigure}{0.24\textwidth}
        \includegraphics[width=\linewidth]{Alpharank-3PE-big}
        \caption{AlphaStar}
    \end{subfigure}
    \begin{subfigure}{0.24\textwidth}
        \includegraphics[width=\linewidth]{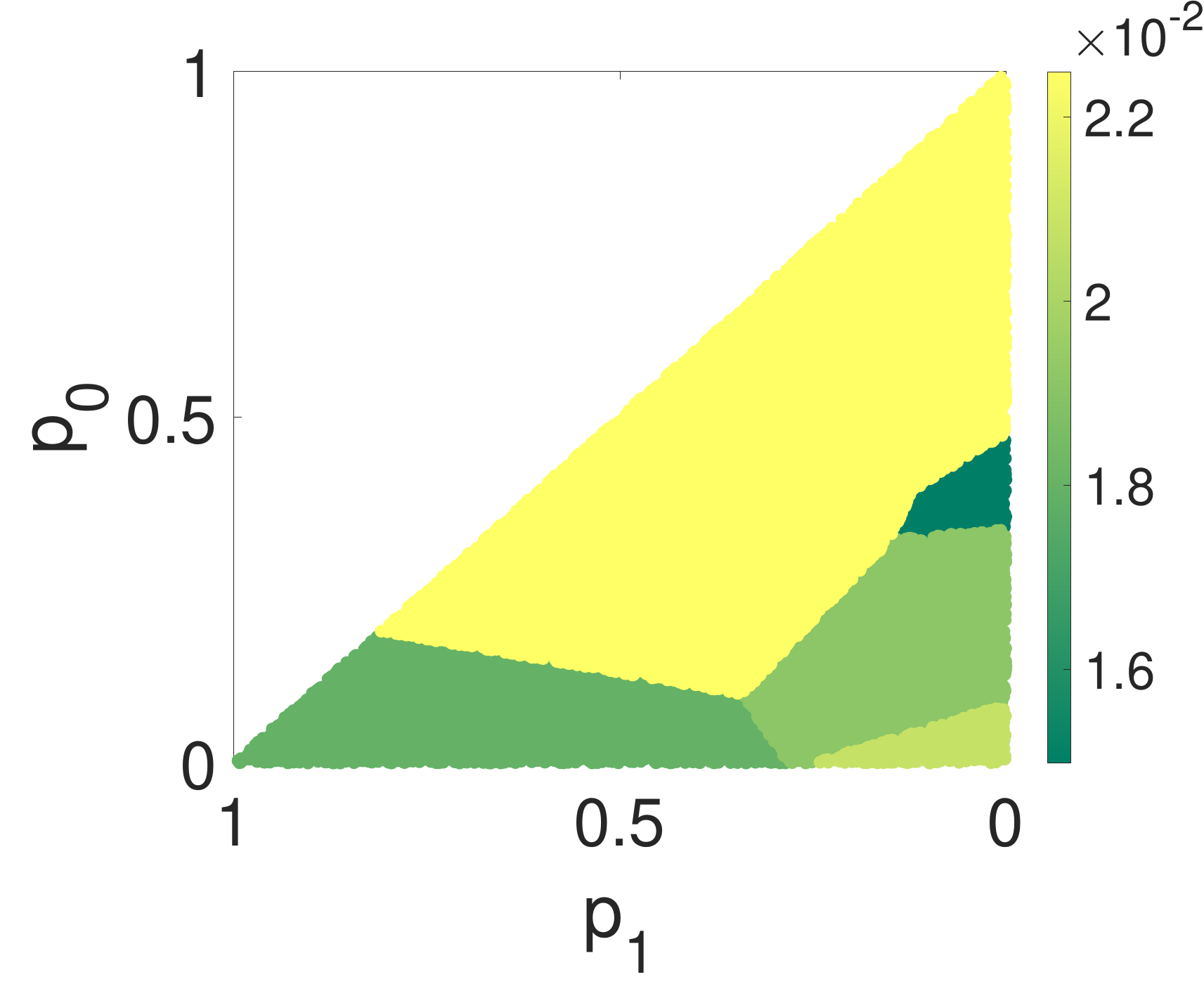}
        \caption{Elo game + noise=0.1}
    \end{subfigure}
    \begin{subfigure}{0.24\textwidth}
        \includegraphics[width=\linewidth]{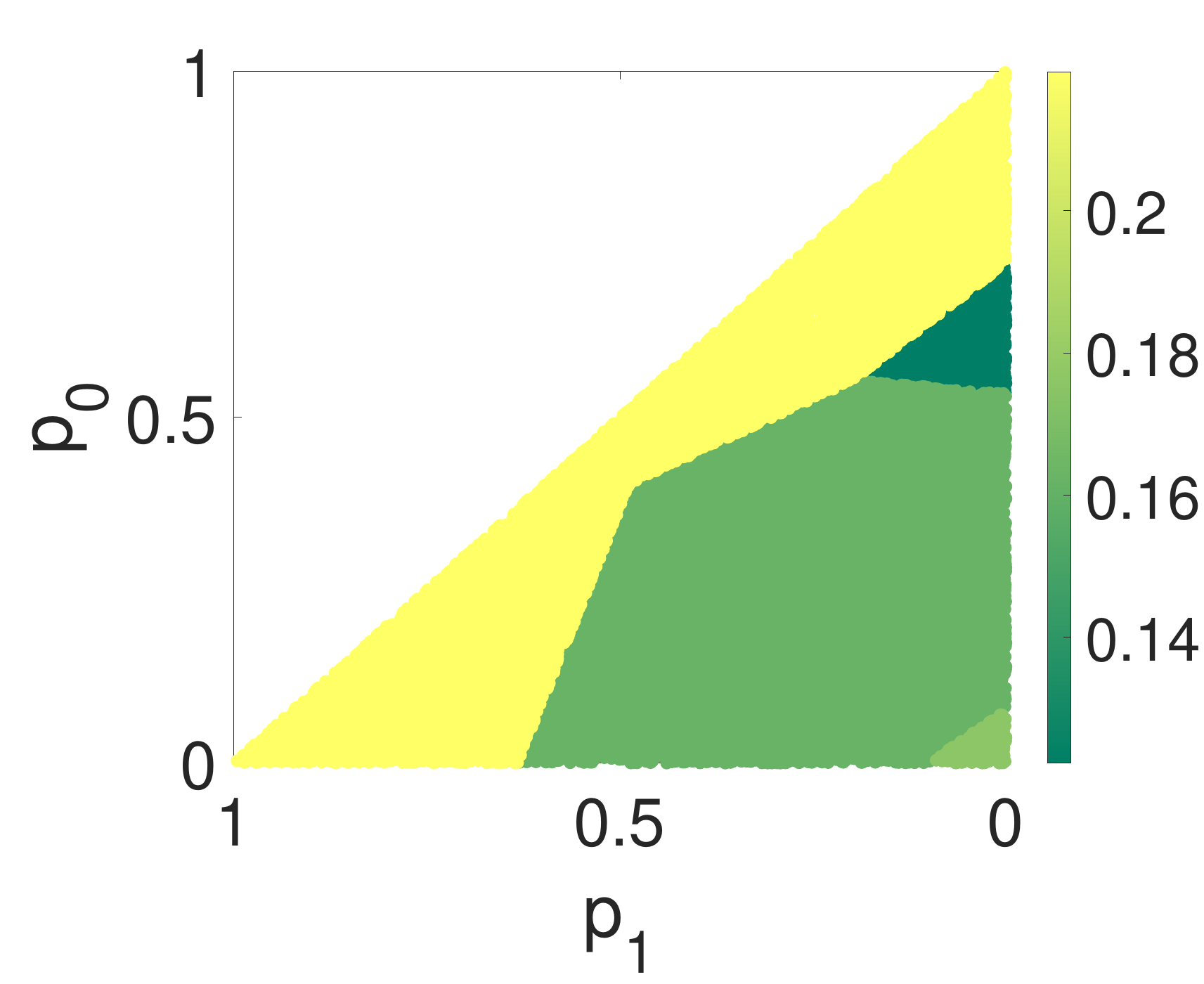}
        \caption{Elo game + noise=1.0}
    \end{subfigure}
    \caption{Post-expansion PE on simplex $\{\mathbf{p}\mid p_0+p_1+p_2=1\}$.}
    \label{fig:add_br_distribution}
\end{figure}

 \subsection{Additional results in Diversity-driven PSRO} \label{Apx:div_results}

Figure~\ref{fig:add_diversity_results} reports the remaining diversity-driven PSRO comparisons omitted from the main text.

 \begin{figure}[htbp]
    \centering
    \begin{subfigure}{0.24\textwidth}
        \includegraphics[width=\linewidth]{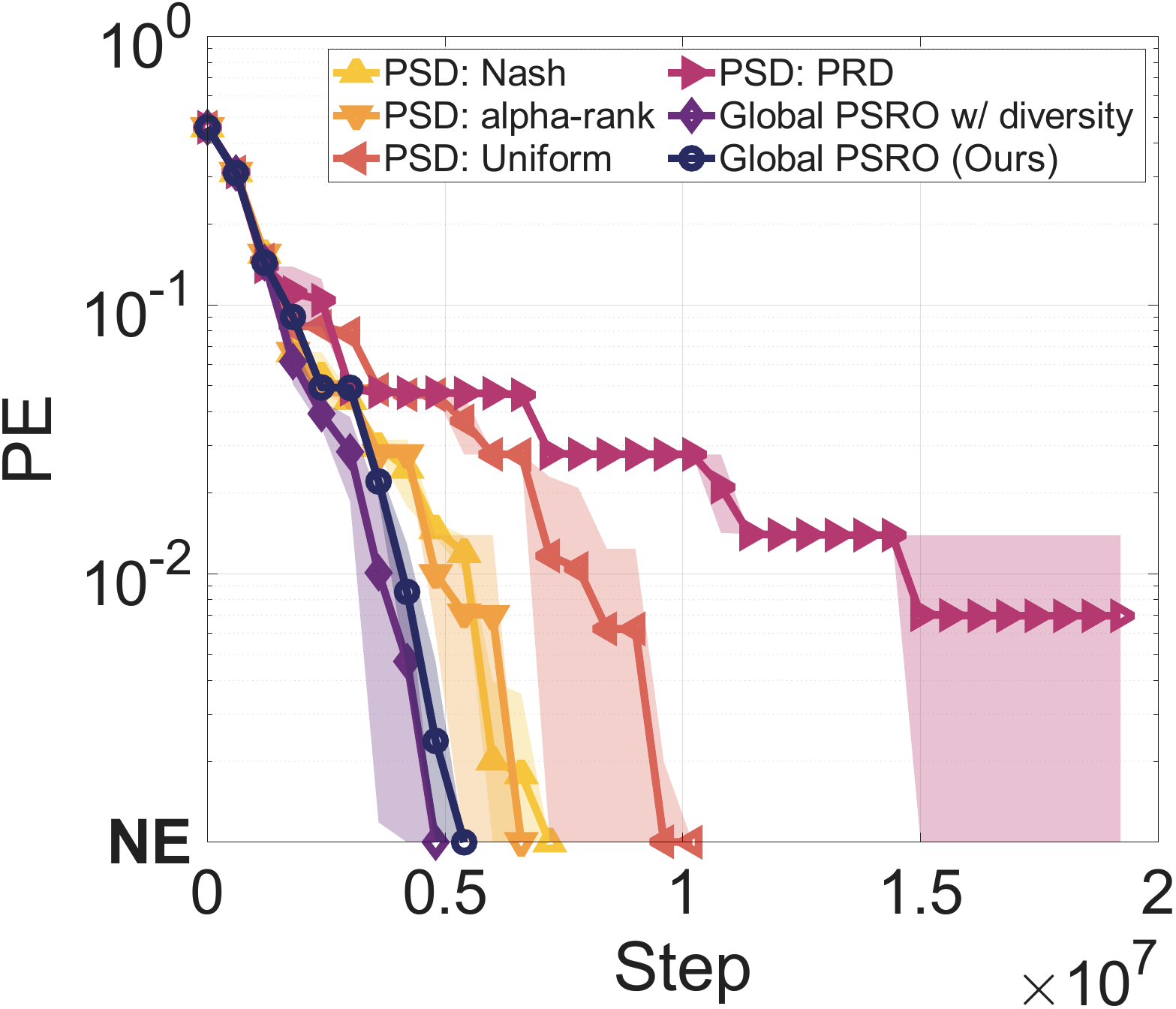}
        \caption{Kuhn Poker}
    \end{subfigure}
    \begin{subfigure}{0.24\textwidth}
        \includegraphics[width=\linewidth]{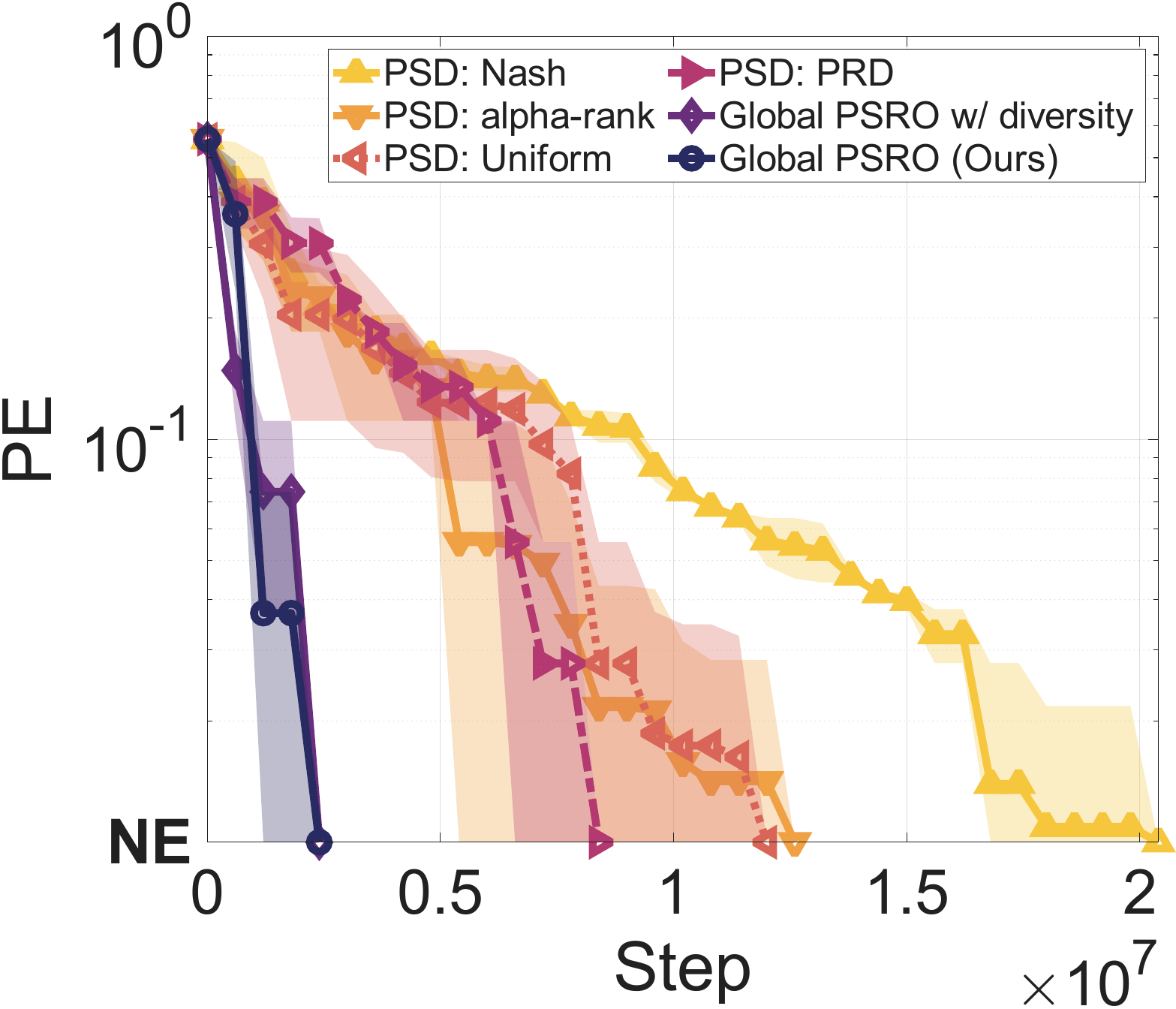}
        \caption{Liar's Dice}
    \end{subfigure}
    \begin{subfigure}{0.24\textwidth}
        \includegraphics[width=\linewidth]{Global_PSRO_slim/Leduc_Poker_PSD.pdf}
        \caption{Leduc Poker}
    \end{subfigure}
    \begin{subfigure}{0.24\textwidth}
        \includegraphics[width=\linewidth]{Global_PSRO_slim/Goofspiel_PSD.pdf}
        \caption{Goofspiel (5 cards)}
    \end{subfigure}
    \caption{Comparison with diversity-driven PSRO. Legends indicate the MSS used by each baseline. ``Global PSRO w/ diversity'' augments the exploration phase with a diversity objective for candidate generation.}
    \label{fig:add_diversity_results}
\end{figure}

\subsection{Additional results in NeuPL} \label{Apx:ab-neupl}

Figure~\ref{fig:add_neupl_results} reports the remaining NeuPL comparisons omitted from the main text.

\begin{figure}[htbp]
    \centering
    \begin{subfigure}{0.24\textwidth}
        \includegraphics[width=\linewidth]{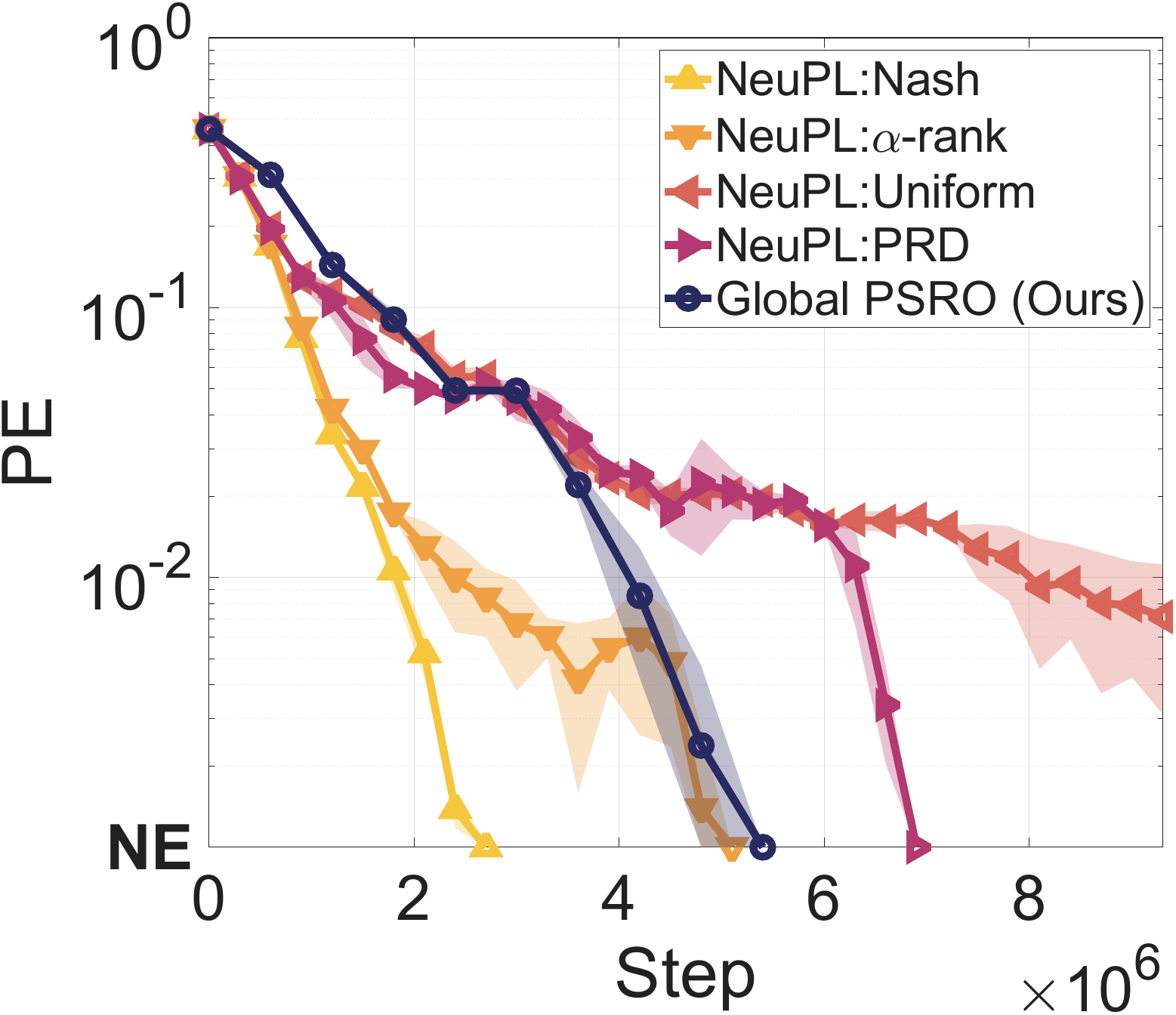}
        \caption{Kuhn Poker}
    \end{subfigure}
    \begin{subfigure}{0.24\textwidth}
        \includegraphics[width=\linewidth]{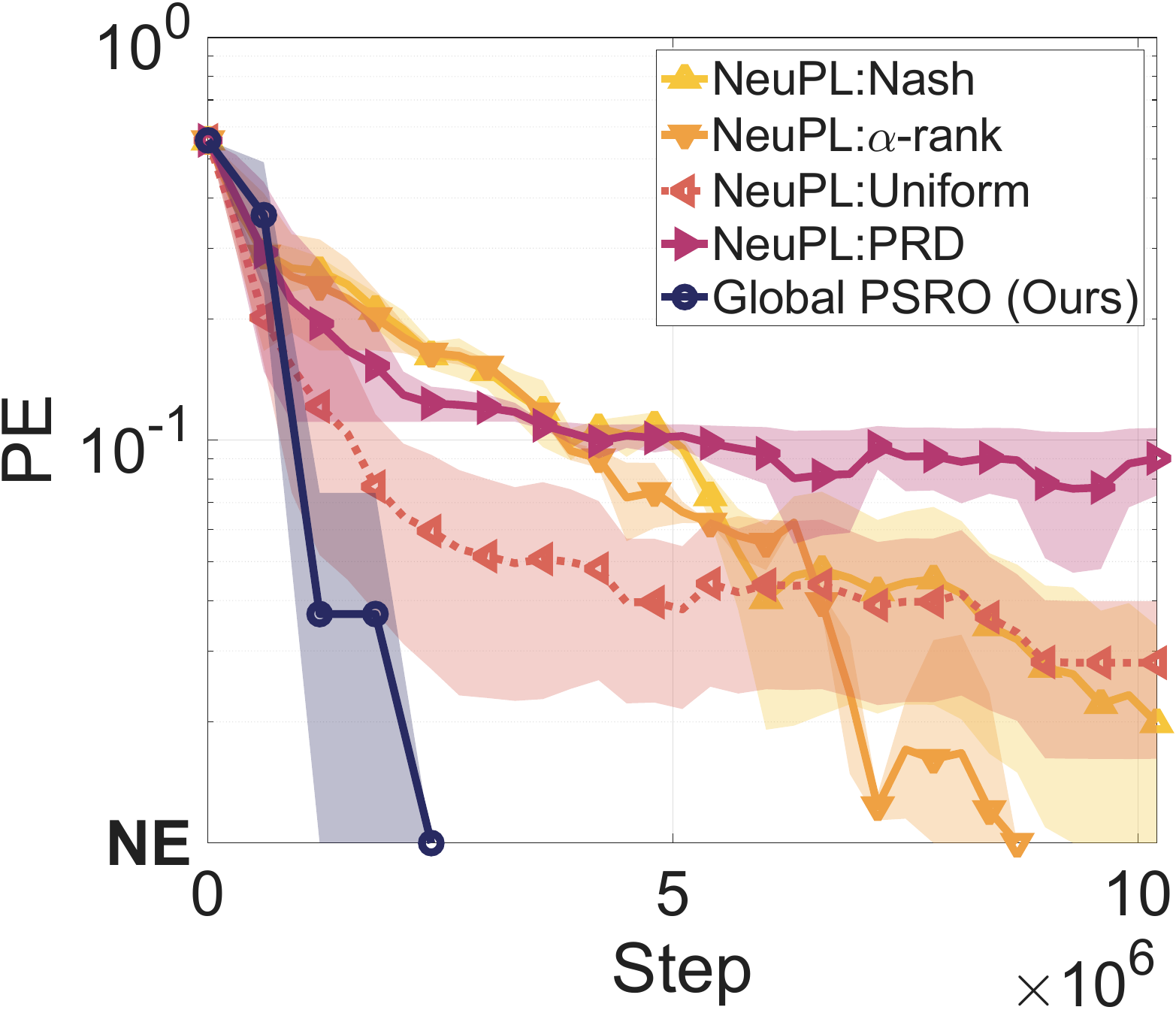}
        \caption{Liar's Dice}
    \end{subfigure}
    \caption{Additional NeuPL results.}
    \label{fig:add_neupl_results}
\end{figure}

\subsection{Performance under the Exploitability Metric} \label{Apx:exp_metric}
We conduct experiments using the exploitability metric employed in vanilla PSRO algorithms. The results are shown in Fig.~\ref{fig:EXP}. Global PSRO still maintains the lowest exploitability value after convergence. This is because, in the evaluation phase, Global PSRO uses the regret minimization algorithm to construct a mixed strategy whose exploitability value is close to the PE value of the extended strategy set.

\begin{figure}[htbp]
    \centering
    \begin{subfigure}{0.24\textwidth}
        \includegraphics[width=\linewidth]{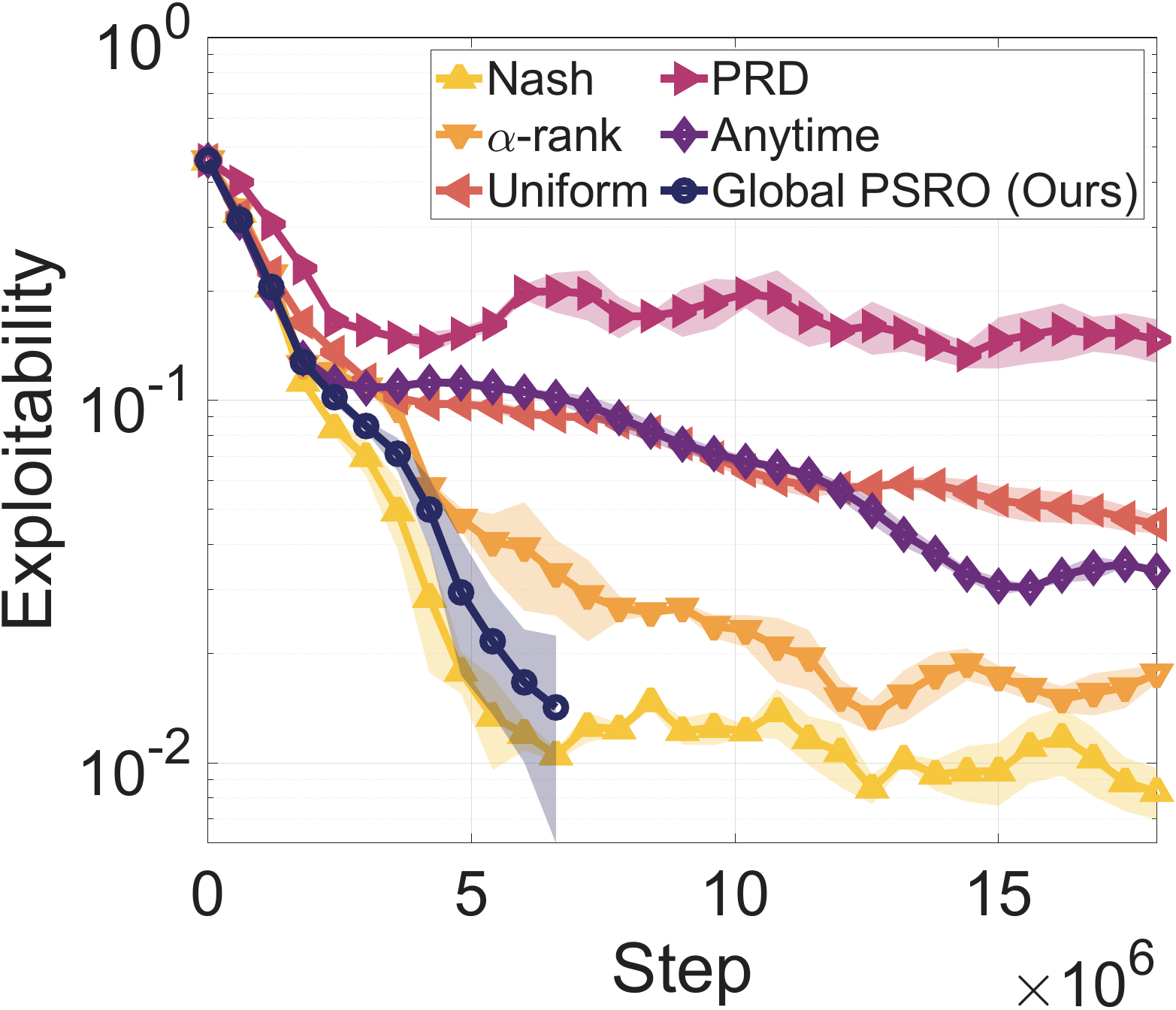}
        \caption{Kuhn Poker}
    \end{subfigure}
    \begin{subfigure}{0.24\textwidth}
        \includegraphics[width=\linewidth]{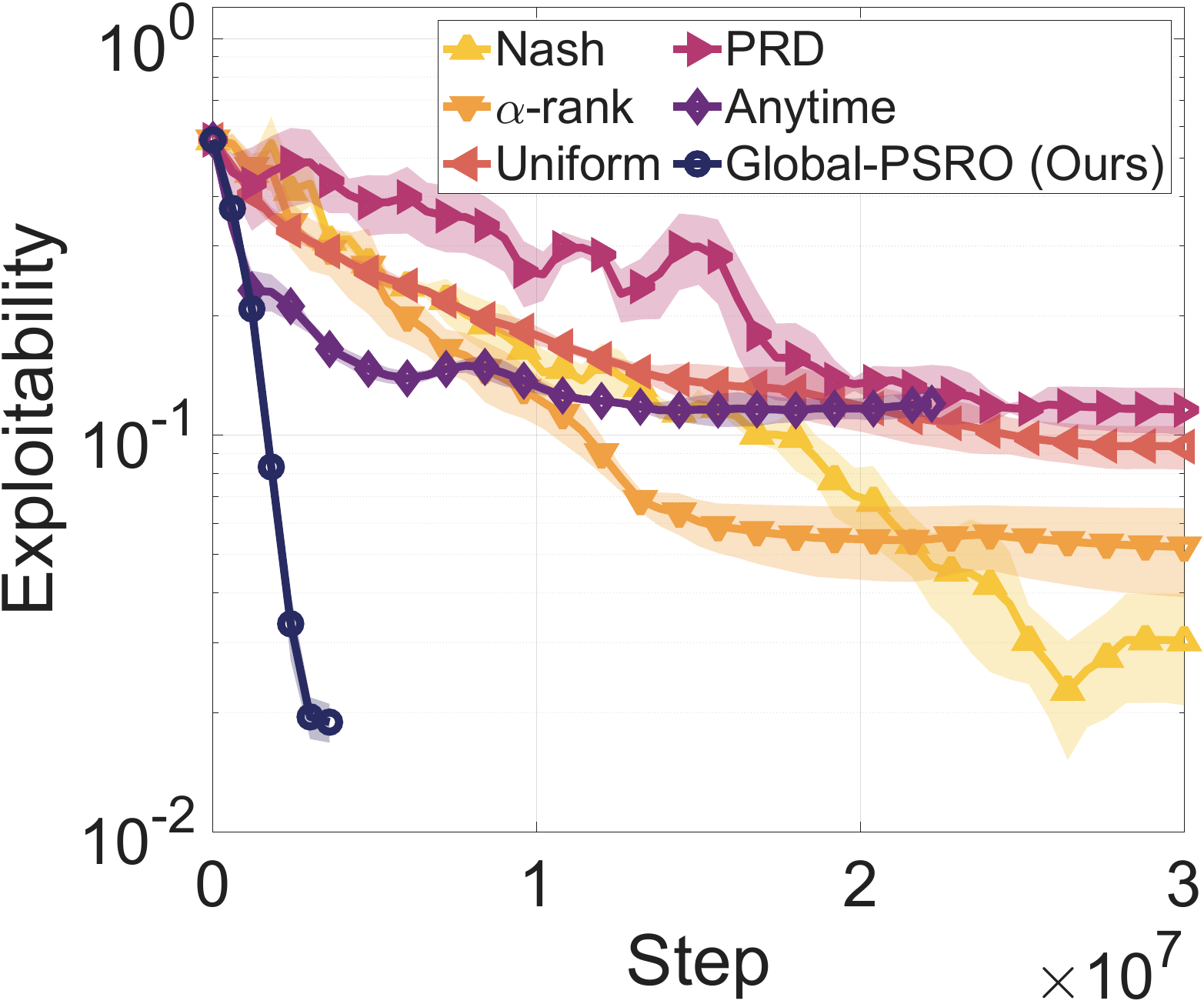}
        \caption{Liar's Dice}
    \end{subfigure}
    \begin{subfigure}{0.24\textwidth}
        \includegraphics[width=\linewidth]{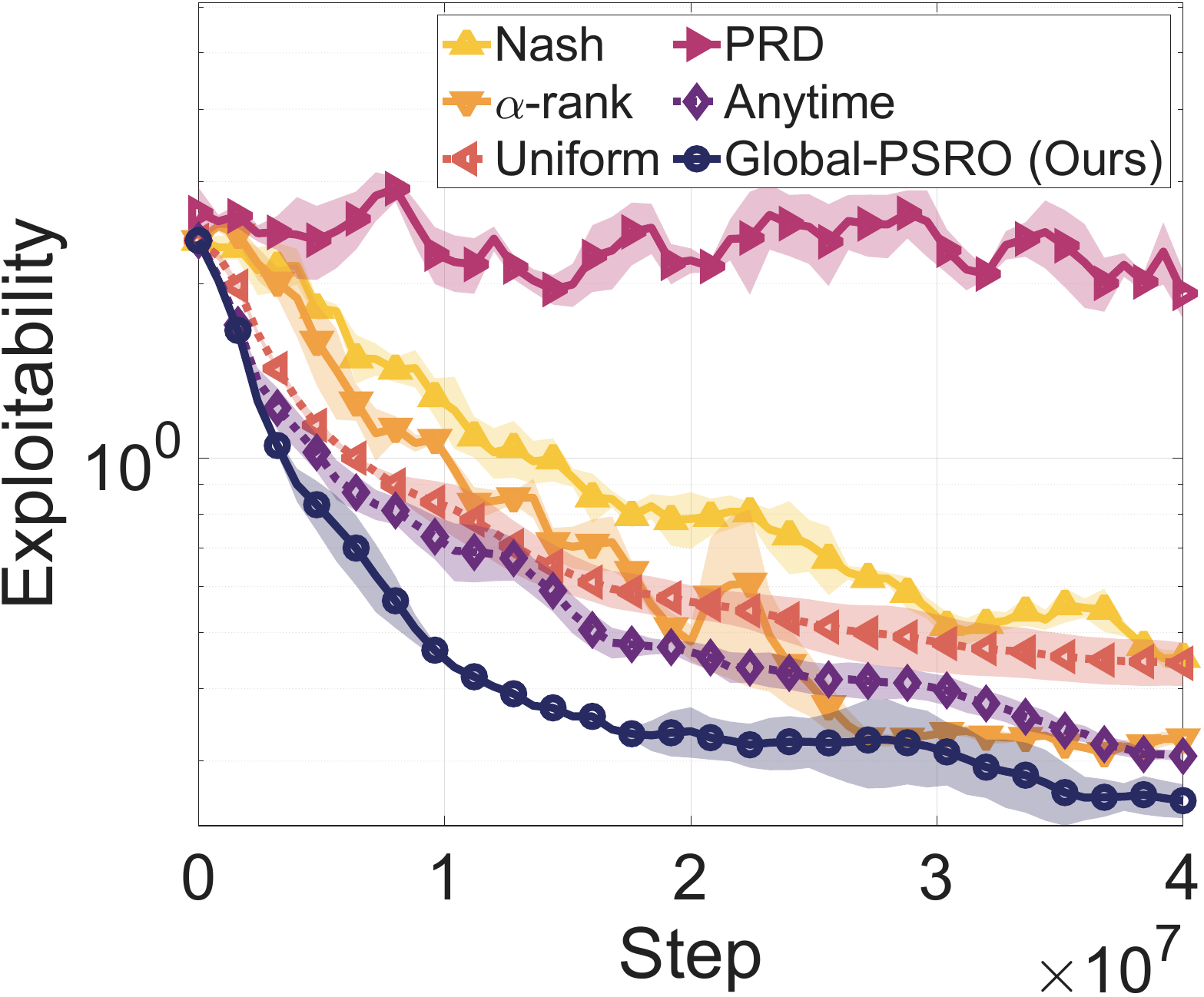}
        \caption{Leduc Poker}
    \end{subfigure}
    \begin{subfigure}{0.24\textwidth}
        \includegraphics[width=\linewidth]{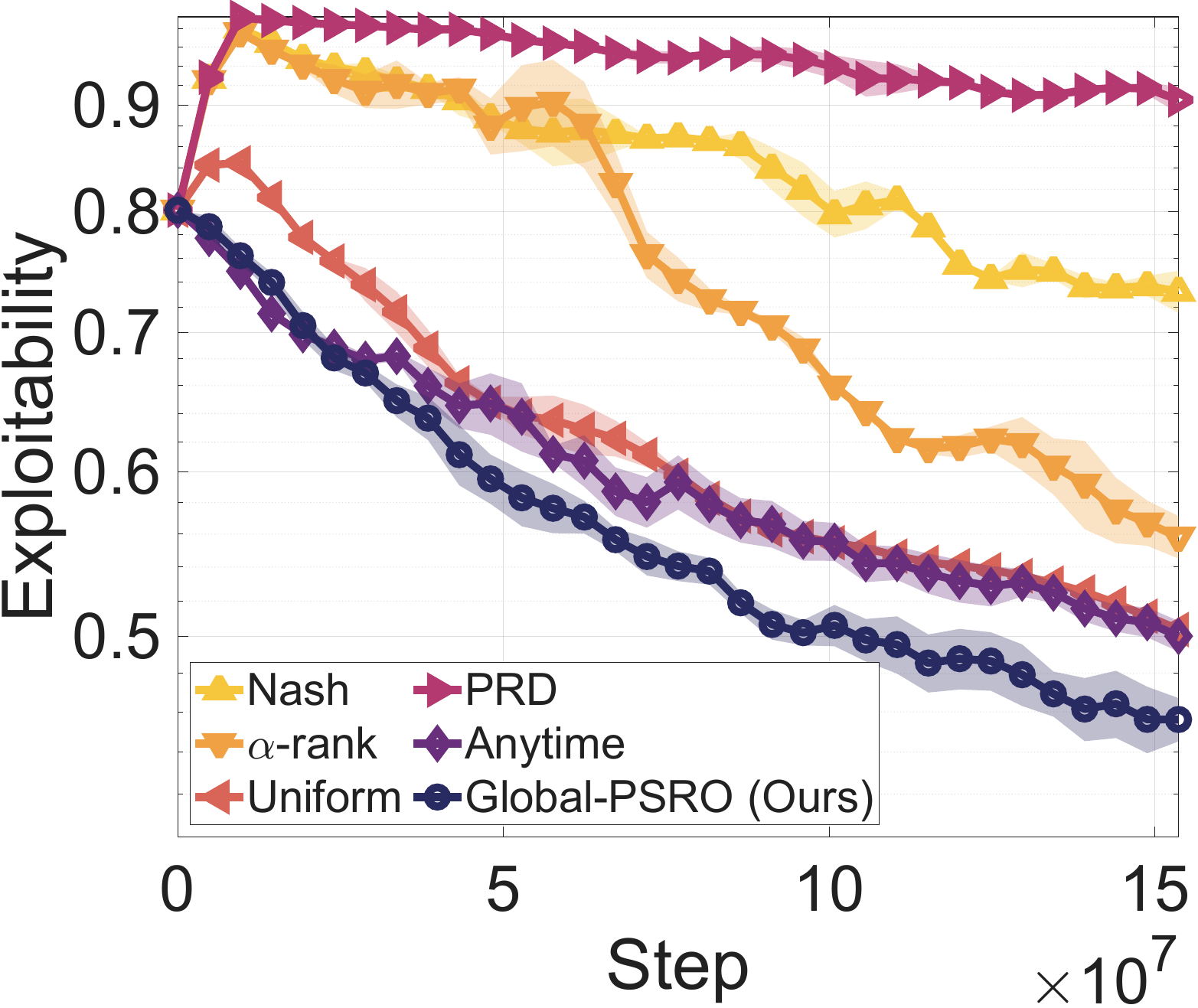}
        \caption{Goofspiel (5 cards)}
    \end{subfigure}
    \caption{Performance under the exploitability metric.}
    \label{fig:EXP}
\end{figure}

\section{Experimental Details}

\subsection{Game Settings} \label{Apx:game_dep}
All the extensive-form games used in our PSRO experiments are based on the OpenSpiel library, with the specific parameters as follows:

\begin{description}

\item[Kuhn Poker] Game Name: \verb|kuhn_poker| \\
Parameters: \verb|{"players": 2}|
\item[Liar's Dice] Game Name: \verb|liars_dice| \\
Parameters: \verb|{"numdice": 1, "dice_sides":  3}|
\item[Leduc Poker] Game Name: \verb|leduc_poker| \\
Parameters: \verb|{"players": 2}|
\item[Goofspiel with 5 cards] Game Name: \verb|goofspiel| \\
Parameters: \verb|{"num_cards": 5}|
\item[Goofspiel with 13 cards] Game Name: \verb|goofspiel| \\
Parameters: \verb|{"num_cards": 13}|

\end{description}

\subsection{Use of Existing Assets}\label{Apx:assests}
The primary third-party assets we used are the OpenSpiel library and a standard PPO algorithm library, which are available at \url{https://github.com/google-deepmind/open_spiel} and \url{https://github.com/zoeyuchao/mappo}, respectively. Detailed information can be found at these links.

\subsection{Hardware Platform} \label{Apx:Hardware}
Our hardware platform features an Intel Xeon Gold 6348 CPU @ 2.60GHz, and 8 NVIDIA RTX A6000 GPUs, and 512 GB of memory, running on Ubuntu 18.04.

\subsection{Environment-Interaction Budget}\label{Apx:interaction_budget}
We use environment interaction steps as the common budget unit across all learning-based methods. An environment interaction step corresponds to one rollout transition collected from the game simulator. For standard PSRO baselines, the budget includes rollout steps for BR training and payoff estimation. For Global PSRO, the budget includes rollout steps used for candidate-response training, RM--BR-based PE estimation, and payoff estimation. The RM/Exp3 updates themselves are performed on payoff vectors obtained from rollouts and therefore do not require additional simulator interaction. Candidate generation and PE estimation use shared conditional networks, with rollout budgets allocated at the stage level as reported in the hyperparameter tables.

\subsection{Hyperparameter Settings} \label{Apx:hyperparameter}

Tab.~\ref{Tab_hyper-para_poker} provides the detailed hyperparameters for each algorithm and module in Leduc Poker. Additionally, Tab.~\ref{Tab_hyper-para_kuhn}, \ref{Tab_hyper-para_liar}, \ref{Tab_hyper-para_goof}, and \ref{Tab_hyper-para_goof13} provide the hyperparameters for other games. Only the hyperparameters differing from those in Leduc Poker are included.

\begin{table}[htbp]
\centering
\caption{
Hyperparameters for training in Leduc poker.
}
\begin{tabular}{lll}
\hline

Algorithm / Module & Name & Value \\
\hline
Best response oracle & Oracle agent & PPO \\
& Replay buffer size & $800$ \\
& Mini-batch size & 2 \\
& Optimizer &  Adam      \\
& Learning rate  & $5\times10^{-5}$   \\
& Discount factor ($\gamma$) & 1.0 \\
& Entropy coefficient & 0.04 \\
& Policy network & MLP \\
& Hidden size & 128 \\
& Number of encoder layers & 3 \\
& Number of output layers & 1 \\
& Activation function & ReLU \\

\hline

PSRO & Steps for each BR training & $8\times10^5$\\
& Payoff estimation steps per strategy  & $4\times10^4$\\

\hline

NeuPL & Steps for graph check & $4\times10^5$\\

\hline

PSD-PSRO & Steps for each BR training & $8\times10^5$\\
& Diversity weight & 0.1 \\
& Payoff estimation steps per strategy  & $4\times10^4$\\

\hline
Global PSRO & Steps for each BR training &  $8\times10^5$\\
& RM steps of each evaluation phase & 100 \\
& RM algorithm & Exp3 \cite{exp3} \\
& RGB-MSS &  Restricted-game NE \\
& Number of workers $K$ & 16 \\
& Payoff estimation steps per strategy  & $4\times10^4$\\

\hline

Anytime PSRO & Steps for each RM-BR training &  $8\times10^5$\\
& RM steps of each RM-BR training & 100 \\
& RM algorithm & Exp3 \\

\hline

\end{tabular}
\label{Tab_hyper-para_poker}
\end{table}

\begin{table}[htbp]
\centering
\caption{
Hyperparameters in Kuhn Poker.
}
\begin{tabular}{lll}
\hline

Algorithm / Module & Name & Value \\

\hline
PSRO & Steps for each BR training & $6\times10^5$ \\
& Payoff estimation steps per strategy  & $3\times10^4$\\

\hline

NeuPL & Steps for graph check & $3\times10^5$\\

\hline

PSD-PSRO & Steps for each BR training & $6\times10^5$\\
& Diversity weight & 0.05 \\
& Payoff estimation steps per strategy  & $3\times10^4$\\

\hline
Global PSRO & Steps for each BR training &  $6\times10^5$\\
& Payoff estimation steps per strategy  & $3\times10^4$\\

\hline

Anytime PSRO & Steps for each RM-BR training &  $6\times10^5$\\

\hline

\end{tabular}
\label{Tab_hyper-para_kuhn}
\end{table}

\begin{table}[htbp]
\centering
\caption{
Hyperparameters in Liar's Dice.
}
\begin{tabular}{lll}
\hline

Algorithm / Module & Name & Value \\

\hline
PSRO & Steps for each BR training & $6\times10^5$ \\
& Payoff estimation steps per strategy  & $3\times10^4$\\

\hline

NeuPL & Steps for graph check & $3\times10^5$\\

\hline

PSD-PSRO & Steps for each BR training & $6\times10^5$\\
& Diversity weight & 0.05 \\
& Payoff estimation steps per strategy  & $3\times10^4$\\

\hline
Global PSRO & Steps for each BR training &  $6\times10^5$\\
& Payoff estimation steps per strategy  & $3\times10^4$\\

\hline

Anytime PSRO & Steps for each RM-BR training &  $6\times10^5$\\

\hline

\end{tabular}
\label{Tab_hyper-para_liar}
\end{table}

\begin{table}[htbp]
\centering
\caption{
Hyperparameters in Goofspiel with 5 cards.
}
\begin{tabular}{lll}
\hline

Algorithm / Module & Name & Value \\

\hline
PSRO & Steps for each BR training & $2.4\times10^6$ \\
& Payoff estimation steps per strategy  & $1.2\times10^5$\\

\hline

NeuPL & Steps for graph check & $12\times10^5$\\

\hline

PSD-PSRO & Steps for each BR training & $24\times10^5$\\
& Diversity weight & 0.05 \\
& Payoff estimation steps per strategy  & $1.2\times10^5$\\

\hline
Global PSRO & Steps for each BR training &  $2.4\times10^6$\\
& RM steps of each evaluation phase & 300 \\
& Payoff estimation steps per strategy  & $1.2\times10^5$\\

\hline

Anytime PSRO & Steps for each RM-BR training &  $2.4\times10^6$\\
& RM steps of each RM-BR training & 300 \\

\hline

\end{tabular}
\label{Tab_hyper-para_goof}
\end{table}

\begin{table}[htbp]
\centering
\caption{
Hyperparameters in Goofspiel with 13 cards.
}
\begin{tabular}{lll}
\hline

Algorithm / Module & Name & Value \\

\hline
PSRO & Steps for each BR training & $2.4\times10^6$\\
& Payoff estimation steps per strategy & $1.2\times10^5$\\

\hline

NeuPL & Steps for graph check & $1.2\times10^6$\\
& Payoff estimation steps per strategy & $1.2\times10^5$\\

\hline

PSD-PSRO & Steps for each BR training & $2.4\times10^6$\\
& Diversity weight & 0.03 \\
& Payoff estimation steps per strategy & $1.2\times10^5$\\

\hline
Global PSRO & Steps for BR training in exploration & $2.4\times10^6$\\
& Steps for RM-BR training in selection & $2.4\times10^6$\\
& Payoff estimation steps per strategy & $1.2\times10^5$\\

\hline

Anytime PSRO & Steps for each RM-BR training & $2.4\times10^6$\\

\hline

\end{tabular}
\label{Tab_hyper-para_goof13}
\end{table}

\end{document}